\tikzset{terminal state/.style={draw,rectangle,minimum size=.3in}}
\newcommandx{\rlm}[2][1=]{\todo[linecolor=violet,backgroundcolor=violet!25,bordercolor=violet,#1]{\textbf{Romain:} #2}}
\newcommandx{\ale}[2][1=]{\todo[linecolor=red,backgroundcolor=red!25,bordercolor=red,#1]{\textbf{ale:} #2}}
\newcommandx{\rtcm}[2][1=]{\todo[linecolor=red,backgroundcolor=red!25,bordercolor=red,#1]{\textbf{Remi:} #2}}
\newcommandx{\nfm}[2][1=]{\todo[linecolor=blue,backgroundcolor=blue!25,bordercolor=blue,#1]{\textbf{Nicole:} #2}}
\newcommandx{\wbm}[2][1=]{\todo[linecolor=green,backgroundcolor=green!25,bordercolor=green,#1]{\textbf{William:} #2}}
\DeclareRobustCommand{\ass}[1]{%
  A#1%
}
\DeclareRobustCommand{\con}[1]{%
  C#1%
}
\newcommand*{\jh}{J\&H}
\newcommand*{\app}{App.}
\newcommand*{\fig}{Fig.}
\newcommand*{\alg}{Alg.}
\newcommand*{\sect}{Section}
\newcommand*{\thm}{Thm.}
\newcommand*{\rem}{Rem.}
\newcommand*{\eq}{Eq.}
\newcommand*{\mytop}{\mathrel{\scalebox{0.5}{$\top$}}}
\newcommand*{\mybot}{\mathrel{\scalebox{0.5}{$\bot$}}}
\newcommand*{\myplus}{\mathrel{\scalebox{0.5}{$+$}}}
\newcommand*{\myminus}{\mathrel{\scalebox{0.5}{$-$}}}
\newcommand*{\myeq}{\mathrel{\scalebox{0.5}{$=$}}}
\newcommand*{\mysmaller}{\mathrel{\scalebox{0.5}{$<$}}}
\newcommand*{\mybigger}{\mathrel{\scalebox{0.5}{$>$}}}
\newcommand*{\myinfty}{\mathrel{\scalebox{0.5}{$-\infty$}}}
\newcommand*{\myeqsm}{\mathrel{\scalebox{0.5}{$\stackrel{\scalebox{0.7}{$<$}}{=}$}}}
\newcommand*{\myeqbg}{\mathrel{\scalebox{0.5}{$\stackrel{\scalebox{0.7}{$>$}}{=}$}}}
\DeclareMathOperator*{\R}{\mathbb{R}}
\DeclareMathOperator*{\argmax}{argmax}
\tikzstyle{startstop} = [rectangle, rounded corners, minimum width=3cm, minimum height=1cm,text centered, draw=black, fill=red!30]
\tikzstyle{io} = [trapezium, trapezium left angle=70, trapezium right angle=110, minimum width=3cm, minimum height=1cm, text centered, draw=black, fill=blue!30]
\tikzstyle{process} = [rectangle, minimum width=3cm, minimum height=1cm, text centered, draw=black, fill=orange!30]
\tikzstyle{decision} = [diamond, minimum width=3cm, minimum height=1cm, text centered, draw=black, fill=green!30]
\tikzstyle{arrow} = [thick,->,>=stealth]
\newtheorem{theorem}{Theorem}%[section]
\newenvironment{thma}[1]
  {%
   \addtocounter{theorem}{-1}%
   \begin{theorem}}
  {\end{theorem}}
\newenvironment{thmb}[1]
  {%
   \addtocounter{theorem}{-1}%
   \begin{theorem}}
  {\end{theorem}}
\newenvironment{thmc}[1]
  {%
   \addtocounter{theorem}{-1}%
   \begin{theorem}}
  {\end{theorem}}
\newtheorem{remark}{Remark}
\newtheorem{corollary}{Corollary}
\newtheorem{lemma}{Lemma}
\newtheorem{assumption}{Assumption}
\newcommand{\myuline}[1]{%
  \uline{\phantom{#1}}%
  \llap{\contour{white}{#1}}%
}
\algnewcommand{\InlineIfThen}[2]{% \IfThenElse{<if>}{<then>}{<else>}
  \State \algorithmicif\ #1\ \algorithmicthen\ #2\ }
\newcommandx{\yaHelper}[2][1=\empty]{%
\ifthenelse{\equal{#1}{\empty}}%
  { \ensuremath{ \scriptstyle{ #2 } } } % no offset
  { \raisebox{ #1 }[0pt][0pt]{ \ensuremath{ \scriptstyle{ #2 } } } }  % with offset
}
\newcommandx{\yrightarrow}[4][2=\empty, 3=\empty, 4=\empty, usedefault=@]{%
  \ifthenelse{\equal{#4}{\empty}}
  { \xrightarrow[ \protect{ \yaHelper[ #2 ]{ #1 } } ] {}} % there's no text below
  { \xrightarrow[ \protect{ \yaHelper[ #2 ]{ #1 } } ]{ \protect{ \yaHelper[ #4 ]{ #3 } } } } % there's text below
}
\newcommandx{\yleftarrow}[4][2=\empty, 3=\empty, 4=\empty, usedefault=@]{%
  \ifthenelse{\equal{#4}{\empty}}
  { \xleftarrow[ \protect{ \yaHelper[ #2 ]{ #1 } } ] {} } % there's no text below
  { \xleftarrow[ \protect{ \yaHelper[ #2 ]{ #1 } } ]{ \protect{ \yaHelper[ #4 ]{ #3 } } } } % there's text below
}
\newcommandx{\yRightarrow}[4][2=\empty, 3=\empty, 4=\empty, usedefault=@]{%
  \ifthenelse{\equal{#4}{\empty}}
  { \xRightarrow[ \protect{ \yaHelper[ #2 ]{ #1 } } ] {} } % there's no text below
  { \xRightarrow[ \protect{ \yaHelper[ #2 ]{ #1 } } ]{ \protect{ \yaHelper[ #4 ]{ #3 } } } } % there's text below
}
\newcommandx{\yLeftarrow}[4][2=\empty, 3=\empty, 4=\empty, usedefault=@]{%
  \ifthenelse{\equal{#4}{\empty}}
  { \xLeftarrow[ \protect{ \yaHelper[ #2 ]{ #1 } } ] {} } % there's no text below
  { \xLeftarrow[ \protect{ \yaHelper[ #2 ]{ #1 } } ]{ \protect{ \yaHelper[ #4 ]{ #3 } } } } % there's text below
}  
\newcommand{\multiline}[1]{%
  \begin{tabularx}{\dimexpr\linewidth-\ALG@thistlm}[t]{@{}X@{}}
    #1
  \end{tabularx}
}
\title{Dr Jekyll and Mr Hyde: \\ The Strange Case of Off-Policy Policy Updates}
\author{%
  Romain Laroche\thanks{Equal contribution. Correspondence to: \href{mailto:rolaroch@microsoft.com}{rolaroch@microsoft.com}. All code available at \href{http://aka.ms/jnh}{http://aka.ms/jnh}.} \\
  Microsoft Research Montr\'eal, Canada\\
  \And
  R\'emi Tachet des Combes$^*$ \\
  Microsoft Research Montr\'eal, Canada\\
}
\begin{document} 
\maketitle
\begin{abstract}
    % Firstly, the inertia of the actor-critic architecture makes it impractical to switch from an exploitation strategy to an exploration one, and back. To address this, we propose an agent with a double personality, Dr Jekyll \& Mr Hyde (\jh{}): Dr Jekyll purely exploits while Mr Hyde purely explores. \textcolor{blue}{(The decoupling of the exploitation and exploration strategies solves the inertia issue. Necessary???)}
    % 
   The policy gradient theorem states that the policy should only be updated in states that are visited by the current policy, which leads to insufficient planning in the \textit{off-policy} states, and thus to convergence to suboptimal policies. We tackle this planning issue by extending the policy gradient theory to \textit{policy updates} with respect to any state density. Under these generalized policy updates, we show convergence to optimality under a necessary and sufficient condition on the updates' state densities, and thereby solve the aforementioned planning issue. We also prove asymptotic convergence rates that significantly improve those in the policy gradient literature.
   To implement the principles prescribed by our theory, we propose an agent, Dr Jekyll \& Mr Hyde (\jh{}), with a double personality: Dr Jekyll purely exploits while Mr Hyde purely explores. \jh{}'s independent policies allow to record two separate replay buffers: one on-policy (Dr Jekyll's) and one off-policy (Mr Hyde's), and therefore to update \jh{}'s models with a mixture of on-policy and off-policy updates. More than an algorithm, \jh{} defines principles for actor-critic algorithms to satisfy the requirements we identify in our analysis. We extensively test on finite MDPs where \jh{} demonstrates a superior ability to recover from converging to a suboptimal policy without impairing its speed of convergence. We also implement a deep version of the algorithm and test it on a simple problem where it shows promising results.
\end{abstract}

\section{Introduction}

% Paraphraser ce paragraphe
Policy Gradient algorithms in Reinforcement Learning (RL) have enjoyed great success both theoretically \cite{williams1992simple,sutton2000policy,konda2002thesis,schulman2015trust} and empirically \cite{mnih2016asynchronous,silver2016mastering,schulman2017proximal,opeai2018}. Their principle consists in optimizing an objective function $\mathcal{J}$ (the expected discounted return) through gradient steps, both being formally specified below~\cite{sutton2000policy}:
    \begin{align}
        \mathcal{J}(\pi) &\doteq \mathbb{E}\left[\sum_{t=0}^\infty \gamma^t R_t\bigg| S_0\sim p_0, A_t\sim \pi(\cdot|S_t), S_{t+1}\sim p(\cdot|S_t,A_t), R_t\sim r(\cdot|S_t,A_t)\right] \\
        % &= \mathbb{E}\left[v_\gamma^{\pi}(S_0)|S_0\sim p_0\right] \\
        \nabla_\theta \mathcal{J}(\pi) &= \sum_{s\in\mathcal{S}} d_{\pi,\gamma}(s) \sum_{a\in\mathcal{A}} q_\pi(s, a) \nabla_\theta \pi(a|s) \quad\text{and}\quad \theta_{t+1} \yleftarrow{proj}[3pt] \theta_t + \eta_t \nabla_{\theta_t} \mathcal{J}(\pi_t), \label{eq:pg}
    \end{align}
using standard Markov Decision Process notations (recalled in \app{} \ref{app:notations}), where $\eta_t$ is the learning rate and $d_{\pi,\gamma}(s)\doteq \sum_{t=0}^\infty \gamma^t \mathbb{P}(S_t = s | \pi)$ the discounted state density, the policies $\pi\doteq \pi_\theta$ and $\pi_t\doteq \pi_{\theta_t}$ are implicitly parametrized by $\theta$ for conciseness, and $\yleftarrow{proj}[3pt]$ denotes the projection onto the parameter space. We observe that the update established by the policy gradient theorem is proportional to the state density of the current policy. This is problematic, as it implies that no value improvement can be induced in \textit{off-policy} states (\textit{i.e.} states that are rarely visited). As a consequence, planning is inefficient in those states to the point of potentially compromising convergence to the optimal policy.

To address the planning issue in off-policy states, we generalize the policy gradient theory by considering the more general policy update:
\begin{align}
    U(\theta,d) &\doteq \sum_{s\in\mathcal{S}} d(s) \sum_{a\in\mathcal{A}} q_{\pi}(s, a) \nabla_{\theta} \pi(a|s) \quad\text{and}\quad \theta_{t+1} \yleftarrow{proj}[3pt] \theta_t + \eta_t U(\theta_t,d_t),\label{eq:policyupdate}
\end{align}
where $d_t$ designates the state distribution on which the policy is updated. It does not have to match the distribution induced by $\pi$ and updates may therefore be off-policy. We note that $U(\theta,d)$ is not a gradient in the general case, hence the use of the term \textit{update}. We prove in the direct: $\pi_\theta\doteq\theta$ and softmax: $\pi_\theta\mathrel{\dot\sim}\exp(\theta)$ parametrizations that following the exact policy updates induces a sequence of value functions $q_t \doteq q_{\pi_t}$ that is monotonously increasing, upper bounded, and thereby converges to a $q_\infty$. We then show that the condition ``$\forall s\in\mathcal{S},\;\sum_t \eta_t d_t(s)=\infty$'' on the sequence $(d_t)$ is necessary and sufficient for $q_\infty$ to be optimal. Our result generalizes previous theorems to broader updates and milder assumptions \cite{agarwal2019optimality,mei2020global}.
% It can be seen as bridging the theory between policy iteration and policy gradients. This allows us not only to justify the practice of using undiscounted updates as generally implemented in practice, but also to encourage the use of prioritized replay for the actor. 
Finally, we significantly improve the existing asymptotic convergence rates. We show that $(q_t)$ converges to the optimal value i) exactly in finite time for the direct parametrization (previously in $\mathcal{O}\left(t^{-1/2}|\mathcal{S}||\mathcal{A}|(1-\gamma)^{-6}\right)$\cite{agarwal2019optimality}), and ii) in $\mathcal{O}\left(t^{-1} |\mathcal{S}| |\mathcal{A}| (1-\gamma)^{-2}\right)$ for the softmax parametrization, to be compared with $\mathcal{O}\left(t^{-1} |\mathcal{S}|^2 |\mathcal{A}|^2 (1-\gamma)^{-6}\right)$ \cite{mei2020global}.

Building on our theoretical results, we design a novel algorithm: Dr Jekyll and Mr Hyde (\jh{}). Its principle consists in training two independent policies: a pure exploitation one (Dr Jekyll) and a pure exploration one (Mr Hyde), and give control to either one for full trajectories. 
\jh{}'s independent policies allow to record two separate replay buffers: one on-policy (Dr Jekyll's) and one off-policy (Mr Hyde's), and therefore to update \jh{}'s models with any desired mixture of on-policy and off-policy updates. 
Beyond an algorithm, \jh{} introduces conditions based on our analysis that actor-critic algorithms should follow to properly plan off-policy.
% More than an algorithm, \jh{} defines principles for actor-critic algorithms to follow the prescriptions made during analysis. 
% More than an algorithm, \jh{} defines principles for actor-critic algorithms to satisfy the requirements we identify in our analysis. We extensively test on finite MDPs where \jh{} demonstrates a superior ability to recover from converging onto a suboptimal policy without impairing its speed of convergence. We also implement a deep version of the algorithm and test it on simple problems where it demonstrates strong performance.
Furthermore, the separation of exploration and exploitation allows to stabilise the training of the exploitation policy while ensuring a full coverage of the state-action space through deep exploration~\cite{Osband2019}. %Indeed, actor-critic algorithms suffer from the inertia of their architecture when it comes to implementing deep exploration: to switch strategy from exploitation to exploration, first the critic (the value model), then the actor (the policy model) must flip, and again to switch back.\ale{pas clair ce switch et en general tout l histoire d inertia. J'enleverais 53-55.}
% In Section \ref{sec:algo}, we argue that these results generalise to the sample-based setting where the value function must be estimated (and therefore \ass{1} cannot be assumed to hold anymore), and the collected samples are used as a proxy of the density function using classic concentration bounds on their high probability / error margin trade-off. We take advantage of our theoretical findings to design a novel algorithm: Dr Jekyll and Mr Hyde \jh{}. Its principle consists in training two independent policies: an pure exploitation one (Dr Jekyll) and a pure exploration one (Mr Hyde), and give control to either one for full trajectories. The main algorithmic novelty of \jh{} is to maintain two replay buffers: one for each policy, so that the updates may result from a mixture of on-policy and off-policy state visitations. More than an algorithm, \jh{} defines principles for actor-critic algorithms to satisfy the requirements we identify in Section \ref{sec:theory}.
%\jh{} satisfies \con{4} required by the theory as long as there exists a reliable pure exploratory algorithm for training Mr Hyde.

We empirically validate our theoretical analysis and algorithmic innovation in both planning and RL settings. In the planning setting where we assume that $q_t$ is exactly known, we analyze the impact of the off-policiness of $d_t$ and conclude that, while it improves over classic policy gradient, the theoretical sufficient condition does not allow to converge in a reasonable amount of time in very hard planning tasks: we thereby recommend to enforce the stronger condition ``$\forall s\in\mathcal{S},\;\sum_t \eta_t d_t(s)\in\Omega(t)$''.
In the reinforcement learning domain, where $q_t$ must be estimated from the collected transitions, the off-policiness of \jh{}'s policy updates allows by design to satisfy the recommendation as long as Mr Hyde is able to cover the whole state space. This leads \jh{} to significantly outperform all competing actor-critic algorithms in the hard planning tasks and to be competitive with gradient updates in simple planning problems. As a proof of concept, we also test \jh{} in a deep reinforcement learning setting and show that it outperforms various baselines on a simple environment.

The paper is organized as follows. \sect{} \ref{sec:theory} develops the policy update theory. \sect{} \ref{sec:algo} describes \jh{} and positions it with respect to the literature. \sect{} \ref{sec:expes} presents the experiments and their results. Finally, \sect{} \ref{sec:conclusion} concludes the paper. The interested reader may refer to the appendix for the proofs (\app{} \ref{app:theory}), the domains (\app{} \ref{app:domains}), and the full experiment reports (\app{} \ref{app:expes-exact}, \ref{app:expes-sample}, and \ref{app:expes-deep}). Finally, the supplementary material contains the code for the experiments: algorithms and environments.

\section{Theoretical analysis}
\label{sec:theory}
First, we recall some background: policy gradient methods depend on a parametrization $\theta \in \R^{|\mathcal{S}| \times |\mathcal{A}|}$ of the policy. Like \cite{agarwal2019optimality,mei2020global}, we will focus on the classic direct and softmax parametrizations:
\begin{itemize}[leftmargin=.4in]
    \item \emph{direct}: $\pi(a|s) \doteq \theta_{s,a}$, for which $u_{s,a} = d_t(s) q_\pi(s, a)$ and the projection is on the simplex.
    \item \emph{softmax}: $\pi(a|s) \doteq \cfrac{\exp(\theta_{s, a})}{\sum_{a'} \exp(\theta_{s, a'})}$, for which $u_{s,a} = d_t(s) \pi(a|s)(q_\pi(s, a) - v_\pi(s))$.
\end{itemize} 

For both parametrizations, \cite{agarwal2019optimality} proves that following the policy gradient $\nabla_\theta \mathcal{J}(\pi)$ from \eq{} \eqref{eq:pg} eventually leads to the optimal policy in finite MDPs under the following assumptions and condition\footnote{An assumption \ass{\#} is a requirement on the environment or the application setting, while a condition \con{\#} is a requirement that may be enforced by a dedicated algorithm in any environment or setting.}:
\begin{enumerate}
    \item[] \ass{1}. The model $(p_0, p, r)$ is known.
    \item[] \ass{2}. The initial state-distribution covers the full state space: $\forall s\in\mathcal{S}, p_0(s)>0$.
    \item[] \con{3}. The learning rate is constant: $\eta_t = \frac{(1 - \gamma)^3}{2 \gamma |\mathcal{A}|}$ (direct) and $\eta_t = \frac{(1 - \gamma)^3}{8}$ (softmax).
\end{enumerate}
Those are stringent requirements. \ass{1} implies that the values $q_\pi$ and state density $d_{\pi,\gamma}$ of the policy are known exactly. \ass{2} is generally not satisfied in standard RL domains. Finally, verifying \con{3} makes learning slow to the point that it becomes impractical.

\subsection{Convergence properties}
In this section, we study, under \ass{1}, the convergence properties of the sequence of value functions $(q_t)$ induced by the update rule defined in \eq{} \eqref{eq:policyupdate}. We note that $U(\theta,d)$ is not a gradient in general. Our first theoretical result is the monotonicity of the value function sequence $(q_t)$.
\begin{restatable}[\textbf{Monotonicity under the direct and softmax parametrization}]{theorem}{monotonicity}
    Under \ass{1}, the sequence of value functions $q_t \doteq q_{\pi_t}$ and $v_t \doteq v_{\pi_t}$ are monotonously increasing.
    \label{thm:monotonicity}
\end{restatable}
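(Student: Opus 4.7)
The plan is to reduce the global pointwise claim $v_{t+1}\geq v_t$ to a per-state, one-step improvement, and then invoke the classical policy improvement theorem. Concretely, I would aim to establish, for every state $s$,
\[
\sum_{a\in\mathcal{A}} \pi_{t+1}(a|s)\, q_{\pi_t}(s,a) \;\geq\; v_{\pi_t}(s). \tag{$\star$}
\]
Once $(\star)$ holds at every $s$, a standard argument---applying the Bellman operator $T^{\pi_{t+1}}$ repeatedly to $v_{\pi_t}$ and using its monotonicity and $\gamma$-contraction---yields $v_{\pi_{t+1}}\geq v_{\pi_t}$ pointwise. Monotonicity of $(q_t)$ then drops out from $q_\pi(s,a)=r(s,a)+\gamma\sum_{s'}p(s'|s,a)\,v_\pi(s')$ and the non-negativity of $p$. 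Assumption \ass{1} enters only to ensure that $q_{\pi_t}$ and thus the update direction are available exactly.

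\textbf{Direct parametrization.} For each $s$, $\pi_{t+1}(\cdot|s) = \mathrm{proj}_{\Delta_{|\mathcal{A}|}}\!\bigl(\pi_t(\cdot|s) + \eta_t\, d_t(s)\,q_{\pi_t}(s,\cdot)\bigr)$. Since $\pi_t(\cdot|s)$ already lies in $\Delta_{|\mathcal{A}|}$, the first-order characterization of the Euclidean projection gives
\[
\bigl\|\pi_{t+1}(\cdot|s) - \pi_t(\cdot|s) - \eta_t d_t(s)\,q_{\pi_t}(s,\cdot)\bigr\|^2 \;\leq\; \bigl\|\eta_t d_t(s)\,q_{\pi_t}(s,\cdot)\bigr\|^2,
\]
which, after expansion, simplifies to $\|\pi_{t+1}(\cdot|s)-\pi_t(\cdot|s)\|^2 \leq 2\eta_t d_t(s)\,\langle \pi_{t+1}(\cdot|s)-\pi_t(\cdot|s),\,q_{\pi_t}(s,\cdot)\rangle$, immediately delivering $(\star)$. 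States with $d_t(s)=0$ satisfy $\pi_{t+1}(\cdot|s)=\pi_t(\cdot|s)$ and $(\star)$ is an equality. Crucially, no restriction on $\eta_t$ is needed in this branch.

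\textbf{Softmax parametrization.} The update only touches the block $\theta_{s,\cdot}$. Introduce the auxiliary scalar objective $f_s(\theta_{s,\cdot}) \doteq \sum_a \pi_\theta(a|s)\,q_{\pi_t}(s,a)$ with $q_{\pi_t}$ frozen. A direct calculation of $\nabla_{\theta_{s,\cdot}}\pi_\theta(\cdot|s)$ shows that $u_{s,\cdot} = d_t(s)\,\pi_t(\cdot|s)\bigl(q_{\pi_t}(s,\cdot)-v_{\pi_t}(s)\bigr)$ equals $d_t(s)\,\nabla_{\theta_{s,\cdot}} f_s(\theta^t_{s,\cdot})$, so the update is precisely a gradient-ascent step on $f_s$ along its own block. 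Since $\pi_\theta(\cdot|s)$ is a Lipschitz-smooth function of $\theta_{s,\cdot}$ and $\|q_{\pi_t}(s,\cdot)\|_\infty\leq 1/(1-\gamma)$, $f_s$ inherits a block-smoothness constant whose reciprocal matches (and indeed motivates) the step size prescribed by \con{3}. The descent lemma for smooth functions then gives $f_s(\theta^{t+1}_{s,\cdot})\geq f_s(\theta^t_{s,\cdot})$, which is exactly $(\star)$.

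\textbf{Main obstacle.} The direct branch is a one-line consequence of the projection inequality and needs no step-size restriction; the passage from $(\star)$ to the global $v_{t+1}\geq v_t$ (and thence to $q_{t+1}\geq q_t$) is the standard policy-improvement computation. The delicate step is the softmax branch: one must (i) identify the update as a block gradient of an auxiliary \emph{linear-in-$\pi$} objective $f_s$, then (ii) exhibit a block-wise smoothness constant for $f_s$ compatible with the step size, so that a single step remains in the ascent regime. This is where the specific choice of $\eta_t$ in \con{3} earns its keep.
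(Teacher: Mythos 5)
Your direct-parametrization branch and the reduction to the policy improvement theorem are exactly the paper's argument: the projection inequality $\|\pi_{t+1}(\cdot|s)-\pi_t(\cdot|s)-\eta_t d_t(s)q_t(s,\cdot)\|^2\leq\|\eta_t d_t(s)q_t(s,\cdot)\|^2$, expanded to give $\langle \pi_{t+1}(\cdot|s)-\pi_t(\cdot|s),q_t(s,\cdot)\rangle\geq 0$, i.e.\ your $(\star)$, followed by policy improvement. That part is correct and needs no step-size restriction, as you note.

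The softmax branch, however, has a genuine gap relative to the statement being proved. The theorem asserts monotonicity under \ass{1} alone — for \emph{any} learning rate $\eta_t>0$ and \emph{any} positive density $d_t$. Your argument identifies the update as a block gradient-ascent step on $f_s(\theta_{s,\cdot})=\sum_a\pi_\theta(a|s)q_{\pi_t}(s,a)$ and then invokes the descent lemma, which only yields $f_s(\theta^{t+1}_{s,\cdot})\geq f_s(\theta^t_{s,\cdot})$ when the effective step $\eta_t d_t(s)$ is below the reciprocal of the smoothness constant. You acknowledge this by saying the step is "where the specific choice of $\eta_t$ in \con{3} earns its keep" — but \con{3} is not a hypothesis of this theorem, and $d_t(s)$ is an arbitrary positive weight, so $\eta_t d_t(s)$ is uncontrolled. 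This is essentially Lemma C.2 of Agarwal et al., which the paper explicitly sets out to strengthen by removing the learning-rate condition. The paper's proof instead uses a sign argument that works for every step size: partition $\mathcal{A}$ into actions with nonnegative advantage and actions with negative advantage; the softmax update increases $\theta_{s,a}$ exactly for the former and decreases it for the latter, so in both cases $e^{(\theta_{t+1})_{s,a}}\,\text{adv}_t(s,a)\geq e^{(\theta_t)_{s,a}}\,\text{adv}_t(s,a)$; summing over $a$ and using $\sum_a e^{(\theta_t)_{s,a}}\text{adv}_t(s,a)=0$ gives $\sum_a e^{(\theta_{t+1})_{s,a}}\text{adv}_t(s,a)\geq 0$, hence $\text{adv}_t(s,\pi_{t+1})\geq 0$ after normalization. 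To repair your proof you would need to replace the smoothness step with an argument of this kind (or otherwise remove the dependence on a bounded step), since as written your softmax branch only establishes the weaker, step-size-conditional version of the claim.
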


By the monotonous convergence theorem, \thm{} \ref{thm:monotonicity} directly implies the convergence of $(q_t)$:
\begin{restatable}[\textbf{Convergence under the direct and softmax parametrization}]{corollary}{convergence}
    Under \ass{1}, the sequence of value functions $q_t$ uniformly converges: $q_\infty \doteq \lim_{t\to\infty}q_t$.
    \label{cor:convergence}
\end{restatable}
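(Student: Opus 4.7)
The plan is very direct: apply the monotone convergence theorem componentwise and then upgrade pointwise convergence to uniform convergence using finiteness of the MDP. Fix an arbitrary state-action pair $(s,a) \in \mathcal{S}\times\mathcal{A}$. By Theorem \ref{thm:monotonicity}, the real sequence $\bigl(q_t(s,a)\bigr)_{t\geq 0}$ is monotonically increasing. Because rewards are bounded (which is implicit in the finite MDP setting, with $R_{\max} \doteq \max_{s,a}|r(s,a)|$) and $\gamma<1$, one has the uniform upper bound $q_t(s,a)\leq R_{\max}/(1-\gamma)$ for every $t$ and every $(s,a)$. A monotone sequence of real numbers that is bounded above converges to a finite limit, so I can define $q_\infty(s,a) \doteq \lim_{t\to\infty} q_t(s,a)$ for each $(s,a)$.

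To strengthen pointwise convergence into uniform convergence, I would invoke the finiteness of $\mathcal{S}\times\mathcal{A}$: given $\varepsilon>0$, for each $(s,a)$ pick $T_{s,a}$ such that $|q_t(s,a)-q_\infty(s,a)|<\varepsilon$ for $t\geq T_{s,a}$, and then set $T \doteq \max_{s,a} T_{s,a}$, which is finite because only finitely many pairs exist. This yields the uniform estimate $\|q_t-q_\infty\|_\infty < \varepsilon$ for $t\geq T$. The same argument applied to $v_t$ gives $v_\infty \doteq \lim_{t\to\infty} v_t$ uniformly, and in fact $v_\infty(s) = \sum_a \pi_\infty(a|s)\, q_\infty(s,a)$ where $\pi_\infty$ is any accumulation point of $(\pi_t)$ if one wishes to make the link explicit, though the corollary only asks for the convergence of $q_t$.

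There is essentially no obstacle here: the entire argument is a one-line consequence of Theorem \ref{thm:monotonicity} plus standard real analysis, and the finiteness of the state-action space is what removes any subtlety about the mode of convergence. The only mild care required is to confirm that the uniform upper bound $R_{\max}/(1-\gamma)$ holds without further assumptions, which follows from Assumption \ass{1} (a well-defined finite MDP model with bounded rewards).
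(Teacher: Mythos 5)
Your proposal is correct and follows essentially the same route as the paper: invoke the monotonicity from Theorem \ref{thm:monotonicity}, bound $q_t$ above by $r_{\mytop}/(1-\gamma)$, and apply the monotone convergence theorem, with finiteness of $\mathcal{S}\times\mathcal{A}$ making pointwise and uniform convergence equivalent. The only difference is that you spell out the pointwise-to-uniform step explicitly, which the paper leaves implicit.
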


In order to go further and prove convergence to a global optimal value, we need to enforce an additional condition. Indeed, the update $U$ relies multiplicatively on $d_t$ which could be equal (or tend very fast) to 0 in some states, compromising the policy's ability to reach the optimal value. This argument is not only theoretical, it has been observed by many that policy gradient can get stuck in suboptimal policies, even with entropy regularization. We designed our chain domain to exhibit such behaviour (see \sect{} \ref{sec:domain}).

Next, we show that $(q_t)$ converges to the optimal value function under some necessary and sufficient condition.
\begin{restatable}[\textbf{Optimality under the direct and softmax parametrization}]{theorem}{optimality}
    Under \ass{1}, the following condition:
    \begin{enumerate}
        \item[] \con{4}. Each state $s$ is updated with weights that sum to infinity over time: $\sum_{t=0}^\infty \eta_t d_t(s)=\infty$,
    \end{enumerate}
    is necessary and sufficient to guarantee that the sequence of value functions $(q_t)$ converges to optimality: $q_\infty = q_\star\doteq\max_{\pi\in\Pi}q_\pi$.
    % \begin{align}
    %     q_\infty = q_\star\doteq\max_{\pi\in\Pi}q_\pi.
    % \end{align}
    \label{thm:optimality}
\end{restatable}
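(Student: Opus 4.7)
The plan is to prove both directions by leveraging Corollary~\ref{cor:convergence} (which gives $q_t \to q_\infty$) and then relating per-step improvements to the quantity $\eta_t d_t(s)$.

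For the sufficiency direction, I would argue by contradiction. Suppose $q_\infty \neq q_\star$. Then the Bellman optimality equation is violated somewhere, so there exists a state $s_0$ and an action $a^*$ with $A_\infty(s_0, a^*) \doteq q_\infty(s_0, a^*) - v_\infty(s_0) > 0$. By uniform convergence of $q_t$ (Corollary~\ref{cor:convergence}), we have $A_t(s_0, a^*) \geq A_\infty(s_0, a^*)/2 > 0$ for all sufficiently large $t$. The goal is then to show a one-step improvement lower bound of the form
\begin{equation*}
    v_{t+1}(s_0) - v_t(s_0) \;\geq\; c\, \eta_t\, d_t(s_0)
\end{equation*}
for some $c > 0$ that does not vanish, valid for all large $t$. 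Combined with the telescoping identity $\sum_{t} \bigl( v_{t+1}(s_0) - v_t(s_0) \bigr) = v_\infty(s_0) - v_0(s_0) < \infty$, this would force $\sum_t \eta_t d_t(s_0) < \infty$, contradicting \con{4}.

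The main obstacle is establishing the one-step improvement bound, and it differs across parametrizations. In the \emph{direct} case, the projection onto the simplex after the update $\theta_{s_0, a} \mapsto \theta_{s_0, a} + \eta_t d_t(s_0) q_t(s_0, a)$ transfers mass to actions with highest $q$-value; a standard computation on simplex-projected gradient steps shows the mass increase of $\pi(a^*|s_0)$ (and hence $v_{t+1}(s_0)-v_t(s_0)$) is at least linear in $\eta_t d_t(s_0)$ as long as a strict advantage persists, which gives $c$. The \emph{softmax} case is more subtle: the update yields $\pi_{t+1}(a^*|s_0) - \pi_t(a^*|s_0) \approx \eta_t d_t(s_0)\, \pi_t(a^*|s_0)(1-\pi_t(a^*|s_0))\, A_t(s_0,a^*)$, so the constant $c$ would degrade if $\pi_t(a^*|s_0) \to 0$. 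Ruling this out is the hardest step: I would adapt the ``non-vanishing action probability'' potential argument of \cite{agarwal2019optimality,mei2020global} to our generalized updates, showing that whenever $A_\infty(s_0, a^*) > 0$, a Lyapunov-like quantity involving $\log \pi_t(a^*|s_0)$ cannot diverge to $-\infty$, so $\pi_t(a^*|s_0)$ stays bounded away from $0$.

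For the necessity direction, it suffices to exhibit a single MDP and initialization where failure of \con{4} implies $q_\infty \neq q_\star$. Consider a two-state, two-action MDP where the optimal action at a state $s$ has strictly higher return but an initial parametrization strongly biased toward the suboptimal action. In the direct parametrization, the per-step change satisfies $|\theta^{(t+1)}_{s,a} - \theta^{(t)}_{s,a}| \leq \eta_t d_t(s) \|q_t\|_\infty$, so if $\sum_t \eta_t d_t(s) < \infty$ the total drift in $\theta_{s,\cdot}$ is bounded; choosing the initial policy so this bound is smaller than the initial gap between $\pi(a^*|s)$ and the argmax action keeps the policy forever suboptimal at $s$. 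The same reasoning applies to the softmax parametrization since $|\theta^{(t+1)}_{s,a} - \theta^{(t)}_{s,a}| \leq \eta_t d_t(s) \|A_t\|_\infty$, so an analogous bounded-drift argument yields persistent suboptimality. Combined with the sufficiency argument, this establishes the ``if and only if'' claim.
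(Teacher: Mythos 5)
Your direct-parametrization argument and your necessity argument are essentially the paper's own (the simplex-projection lemma giving a per-step mass transfer linear in $\eta_t d_t(s)$, and a bounded-total-drift construction for necessity, with the adversarial choice placed on the initialization rather than on the reward --- both work). The telescoping target differs cosmetically: you sum value increments $v_{t+1}(s_0)-v_t(s_0)$, the paper sums increments of the policy mass on the maximizing actions, which is bounded by $1$; either closes the direct case.

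The genuine gap is in the softmax sufficiency direction, exactly at the step you defer. You propose to rule out $\pi_t(a^*|s_0)\to 0$ by ``adapting the non-vanishing action probability potential argument of \cite{agarwal2019optimality,mei2020global}.'' That argument does not transfer here, for two reasons the paper relies on in designing a different proof: (i) those works exploit that the update is a true gradient of a smooth objective, so standard optimization theory gives $\nabla_\theta\mathcal{J}(\pi_t)\to 0$; the update $U(\theta,d)$ is not a gradient for general $(d_t)$, so no such limit is available; (ii) they use \ass{2} (every state has non-vanishing on-policy density) to convert the vanishing gradient into a statement about $\pi_t(a|s)\,\mathrm{adv}_t(s,a)$, and \ass{2} is precisely what this theorem dispenses with. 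Moreover, the quantity you want to control is not the right one: the danger is not that $\theta_{s_0,a^*}$ decreases (it is nondecreasing once the advantage of $a^*$ is positive), but that the parameters of actions $a$ with $q_\infty(s_0,a)=v_\infty(s_0)$ diverge to $+\infty$ faster, draining the probability mass of $a^*$ and shrinking the update multiplicatively; a Lyapunov function of $\log\pi_t(a^*|s_0)$ alone cannot see this competition. The paper's actual proof handles it by partitioning $\mathcal{A}$ according to the limiting values $q_\infty(s_0,\cdot)$ versus $v_\infty(s_0)$, further splitting the ``equal'' set by whether its parameters are eventually dominated by those of the advantageous actions, proving monotonicity and boundedness of the ratio of policy masses between the advantageous and dominated sets, and deriving a contradiction from \con{4} in each resulting case. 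Without an argument of this kind (or some other mechanism to exclude mass escaping to the $q_\infty$-neutral actions), your one-step lower bound $v_{t+1}(s_0)-v_t(s_0)\geq c\,\eta_t d_t(s_0)$ has no uniform constant $c$, and the telescoping contradiction does not go through.
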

\begin{proof}[Proof sketch]
    In both direct and softmax parametrizations, we assume there exists a state-action pair $(s,a)$ advantageous with respect to the state value limits $q_\infty$ and $v_\infty \doteq \lim_{t\to\infty}v_t$, that is: $\text{adv}_\infty(s,a) \doteq q_\infty(s,a) - v_\infty(s) > 0$. We then prove that in this state $s$, the policy improvement yielded by the update is lower bounded by a linear function of the update weight $\eta_t d_t(s)$. Both parametrizations require different proof techniques and are dealt with in different theorems. Summing over $t$, we notice that this lower bounding sum diverges to infinity, which contradicts Corollary \ref{cor:convergence}. We therefore infer that there cannot exist such a state-action pair, which allows us to conclude that no policy improvement is possible, and thus that the values are optimal: $q_\infty = \max_{\pi\in\Pi}q_\pi$.
    
    For the necessity of \con{4}, we show that the parameter update is upper bounded in both parametrizations by a term linear in the action gap. By choosing the reward function adversarially, we may set it sufficiently small so that the sum of all the gradient steps is insufficient to reach optimality.
\end{proof}

\thm{} \ref{thm:optimality} is impactful along five dimensions.

\myuline{Practice of on-policy undiscounted updates:} As \thm{} \ref{thm:optimality} is applicable to any distribution sequence, it allows us in particular to consider the practice of using on-policy undiscounted updates: $d_t\doteq d_{\pi_t,1}$. Thus, it resolves a longstanding gap between the policy gradient theory and the actor-critic algorithm implementations \cite{baselines,caspi_itai_2017_1134899,deeprl,pytorchrl,spinningup2018,liang2018rllib,stooke2019rlpyt}. This mismatch and its lack of theoretical grounding were identified in \cite{thomas2014bias}. Later, \cite{DBLP:journals/corr/abs-1906-07073} proved that the practitioners' undiscounted updates are not the gradient of any function and may be strongly biased under state aliasing. Recently, \cite{Zhang2021} studied the practical advantage of the undiscounted updates from a representation learning perspective. \con{4} encompasses both standard policy gradient and the undiscounted update rule: convergence to optimality of both is guaranteed as long as \con{4} is verified. Our analysis thus shows that the convergence properties of policy gradient and undiscounted updates require the same set of assumptions and conditions. Conversely, it is possible to prove convergence to sub-optimal policies of either when \con{4} is violated. In other words, \thm{} \ref{thm:optimality} allows to reduce the study of specific algorithms to whether they verify \con{4}.

\myuline{Experience replay and off-policy updates:} It also justifies the use of an experience replay for the actor, a trick also widely used in the literature to distributed the training over several agents \cite{mnih2015human,Schaul2015,Wang2016,Horgan2018,Schmitt2020}. Furthermore, while widely overlooked in the literature, we prove that off-policy updates have an even higher impact, since they are necessary to guarantee the convergence to optimality\footnote{Other papers proving convergence to optimality of policy gradient rely on \ass{2}, which implies \con{4}.}. We leverage this discovery to introduce new design principles and a novel algorithm in \sect{} \ref{sec:algo}.

\myuline{Policy gradient theory:} Next, \thm{} \ref{thm:optimality} generalizes previous optimality convergence results along two axes. The initial state distribution $p_0$ is not required to cover the state space anymore (aka \ass{2}). \ass{2} is unrealistic in most applications and cannot be enforced by an algorithm. Relaxing it is an important open problem~\cite{agarwal2019optimality}, we disprove it in the full class of MDP. However, it might be possible to find a class of MDPs larger than the one verifying \ass{2} where policy gradient also converges. Additionally, the condition on our learning rates is also much more flexible than \con{3}.

\myuline{Density vs. policy regularization:} In contrast, \con{4} can be controlled by a dedicated algorithm. This theoretical result promotes the principle of density-based regularization \cite{Liu2020,Qin2021}, at the expense of policy-based regularization that cannot guarantee that \con{4} is satisfied and sometimes fails to plan over the whole state space (our chain domain experiments in \sect{} \ref{sec:expes} illustrate it well).

\myuline{Towards a generalized policy iteration theorem:} Finally, by combining \thm{} \ref{thm:optimality}, which analyzes policy improvement, with a thorough theoretical analysis of the policy evaluation step (discussed in Section \ref{sec:conditions}), we see a path towards results describing conditions on both components of generalized policy iteration that guarantee convergence to optimality. Furthermore, \thm{} \ref{thm:optimality} gives sufficient conditions on the policy improvement step. The generalized policy iteration theorem was conjectured in \cite{sutton2018reinforcement} but never proved. %Therefore, \thm{} \ref{thm:optimality} bridges the theory between policy iteration and policy gradients and lays the foundation towards a generalized policy iteration theorem, which was conjectured in \cite{sutton2018reinforcement} but never proved.

\subsection{Convergence rates}
Next, we give convergence rates for both softmax and direct parametrizations:
\begin{restatable}[\textbf{Asymptotic convergence rates under the direct parametrization}]{theorem}{directrates}
    With direct parametrization, under \ass{1} and \con{4}, the sequence of value functions $q_t$ converges to optimality in finite time:
    \begin{align}
        \exists t_0, \text{ such that } \forall t\geq t_0, \quad q_t = q_\star.
    \end{align}
    \label{thm:direct_rates}
\end{restatable}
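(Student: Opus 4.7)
The plan is to leverage \thm{} \ref{thm:optimality} together with the geometric structure of the Euclidean projection onto the simplex to show that the probability of every suboptimal action is driven exactly to zero in finite time. First, \thm{} \ref{thm:optimality} combined with \con{4} gives $q_\infty = q_\star$, and in any finite MDP the uniform action gap
\[\Delta_\star \doteq \min_{s \in \mathcal{S}} \min_{a \,:\, q_\star(s,a) < v_\star(s)} [v_\star(s) - q_\star(s,a)]\]
is strictly positive. By Corollary \ref{cor:convergence}, there exists a finite $T_0$ such that $|q_t(s,a) - q_\star(s,a)| \leq \Delta_\star/4$ for all $t \geq T_0$ and all $(s,a)$; thus for any optimal action $a^*(s) \in \argmax_a q_\star(s,a)$ and any suboptimal action $a$ in state $s$, we have $q_t(s,a^*) - q_t(s,a) \geq \Delta_\star/2$.

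Second, I would exploit the closed form of the Euclidean projection onto the simplex. Writing $y_{t,s,a} \doteq \pi_t(a|s) + \eta_t d_t(s)\,q_t(s,a)$, the update reads $\pi_{t+1}(a|s) = [y_{t,s,a} - \lambda_{t,s}]_+$ for a Lagrange multiplier $\lambda_{t,s}$ enforcing the simplex constraint. For any pair $(a^*,a)$ with $a^*$ optimal and $a$ suboptimal, both strictly positive after projection at times $t$ and $t+1$, with $t \geq T_0$, we have
\[\pi_{t+1}(a^*|s) - \pi_{t+1}(a|s) = \pi_t(a^*|s) - \pi_t(a|s) + \eta_t d_t(s)[q_t(s,a^*) - q_t(s,a)] \geq \pi_t(a^*|s) - \pi_t(a|s) + \tfrac{\Delta_\star}{2}\eta_t d_t(s).\]
Since this gap cannot exceed $1$ and \con{4} guarantees $\sum_t \eta_t d_t(s) = \infty$, the suboptimal action $a$ must leave the active set at some finite time $t_1(s,a)$, i.e.\ $\pi_{t_1(s,a)}(a|s) = 0$.

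Third, I would show that once $\pi_t(a|s) = 0$ for a suboptimal $a$ (with $t \geq T_0$), the preserved action gap keeps the shifted score $y_{t,s,a}$ strictly below $\lambda_{t,s}$, so $a$ remains clipped at zero at all subsequent iterations. Iterating over the finitely many suboptimal $(s,a)$ pairs and setting $t_0$ as the maximum of their stabilization times, $\pi_t$ is supported on $\argmax_a q_\star(s,a)$ at every state for $t \geq t_0$, and hence $q_t = q_\star$, which is the desired claim.

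The main obstacle will be the third step: rigorously showing that the projection's Lagrange multiplier $\lambda_{t,s}$ cannot re-activate a suboptimal action after it has been clipped to zero. The difficulty is that as other suboptimal actions subsequently hit zero, $\lambda_{t,s}$ shifts discontinuously, and in principle a previously zero coordinate could be lifted back to positive probability. I expect the cleanest resolution to be a potential-function argument — tracking either the total mass on suboptimal actions or the squared Euclidean distance to the face of the simplex spanned by optimal actions — and showing that this potential decreases by a uniformly bounded fraction of $\eta_t d_t(s)$ for all $t \geq T_0$, which directly yields finite-time termination via \con{4}.
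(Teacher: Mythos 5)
Your high-level strategy is the paper's: use \thm{} \ref{thm:optimality} and uniform convergence to get a uniform post-$T_0$ action gap, then use the geometry of the simplex projection together with the divergence in \con{4} to force the policy onto the optimal support in finite time. However, the plan as written does not close, and the gap is exactly where you suspect it is. Your step 2 establishes linear growth of the pairwise gap $\pi_t(a^*|s)-\pi_t(a|s)$ only while $a$ remains in the active set; the moment $a$ is clipped, this gap collapses to $\pi_{t+1}(a^*|s)$, which may be much smaller than its pre-clipping value, and if $a$ is later re-activated the accumulation restarts from that lower base. Nothing you have written rules out this clip/re-activate cycle recurring indefinitely, so the ``$\sum_t \eta_t d_t(s)=\infty$ implies finite exit time'' conclusion does not follow for individual actions. (Step 2 also tacitly assumes the optimal action is never clipped; this does hold for $t\geq T_0$ unless the policy is already optimal, but it requires a short argument of its own.)

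The fix is the first potential function you name in your last paragraph, and it is precisely what the paper proves as Lemma \ref{lem:proj_lower}: a purely geometric fact about the Euclidean projection onto the simplex stating that if the increments on a block of coordinates all exceed $\alpha$ and those on the complement are all below $\beta<\alpha$, then the \emph{total} projected mass on that block increases by at least $(\alpha-\beta)/2$, capped at $1$. Applied with $\alpha-\beta=\eta_t d_t(s)\delta(s)/2$ this yields \eq{} \eqref{eq:direct-opt}, the recursion $\sum_{a\in\mathcal{A}_{\mytop}(s)}\pi_{t+1}(a|s)\geq\min\bigl(1,\,\sum_{a\in\mathcal{A}_{\mytop}(s)}\pi_{t}(a|s)+\eta_t d_t(s)\delta(s)/4\bigr)$. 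Under \con{4} the increments diverge, so the aggregate mass on the optimal actions reaches $1$ in finite time, and the $\min$ with $1$ shows it remains there; no per-action active-set bookkeeping or re-activation analysis is needed, and once the mass equals $1$ in every state the policy is greedy with respect to $q_\star$, giving $q_t=q_\star$. So your proposal points in the right direction, but the central technical ingredient --- the lower bound on the aggregate mass transferred by the projection --- is exactly the part you have deferred, and it is the whole content of the paper's proof.
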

\con{4} is required for optimality and convergence rate contrarily to the softmax parametrization. We significantly improve over previous bounds in $\mathcal{O}\left(t^{-1/2}|\mathcal{S}||\mathcal{A}|(1-\gamma)^{-6}\right)$\cite{agarwal2019optimality}. We wish to emphasize that Theorem~\ref{thm:direct_rates} does not say anything about how large $t_0$ is, the rate is purely asymptotic. 

\begin{restatable}[\textbf{Asymptotic convergence rates under the softmax parametrization}]{theorem}{softmaxrates}
    With softmax parametrization, under \ass{1}, \con{4}, and \ass{8}:
    \begin{enumerate}
        \item[] \ass{8}. The optimal policy is unique: $\forall s, \; q_\star(s,a_1)=q_\star(s,a_2)=v_\star(s)$ implies $a_1 = a_2$,
    \end{enumerate}
    the sequence of value functions $q_t$ converges asymptotically as follows:
    \begin{align}
        \exists t_0, \text{ such that } \forall t\geq t_0, \quad v_\star(s) - v_t(s) \leq \frac{8 |\mathcal{A}|(v_{\mytop}-v_{\mybot})}{(1-\gamma)\min_{s\in\text{supp}(d_{\pi_\star,\gamma})}\delta(s)\sum_{t'=t_0}^{t-1} \eta_{t'} d_{t'}(s)},
    \end{align}
    where $\delta(s)=v_\star(s)-\max_{a\in\mathcal{A}/\{\pi_\star(s)\}}q_\star(s,a)$ is the gap with the best suboptimal action in state $s$, $v_{\mytop}$ (resp. $v_{\mybot}$) is the maximal (resp. minimal) value, and $\textnormal{supp}(d_{\pi_\star,\gamma})$ denotes the support of the distribution of the optimal policy.
    \label{thm:softmax_rates}
\end{restatable}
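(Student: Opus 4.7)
The plan is to obtain the rate by a per-action analysis in the softmax simplex: show that each suboptimal probability $\pi_t(a|s)$ decays at rate $\mathcal{O}(1/\sum_{t'=t_0}^{t-1}\eta_{t'}d_{t'}(s))$ for every $s \in \text{supp}(d_{\pi_\star,\gamma})$, then lift this bound into a value bound through the performance-difference lemma. The approach mirrors the ``local-regime'' strategy used in the softmax bandit literature but with the off-policy state-weighted update, and uses \ass{8} to secure a positive action gap $\delta(s)>0$ in every relevant state. First, apply \thm{} \ref{thm:optimality} together with Corollary \ref{cor:convergence}: under \ass{1} and \con{4}, $q_t \to q_\star$ uniformly. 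Combined with \ass{8}, a sub-sequential compactness argument shows that $\pi_t(a^*(s)|s) \to 1$ for all $s \in \text{supp}(d_{\pi_\star,\gamma})$, writing $a^*(s) := \pi_\star(s)$. Pick $t_0$ large enough that for all $t \geq t_0$ and all such $s$: (i) $q_t(s,a^*) - q_t(s,a) \geq \delta(s)/2$ for every $a \neq a^*$, and (ii) $\pi_t(a^*|s) \geq 1/2$.

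Fix such an $s$ and any suboptimal $a$. The softmax update reads
\begin{align*}
\pi_{t+1}(a|s) = \pi_t(a|s)\,\frac{\exp\bigl(\eta_t d_t(s)\,\pi_t(a|s)(q_t(s,a)-v_t(s))\bigr)}{Z_t(s)}.
\end{align*}
Using the identity $q_t(s,a^*) - v_t(s) = \sum_b \pi_t(b|s)(q_t(s,a^*) - q_t(s,b))$ together with (i)--(ii), one derives
\begin{align*}
\pi_{t+1}(a|s) \leq \pi_t(a|s) - c\,\eta_t d_t(s)\,\delta(s)\,\pi_t(a|s)^2
\end{align*}
for some universal $c > 0$. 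The squared factor on the right is what produces a $1/t$-style rate rather than exponential decay. Rewriting it as $1/\pi_{t+1}(a|s) \geq 1/\pi_t(a|s) + c\,\eta_t d_t(s)\,\delta(s)$ and telescoping yields
\begin{align*}
\pi_t(a|s) \leq \frac{1}{c\,\delta(s)\sum_{t'=t_0}^{t-1} \eta_{t'} d_{t'}(s)};
\end{align*}
summing this over the at most $|\mathcal{A}|-1$ suboptimal actions explains the $|\mathcal{A}|$ in the numerator of the stated bound.

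To convert the policy bound into a value bound, apply the performance-difference lemma with $\pi_\star$ as reference policy:
\begin{align*}
v_\star(s) - v_t(s) = \frac{1}{1-\gamma}\,\mathbb{E}_{s' \sim \tilde d_{\pi_\star,\gamma}(\cdot|s)}\bigl[q_t(s', a^*(s')) - v_t(s')\bigr],
\end{align*}
where $\tilde d_{\pi_\star,\gamma}$ denotes the normalised discounted occupancy of $\pi_\star$ started at $s$. Since this occupancy is supported on $\text{supp}(d_{\pi_\star,\gamma})$, the bound on $1-\pi_t(a^*|s')$ from the previous step applies at every sampled $s'$. Combining with the coarse estimate $q_t(s',a^*) - v_t(s') \leq (1 - \pi_t(a^*|s'))(v_{\mytop} - v_{\mybot})$ and taking a worst case over $s' \in \text{supp}(d_{\pi_\star,\gamma})$ yields the claimed inequality, with the numerical constant $8$ absorbing $1/c$ and the sum over suboptimal actions.

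The main obstacle is the derivation of the quadratic recursion: the softmax normalisation $Z_t(s)$ couples all actions, so turning the multiplicative update into a clean decreasing-quadratic inequality requires a careful second-order expansion of the exponentials and then signing the resulting terms via the advantage decomposition from (i). If a direct per-action analysis proves unwieldy, a fallback is to track $\log(\pi_t(a^*|s)/\pi_t(a|s))$ for each suboptimal $a$ via a discrete gradient-domination inequality of the style standard in the softmax policy-gradient literature; this typically recovers the same dependence on $|\mathcal{A}|$, $\delta(s)$, and $\sum \eta d$ at the cost of an additional KL-divergence intermediate.
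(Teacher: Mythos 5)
Your proposal is correct in substance and follows the same overall strategy as the paper: enter a ``local regime'' $t\geq t_0$ where $q_t$ is within $\delta(s)/2$ of $q_\star$ and $\pi_t(a^*|s)\geq 1/2$, derive a decreasing-quadratic recursion whose telescoped reciprocal gives the $1/\sum_{t'}\eta_{t'}d_{t'}(s)$ decay of the suboptimal mass, and convert to a value bound. The one place where your primary route is weaker than you suggest is the normaliser: writing $\pi_{t+1}(a|s)=\pi_t(a|s)e^{u_a}/Z_t(s)$ with $u_b=\eta_t d_t(s)\pi_t(b|s)\textnormal{adv}_t(s,b)$, the bound $Z_t(s)\geq 1$ is \emph{not} automatic ($\sum_b\pi_t(b|s)^2\textnormal{adv}_t(s,b)$ can be negative when the suboptimal mass is still large relative to $\delta(s)/(v_{\mytop}-v_{\mybot})$); it only holds once $1-\pi_t(a^*|s)$ is small enough, which is fine for an asymptotic statement but must be folded into the choice of $t_0$. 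The paper sidesteps this entirely by tracking the parameter gap $(\theta_t)_{s,a^*}-\max_{a\neq a^*}(\theta_t)_{s,a}$ (equivalently $-\log$ of the worst probability ratio), for which $Z_t$ cancels: it lower-bounds this gap by an auxiliary sequence $X_{t+1}=X_t+\frac{\delta(s)}{8}\eta_t d_t(s)e^{-X_t}$ and compares to the ODE solution $\log(\frac{\delta(s)}{8}\int\eta\,d+e^{X_{t_0}})$ --- which is exactly your quadratic recursion applied to $e^{-X_t}$, i.e.\ your stated fallback is literally the paper's method. The final step also differs cosmetically (you invoke the performance-difference lemma with $\pi_\star$ as reference; the paper unrolls $v_\star(s)-v_t(s)$ one step at a time along $a^*$), but both produce the same $\frac{1}{1-\gamma}(v_{\mytop}-v_{\mybot})$ factor and the $\min$ over $\textnormal{supp}(d_{\pi_\star,\gamma})$ of the product $\delta(s)\sum_{t'}\eta_{t'}d_{t'}(s)$.
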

If $(d_t)$ satisfies the additional mild condition that the support of $d_\star$ is covered \textit{on average}:
\begin{enumerate}
    \item[] \con{5}. $\forall s\in\textnormal{supp}(d_\star), \lim_{t\to\infty}\frac{1}{t}\sum_{t'=0}^t \eta_{t'}d_{t'}(s) \doteq e_{\mybot}(s) > 0$,
\end{enumerate}
then we obtain the following convergence rate in $\mathcal{O}(t^{-1})$:
\begin{align}
    \exists t_0, \text{ such that } \forall t\geq t_0, \quad v_\star(s) - v_t(s) \leq \frac{8|\mathcal{A}| (v_{\mytop}-v_{\mybot})}{(t-t_0)(1-\gamma) \min_{s\in\textnormal{supp}(d_{\pi_\star,\gamma})} \delta(s) e_{\mybot}(s)}.
\end{align}

\con{5} is implied by \ass{2} and \con{3}. Alternately, assuming $\eta_t\geq\eta_{\mybot}>0 \;\forall t$, it is verified by a uniform distribution in which case $e_{\mybot}(s)=\frac{\eta_{\mybot}}{|\mathcal{S}|}$, or by an on-policy distribution $d_{\pi_t}$ that converges to the distribution of some optimal policy $d_{\pi_\star}$ in which case $e_{\mybot}(s)=\eta_{\mybot}d_{\pi_\star}(s)$. Also, note that \con{4} is required for optimality, but not for convergence rates with respect to $q_\infty$. 
\thm{} \ref{thm:softmax_rates} establishes asymptotic rates in $\mathcal{O}\left(t^{-1} |\mathcal{S}| |\mathcal{A}| (1-\gamma)^{-2}\right)$, to be compared with $\mathcal{O}\left(t^{-1} |\mathcal{S}|^2 |\mathcal{A}|^2 (1-\gamma)^{-6}\right)$ \cite{mei2020global}. The gap is partially explained by dropping \con{3} and allowing any learning rate (see \app{} \ref{app:softmax-bounds} for a thorough comparison). It is possible, see \thm{} \ref{thm:ass3_softmax} of \app{} \ref{app:theory}, to show that under \ass{2}, a condition on the learning rate, and a bounding condition on $d_t$ ($\exists d_{\mytop} \geq d_{\mybot} > 0$ such that $\forall s,t,\;d_{\mytop} \geq d_t(s) \geq d_{\mybot}$), we have: $t_0 \leq \mathcal{O}\left(\frac{|\mathcal{S}|}{(1-\gamma)^7} \frac{d_{\mytop}}{d_{\mybot}} \frac{1}{\min_s \delta(s)}\right)$.
% However, it does not say anything about how large $t_0$ might be, it is a purely asymptotic result. 
% Under extra assumptions, \ass{2} in particular, we show in \thm{} \ref{thm:ass3_softmax} of \app{} \ref{app:theory} that: $t_0 \leq \mathcal{O}\left(\frac{|\mathcal{S}|}{(1-\gamma)^6} \frac{d_{\mytop}}{d_{\mybot}} \frac{1}{\min_s \delta(s)}\right)$ where $d_{\mytop} = \sup_{t,s} d_t(s)$ and $d_{\mybot} = \inf_{t,s} d_t(s)$ are assumed to live in $]0, \infty[$ (also see~\cite{mei2020global}).

\subsection{Related work}

\cite{Degris2012,Imani2018,Zhang2019} study the off-policy, continuing setting paired with the average value objective. That objective is either expressed with a state visitation coming from the behavioral policy (the excursion objective) or from the trained policy (the alternative life objective). Its gradient can be computed exactly or approximated and used to optimize the objective. The weights assigned to each state during an update stem from the gradient computation (exactly as in the discounted return gradient we study). It is an interesting question, left for future work, to understand whether those weights have desirable properties from a convergence standpoint. In particular, it is possible that a condition akin to \con{4} exists in the continuing objective case.

From a proof technique perspective, \cite{agarwal2019optimality} studies gradient ascent and, as a byproduct, can rely on standard optimization results: strong convexity and convergence of gradients to 0. We cannot do so as our updates are not gradients anymore. More precisely, Lemma C2 in~\cite{agarwal2019optimality} uses the strong convexity of the objective function to prove that following the gradient results in a value improvement (assuming a learning rate sufficiently small). We provide a more general (we do not need the condition on the learning rate) and, in our admittedly biased opinion, more elegant proof of that lemma in our Theorem 1. Second, their Lemma C5 uses (i) the convergence of the gradient to 0 (a standard result with gradient as-/des-cent), and (ii) the assumption that all states have a non-vanishing density to infer that, in the limit, the learnt policy does not assign any mass to states that have a non-zero advantage. Neither (i) nor (ii) hold in our setting, we had to leverage \con{4} instead.

\section{Dr Jekyll and Mr Hyde: an actor-critic with convergence guarantees}
\label{sec:algo}
\subsection{Conditions for an actor-critic algorithm to converge to optimality}
\label{sec:conditions}
\con{4} states the necessary and sufficient condition for optimal planning with the exact model. In a reinforcement learning setting however, the model is not known; the policy must be learnt from samples collected in the environment. As a consequence, \ass{1} cannot be made anymore, we need to rely on an exploration condition instead:
\begin{enumerate}
    \item[] \con{6}. Each state-action pair is explored infinitely many times: $\forall s,a,\,\lim_{t\to\infty} n_t(s,a) = \infty,$
\end{enumerate}
where $n_t(s,a)$ is the count of samples collected for the state-action pair $(s,a)$ at time $t$.

In finite MDPs, \con{6} provides a necessary and sufficient condition for having an unbiased estimator of the value. As is customary, we call this estimator the critic and denote it $\mathring{q}_t$. \con{6} is necessary because the required statistical precision can only be achieved when the number of samples tends to infinity; it is sufficient as guaranteed by many off-policy policy evaluation algorithms from the literature \cite{Precup2000,Thomas2016}. Classic concentration bounds, such as Hoeffding's inequality, tell us that with $\mathcal{O}(1/\xi^2)$ samples in every state, a $\xi$-accurate critic can be achieved. Under \con{4} and \con{6}, there thus exists a timestep after which updates based on $\mathring{q}_t$ are sufficiently close to the true ones for a continuity argument to show that the policy improves eventually\footnote{A formal proof would require dealing with several technical challenges due to the stochasticity of the value updates that break the monotonicity of the estimator accuracy. We leave it for future work.} (as guaranteed by theory under \ass{1}).

Our results till now have assumed a finite MDP and models with sufficient capacity. In larger domains, those assumptions do not hold anymore, and therefore our theory does not apply. For instance, \cite{DBLP:journals/corr/abs-1906-07073} proves that using the on-policy undiscounted update $d_{\pi_t,1}$ instead of the policy gradient can lead to highly suboptimal policies under state aliasing. Their counter-example applies to our policy updates as well. Nevertheless, we believe that state aliasing is a worst-case scenario and conjecture that it does not happen to neural networks thanks to their high expressive capacity. Also note that this concern with respect to distribution shift is general and could be formulated for any neural model, including supervised models, or in RL the purely value-based ones such as DQN that are frequently trained off-policy. The consensus in the literature is that neural models do not suffer too much from distribution shift as long as the testing set distribution is well covered by the training set. Since appropriate coverage of the state-action space is actually the final objective of our off-policy policy updates, we expect minimal impact on this dimension, though it does remain to be formally demonstrated.
% Still, given the high expressiveness of neural networks, we conjecture that this kind of extreme state aliasing does not happen in practice. 
Our empirical results from \sect{} \ref{sec:expe-deep} support this conjecture. The theoretical study of the function approximation setting is left for future work.

\subsection{Our solution to enforce \con{4} and \con{6}}
\paragraph{Enforcing \con{4}} \con{4} defines the necessary and sufficient condition for asymptotic convergence to optimality. However, we will see in \sect{} \ref{sec:expes-exact} that in difficult planning tasks, \con{4} can be insufficient for convergence to happen in a reasonable amount of time. Below, we therefore introduce \con{4-s}, a condition stronger\footnote{\con{4-s} is identical to \con{5}, but applied to the full state space $\mathcal{S}$ (it thus implies \con{5}).} than \con{4}, as well as two techniques for \con{4-s} to be verified:
\begin{enumerate}
    \item[] \con{4-s}. $\forall s\in\mathcal{S}, \quad \lim_{T\to\infty}\frac{1}{T}\sum_{t=0}^T \eta_t d_t(s)\geq d_{\mybot}(s)>0.$
\end{enumerate}

The first technique to enforce \con{4-s} is to apply the approximate expected policy update \cite{Silver2014,lillicrap2015continuous,Ciosek2018}:
    \begin{align}
         \widehat{U}(\theta,s) &\doteq \sum_{a\in\mathcal{A}} \mathring{q}(s, a) \nabla_{\theta} \pi(a|s). \label{eq:expected_update}
    \end{align}
The expected policy update is deterministic in the sense that it does not require sampling actions. As there exists a deterministic optimal policy, this in turn implies that the learning rate $\eta_t$ does not need to satisfy the second Robbins-Monro condition: $\sum_t \eta_t^2 < \infty$, and thus may be constant $\eta_t\doteq\eta$.

The second technique to enforce \con{4-s} is to ensure that the density of updates $d_t$ does not decay to 0 for any state $s$. This can be obtained by maintaining a constant proportion $o_t$ of off-policy actor updates in order to cover the whole state space. We propose to do so by recording two replay buffers: one with on-policy samples, \textit{i.e.} samples collected with an exploitation policy, and another with off-policy samples, \textit{i.e.} samples collected with an exploration policy. 

To the best of our knowledge, this type of prioritized experience replay has never been used in this fashion, nor to this effect. We note that off-policy policy gradient is a concept that exists in the literature, but it refers to the application of policy gradients from batch samples~\cite{Liu2020}. Following the success of DQN~\cite{mnih2015human}, experience replays have also been used in actor-critic methods~\cite{Wang2016} which bears some resemblance to our suggestion. Finally, off-policy actor-critic with shared prioritized experience replay~\cite{Schaul2015} has been applied to large-scale experiments with distributed agents~\cite{Horgan2018,Schmitt2020}.

\paragraph{Enforcing \con{6}} Many RL algorithms (actor-critic or purely value-based) ensure some form of exploration. They broadly form two groups: dithering exploration (\textit{e.g.}, epsilon greedy, softmax, entropy regularization \cite{Tokic2010,sutton2018reinforcement,haarnoja2018soft}), and deep exploration (\textit{e.g.}, UCB, thompson sampling, density constraints \cite{Ortner2007,Bellemare2016,Osband2016,Qin2021}). Strong cases have been made against dithering exploration, arguing in particular its inability to ensure visits to all states and therefore convergence to optimality \cite{Osband2019}. In spite of these arguments (that we recall in \app{} \ref{app:policy_regularization}), there are still only very few actor-critic algorithms that realize deep exploration \cite{Ciosek2019,Roy2020}. We conjecture that this relates to the following observations: i) exploration involves a moving objective, hence a non-stationarity of the desired policy, ii) actor-critic algorithms have a structural inertia in their policy (in contrast to value-based methods that can completely switch policy when an action's value overtakes another's). As a consequence, a dual exploration-exploitation actor-critic algorithm takes a lot of time to switch from deep exploration to exploitation, and back, making it inefficient. To avoid this issue, we propose to train two policies, a pure exploration one and a pure exploitation one. As an added benefit, they will be used to constitute the on-policy and off-policy replay buffers introduced in the second technique above.

\begin{algorithm}[t]
\caption{Dr Jekyll \& Mr Hyde algorithm. After initialization of parameters and buffers, we enter the main loop. At every time step, an action, chosen by the behavioral policy, is executed in the environment to produce a transition $\tau_t$ (line 5). $\tau_t$ is stored in the replay buffer of the personality in control (either Dr Jekyll or Mr Hyde, line 6). If the trajectory is done, the algorithm samples a new personality to be in control during the next one (line 7). Then, the updates of the models start (line 8). The updates for Mr Hyde's policy $\tilde{\pi}$ and Dr Jekyll's critic $\mathring{q}$ are underspecified, and may be any algorithm in the literature satisfying the exploration (for Mr Hyde) or unbiased (for Dr Jekyll) conditions. When $(\epsilon_t) = 0$, \jh{} amounts to on-policy expected updates from a single replay buffer.}
\multiline{\textbf{Input:} Scheduling of exploration $(\epsilon_t)$, of off-policiness $(o_t)$ and of actor learning rate $(\mathring{\eta}_t)$.}
\begin{algorithmic}[1]
    \State Initialize Dr Jekyll's replay buffer $\mathring{D}=\emptyset$, actor $\mathring{\pi}$, and critic $\mathring{q} $. \Comment{exploitation agent}
    \State Initialize Mr Hyde's replay buffer $\tilde{D}=\emptyset$, and policy $\tilde{\pi}$. \Comment{exploration agent}
    % \State Initialize $\tilde{D}=\emptyset$ and parameters $\tilde{\theta}$ and $\tilde{\omega}$ of Mr Hyde's actor $\tilde{\pi}\doteq \pi_{\tilde{\theta}}$ and critic $\tilde{q} \doteq q_{\tilde{\omega}}$.
    \State Set the behavioural policy and working replay buffer to Dr Jekyll's: $\pi_{b} \leftarrow \mathring{\pi}$ and $D_{b} \leftarrow \mathring{D}$.
    % \State Sample initial state: $s_0\sim p_0$.
    \For{$t=0$ to $\infty$}
        % Online behaviour 
        \State Sample a transition $\tau_t = \langle s_t,a_t\sim\pi_{b}(\cdot|s_t),s_{t+1}\sim p(\cdot|s_t,a_t),r_t\sim r(\cdot|s_t,a_t)\rangle$.
        \State Add it to the working replay buffer $D_b\leftarrow D_b\cup\{\tau_t\}$.
        \InlineIfThen{$\tau$ was terminal,}{$(\pi_{b}, D_b) \leftarrow (\tilde{\pi}, \tilde{D})$ w.p. $\epsilon_t$, $(\pi_{b}, D_b) \leftarrow (\mathring{\pi}, \mathring{D})$ otherwise.}
        % Models updates
        % \For{$K$ update steps}
        \If{Update step,}
        % \State Perform stochastic update step on Dr Jekyll's critic $\mathring{\omega}$ with $\tau$ (depends on its algorithm).
        % \State \multiline{Perform deterministic update step in state $s$ on both actors $\theta\in\{\mathring{\theta},\tilde{\theta}\}$ with their respective critics $q\in\{\mathring{q},\tilde{q}\}$, policies $\pi\in\{\mathring{\pi},\tilde{\pi}\}$, and actor learning rates $\eta\in\{\mathring{\eta},\tilde{\eta}\}$:}
            \State $\tau \doteq \langle s,a,s',r \rangle \sim \tilde{D}$ w.p. $o_t$, $\tau \doteq \langle s,a,s',r \rangle \sim \mathring{D}$ otherwise.
            \State Update Mr Hyde's policy $\tilde{\pi}$ on $\tau$. \Comment{\textit{e.g.} with $Q$-learning trained on UCB rewards}
            \State Update Dr Jekyll's critic $\mathring{q}$ on $\tau$. \Comment{\textit{e.g.} with SARSA update}
            \State Expected update of Dr Jekyll's actor $\mathring{\pi}$ in state $s$. \Comment{\eq{} \eqref{eq:expected_update}}
        \EndIf
        % \EndFor
    \EndFor
\end{algorithmic}
% \vspace{-10pt}
\label{alg:J&H}
\end{algorithm}

\subsection{Dr Jekyll and Mr Hyde algorithm (\jh{})}
The objective of this section is to introduce a novel algorithm that satisfies conditions \con{4} (or \con{4-s}) and \con{6} by design. To do so, we maintain a mixture of two policies:
\begin{itemize}
    \item Dr Jekyll $\mathring{\pi}_t$ is a pure exploitation policy,
    \item Mr Hyde $\tilde{\pi}_{t}$ is a pure exploration policy.
\end{itemize}

At the beginning of a new trajectory, Mr Hyde $\tilde{\pi}_{t}$ (resp. Dr Jekyll $\mathring{\pi}_{t}$) is chosen with probability $\epsilon_t$ (resp. $1-\epsilon_t$) and used to generate the full trajectory. Dr Jekyll $\mathring{\pi}_t$ is trained with the update rule of \eq{} \eqref{eq:policyupdate}, where $d_t$ is defined by prioritized sampling over the experience replays, $q_t$ is replaced with an unbiased estimator $\mathring{q}$ of the value of $\mathring{\pi}_t$, and $\mathring{\theta}_t$ are the parameters of $\mathring{\pi}_t$. $\tilde{\pi}_{t}$ is a pure exploratory policy, designed to verify almost surely: $\forall (s,a)\in \mathcal{S}\times\mathcal{A},\;\lim_{t\to\infty} n_{|\tilde{D}|}(s,a) \geq \tilde{d}_{\mybot} > 0,$, where $n_{|\tilde{D}_t|}(s,a)$ is the count of samples $(s,a)$ in its experience replay.
We note that a uniform policy satisfies this condition, but with a very small constant $\tilde{d}_{\mybot}$ which compromises convergence in a reasonable time. Any deep exploration algorithm should be guaranteeing it\footnote{Efficient exploratory algorithms are a challenge in large environments. The design of such algorithms is beyond the scope of this paper.}. In our finite MDP experiments, we implement $q$-learning with UCB rewards \cite{Bellemare2016}: $\tilde{\pi}_{t}\doteq \argmax_{a\in\mathcal{A}} \tilde{q}_{t}$ where $\tilde{q}_{t}$ is trained to predict the expectation of the discounted sum of rewards: $\tilde{r}(s,a)\doteq \frac{1}{\sqrt{n_t(s,a)}}$ (see full \alg{} \ref{alg:finiteMDP-JH} in \app{} \ref{app:expes-sample}). In our deep learning experiments, Hyde is a Double-DQN~\cite{vanhasselt2015reinforcement} trained to maximize an exploration bonus based on Random Network Distillation~\cite{Burda2019} (\sect{} \ref{sec:expe-deep}).

The advantages of separating the exploration from the exploitation policy are the following. First, it is easier to specify the exploration/exploitation trade-off and get full control on the exploration
requirements for condition \con{6}: $\forall s,a,\,\sum_t d_{\pi_t}(s,a) = \infty$. Second, Jekyll's actor $\mathring{\pi}$ is not optimized under a moving objective which would otherwise induce a high level of instability on the policy. Third, one can define the on-policiness/off-policiness $o_t$ by recording two separate replay buffers $\mathring{D}$ and $\tilde{D}$, for trajectories respectively controlled by Dr Jekyll and Mr Hyde. This offers full control on condition \con{4}/\con{4-s} and on the asymptotic behaviour of $\sum_t \eta_t d_t(s)$.

These various observations lead to the design of the Dr Jekyll and Mr Hyde algorithm (\jh{}), formally detailed in \alg{} \ref{alg:J&H}. 

\section{Experiments}
\label{sec:expes}
\subsection{Domains}
\label{sec:domain}
    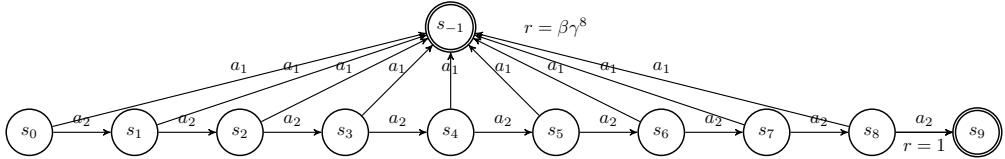
\begin{figure}[h]
        \begin{center}
            \scalebox{0.7}{
                \begin{tikzpicture}[->, >=stealth', scale=0.6 , semithick, node distance=2cm]
                    \tikzstyle{every state}=[fill=white,draw=black,thick,text=black,scale=1]
                    \node[state]    (x0)                {$s_0$};
                    \node[state]    (x1)[right of=x0]   {$s_1$};
                    \node[state]    (x2)[right of=x1]   {$s_2$};
                    \node[state]    (x3)[right of=x2]   {$s_3$};
                    \node[state]    (x4)[right of=x3]   {$s_4$};
                    \node[state]    (x5)[right of=x4]   {$s_5$};
                    \node[state]    (x6)[right of=x5]   {$s_6$};
                    \node[state]    (x7)[right of=x6]   {$s_7$};
                    \node[state]    (x8)[right of=x7]   {$s_8$};
                    \node[state,accepting]    (x9)[right of=x8]   {$s_9$};
                    \node[state,accepting]    (x-1)[above of=x4]   {$s_{-1}$};
                    \node[] (y)[above of=x5] {$r=\beta\gamma^8$};
                    \path
                    (x0) edge[above]    node{$a_1$}     (x-1)
                    (x1) edge[above]    node{$a_1$}     (x-1)
                    (x2) edge[above]    node{$a_1$}     (x-1)
                    (x3) edge[above]    node{$a_1$}     (x-1)
                    (x4) edge[above]    node{$a_1$}     (x-1)
                    (x5) edge[above]    node{$a_1$}     (x-1)
                    (x6) edge[above]    node{$a_1$}     (x-1)
                    (x7) edge[above]    node{$a_1$}     (x-1)
                    (x8) edge[above]    node{$a_1$}     (x-1)
                    (x0) edge[above]    node{$a_2$}     (x1)
                    (x1) edge[above]    node{$a_2$}     (x2)
                    (x2) edge[above]    node{$a_2$}     (x3)
                    (x3) edge[above]    node{$a_2$}     (x4)
                    (x4) edge[above]    node{$a_2$}     (x5)
                    (x5) edge[above]    node{$a_2$}     (x6)
                    (x6) edge[above]    node{$a_2$}     (x7)
                    (x7) edge[above]    node{$a_2$}     (x8)
                    (x8) edge[above]    node{$a_2$}     (x9)
                    (x8) edge[below]    node{$r=1$}   (x9);
                \end{tikzpicture}
            }
            \caption{Deterministic chain MDP. Initial state is $s_0$. Reward is 0 everywhere except when accessing final states $s_{-1}$ and $s_9$. Reward in $s_{-1}$ is set such that $q(s_0,a_1)=\beta q_\star(s_0,a_2)$, with $\beta\in[0,1)$.}
            \label{fig:chain}
        \end{center}
    \end{figure}
    \textbf{Chain Domain} The chain domain is designed to measure the ability of algorithms to overcome immediate rewards pushing the policy gradient towards suboptimal policies. In every state $s_k$, the agent has the opportunity to play action $a_1$ and receive an immediate reward of $\beta\gamma^{|\mathcal{S}|-2}$, or to play $a_2$ and progress to next state $s_{k+1}$ without any immediate reward. A reward of 1 is eventually obtained when reaching state $s_{|\mathcal{S}|-1}$. \fig{}~\ref{fig:chain} represents a chain of size 10. We report the normalized performance: $\overline{\mathcal{J}}_\pi = \frac{\mathcal{J}_\pi-\mathcal{J}_{\mybot}}{\mathcal{J}_\star - \mathcal{J}_{\mybot}}$, where $\mathcal{J}_\star$ is the optimal performance and $\mathcal{J}_{\mybot}\doteq q(s_0,a_1) = \beta \gamma^{|\mathcal{S}|-2}$.
    
    \textbf{Random MDPs} The random MDPs domain is designed to test the algorithms in situations where exploration is not an issue. Indeed, by its design of stochastic transition functions, random MDPs will have a non-null chance to visit every state whatever the behavioural policy. It is therefore a domain where we expect policy gradient updates to perform well, perhaps optimally, and hope that our modified updates still perform comparably. We reproduce the random MDPs environment published in App. B.1.3 of \cite{Laroche2019}. We report the normalized performance: $\overline{\mathcal{J}}_\pi = \frac{\mathcal{J}_\pi-\mathcal{J}_{u}}{\mathcal{J}_\star - \mathcal{J}_{u}}$, where $\mathcal{J}_\star$ is the optimal performance and $\mathcal{J}_{u}$ is the performance of the uniform policy. The full description of both domains is available in \app{} \ref{app:domains}.
    
\subsection{Finite MDP policy planning (\ass{1})}
\label{sec:expes-exact}
We test performance against time, learning rate $\eta$ of the actor, MDP parameters $|\mathcal{S}|$ and $\beta$, off-policiness $o_t$, and policy entropy regularization weight $\lambda$, with both direct and softmax parametrizations, on the chain and random MDPs. The full report is available in \app{} \ref{app:expes-exact}. 

On the chain domain, we confirm that enforcing updates with $d_t\doteq o_t d_u + (1-o_t)\frac{d_{\pi_t,\gamma}}{\lVert d_{\pi_t,\gamma} \rVert_1}$ including an off-policy component $o_t>0$ on a uniform state distribution $d_u(s)\doteq\frac{1}{|\mathcal{S}|}$ helps path discovery and policy planning, while on-policy updates fail at converging to the optimal policy, even with policy entropy regularization. We also observe that while $o_t\in\Omega(t)$ enforces \con{4}, and therefore guarantees convergence to optimality, it may not happen in a reasonable amount of time, and a constant off-policiness is preferable. By sweeping over the chain parameters $\beta$ and $|\mathcal{S}|$, we observe that on-policy updates are sensitive to both: even with a small $\beta=0.1$, a reasonably sized chain $|\mathcal{S}|=15$ cannot be solved. In contrast, $o_t=0.5$ converges fast to optimality even with $\beta=0.95$ and $|\mathcal{S}|=25$.

On the random MDPs domain, we observe empirically that discounted and undiscounted updates perform well but oftentimes stagnate at 99\% of the optimal performance; $d_t$ with an off-policy component performs better and gets even closer to optimality. Finally, we note that purely uniform updates: $o_t=1$ slow down training in the random MDPs experiment. We conclude that it is best to include both on-policy and off-policy components in $d_t$. These experiments also allow us to observe the biased convergence implied by policy entropy regularization.
% We validate our findings by running numerical experiments on two tasks. The first task is a (finite) chain that has been designed to be exploration-wise challenging and exhibit the limitations of the discounted and undiscounted updates respectively specified in equations \eqref{eq:discounted} and \eqref{eq:undiscounted}. We empirically confirm that enforcing updates with $d_t$ including an off-policy component helps path discovery and policy planning. The second task consists in solving (finite) random MDPs, that could account for any arbitrary task without any specific planning challenge. We empirically observe that discounted and undiscounted updates perform well in this setting but still oftentimes stagnate at 99\% of optimal performance, while $d_t$ including an off-policy component achieves even closer to optimality. Finally, we also note that a purely uniform $d_t$ slows down training in the random MDPs experiment. We conclude that it is best to include both on-policy and off-policy components in $d_t$.\romain{a developper}

\begin{figure*}[t]
	\centering
	\subfloat[Chain experiment]{
		\includegraphics[trim = 5pt 5pt 5pt 5pt, clip, width=0.32\columnwidth]{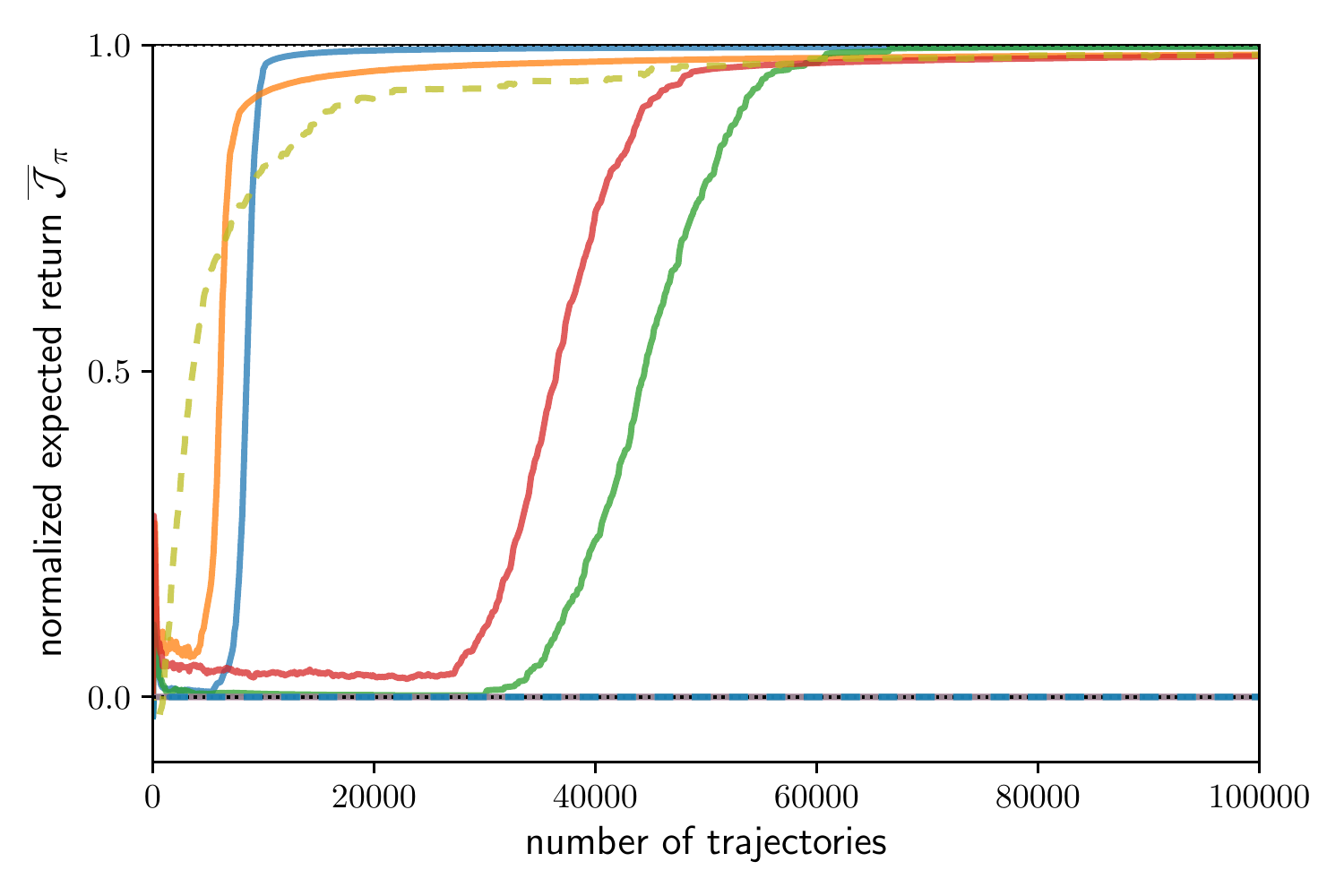}
		\label{fig:sample-chain-ucb-mean}
	}
	\subfloat[Random MDPs experiment]{
		\includegraphics[trim = 5pt 5pt 5pt 5pt, clip, width=0.32\columnwidth]{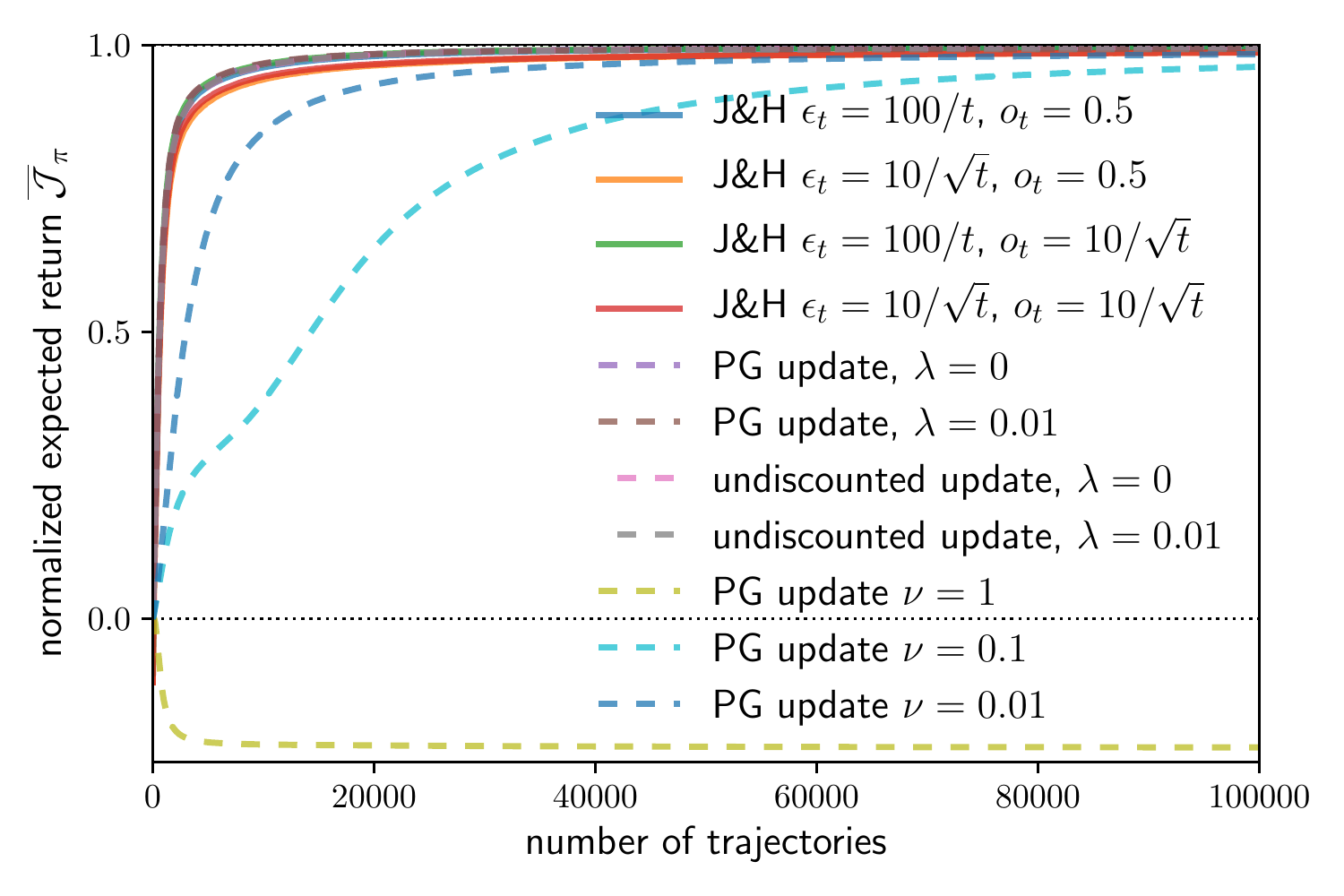}
		\label{fig:sample-garnets-ucb-mean}
	}
	\subfloat[Random MDPs zoomed in]{
		\includegraphics[trim = 5pt 5pt 5pt 5pt, clip, width=0.32\columnwidth]{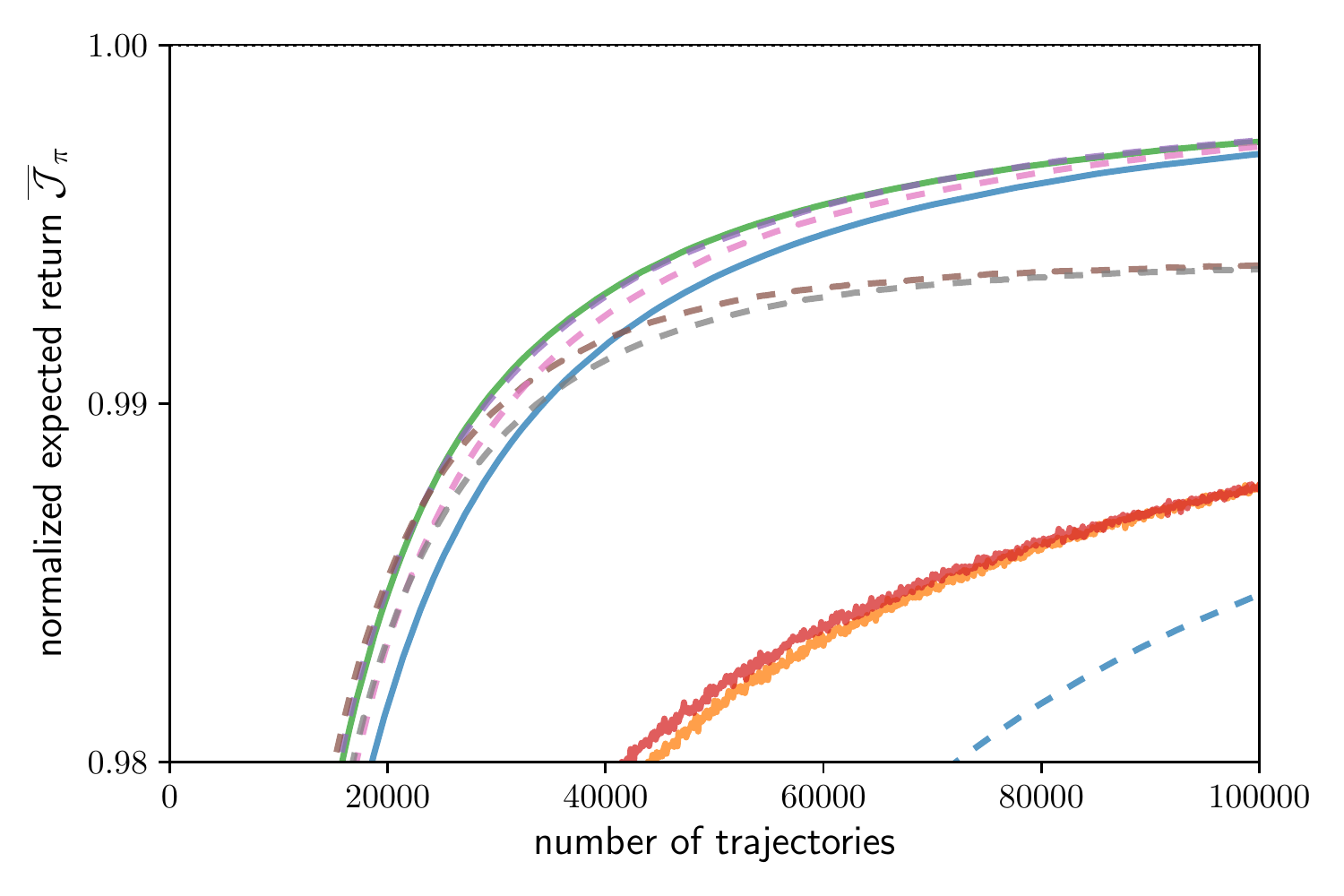}
		\label{fig:sample-garnets-ucb-mean-zoom}
	}
	\caption{RL experiments: normalized expected return vs. number of trajectories (200+ runs).}
		\label{fig:sample-garnets-ucb}
	\vspace{-10pt}
\end{figure*}

\subsection{Finite MDP reinforcement learning experiments}
\label{sec:expes-sample}
With the softmax parametrization, we tested \jh{} with various scheduling for $\epsilon_t$ and $o_t$ against on-policy (PG/undiscounted) updates, with/without policy entropy regularization (hyperparameter $\lambda$), and with/without UCB critic (UCB hyperparameter $\nu$). The full description of the algorithms implementations is available in \app{} \ref{app:algos}. Our policy planning experiments suggested that $o_t$ was best constant at $0.5$, the same holds in our RL experiments. \fig{} \ref{fig:sample-chain-ucb-mean} reveals that on-policy updates with/without policy entropy regularization cannot solve the chain experiment of size 10 with $\beta=0.8$. In contrast, the task is solved by on-policy updates with a strong UCB bonus $\nu=1$ for the critic, and by all \jh{} implementations. Concerning \jh{} scheduling hyperparameters, we observe that $o_t=0.5$ allows to identify the optimal policy significantly faster, and $\epsilon_t=\frac{100}{t}$ is sufficient and allows to converge faster asymptotically than $\epsilon_t=\frac{10}{\sqrt{t}}$. \fig{} \ref{fig:sample-garnets-ucb-mean} shows that all algorithms are able to solve the random MDP task. However, UCB critics with high $\nu\in\{0.1,1\}$ (including the only on-policy setting that succeeded on the chain task) fail to do so properly: they eventually will, but explore too much within the experimental time. If we look more closely at the tight convergence of every algorithm on \fig{} \ref{fig:sample-garnets-ucb-mean-zoom}, we observe that the policy entropy regularization incurs a bias and that off-policiness of updates of \jh{} does not slow down convergence compared to vanilla on-policy updates. Note that \jh{} with $\epsilon_t=\frac{10}{\sqrt{t}}$ performs worse only because we report the expected performance of the mixture of Dr Jekyll and Mr Hyde. The performance of Dr Jekyll alone is comparable, and even slightly better than vanilla on-policy updates (see \fig{} \ref{fig:sample-garnets-time-mean-jek} in \app{} \ref{app:expes-sample}).

For completeness, we also test performance against time, learning rates of the actor $\eta$ and critic $\eta_c$, MDP parameters $|\mathcal{S}|$ and $\beta$, off-policiness $o_t$, exploration $\epsilon_t$, and critic initialization $q_0$ with the softmax parametrization on the chain and random MDPs domains. See \app{} \ref{app:expes-sample} for the full report.

\subsection{Deep reinforcement learning experiments}\label{sec:expe-deep}

To conclude our empirical evaluation, we implemented a deep version of \jh{}, as described in Algorithm~\ref{alg:J&H}, by parametrizing the agents using deep neural networks (see \app{} \ref{app:expes-deep} for full details).

Dr Jekyll is a policy network, with a standard architecture, trained using the updates described in Eq.~\eqref{eq:expected_update}. As previously mentioned, any exploration algorithm can be used for Mr Hyde. In our experiments, we chose Random Network Distillation~\cite[RND]{Burda2019} to generate exploration bonuses. Two networks, one random, the target, and a second one, the predictor, are used to assess the novelty of an observed state via the distance between the output of the target and the output of the predictor on that state. Each time a given state is evaluated, the predictor is trained to predict the output of the target. The more a state is seen, the smaller the prediction error will be. The error is used as a reward signal for Mr Hyde, a standard Double-DQN~\cite[DDQN]{vanhasselt2015reinforcement} trained to maximize it. By doing so, Mr Hyde is incentivized to explore parts of the state space that have not been visited much yet. 

We train \jh{} on a version of the Four Rooms environment~\cite{SUTTON1999181}, a 15x15 grid split into four rooms (see \app{} \ref{app:expes-deep} for the exact layout). The agent, placed at random initially, needs to navigate to a fixed goal location, where it is granted a positive reward. For each step taken in the environment, the agent incurs a small negative reward. We consider two levels for the task. In level 1 (the original version of the game), the initial state distribution covers the entire state space, which corresponds to \ass{2} being verified. To make exploration harder, we also consider an initial state distribution that does not contain any state from the room where the goal is located, and call that task level 2. We compare \jh{} to DDQN and Soft-Actor Critic~\cite[SAC]{haarnoja2018soft}. For fair comparison, we also train these agents on the environment rewards augmented with the RND rewards used to train Mr Hyde, represented by the curves labelled DDQN+RND and SAC+RND. 

Results can be found in \fig{} \ref{fig:four-rooms}. On level 1, we see that \jh{} reaches quite fast the maximal score of $90$, The baselines do not perform as well and fail to converge to the optimal policy. Interestingly, even with \ass{2} verified, adding an RND bonus led to increased performance. We also notice that overall SAC outperforms DDQN. The same observations can be made on level 2. We note that on some seeds, \jh{} takes more time to learn the optimal policy, leading to visible plateaus and temporary high variance. However, a powerful property underlined by our experiments and offered by the decoupling of exploration from exploitation is the ability of \jh{}, unhindered by dithering and/or conflicting reward signals, to converge to optimality. Comparatively, the baselines get stuck in a mixed exploration/exploitation behavior and fail to reach the maximum score.

\begin{figure*}[t]
	\centering
	\subfloat[Level 1]{
		\includegraphics[trim = 5pt 5pt 5pt 5pt, clip, width=0.45\columnwidth]{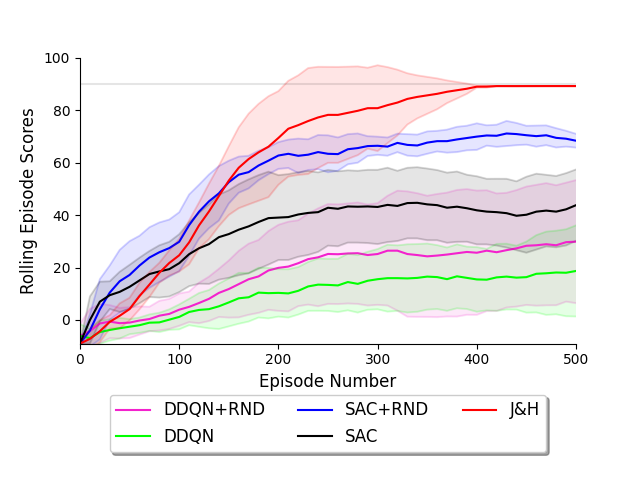}
		\label{fig:four-rooms-base}
	}
	\subfloat[Level 2]{
		\includegraphics[trim = 5pt 5pt 5pt 5pt, clip, width=0.45\columnwidth]{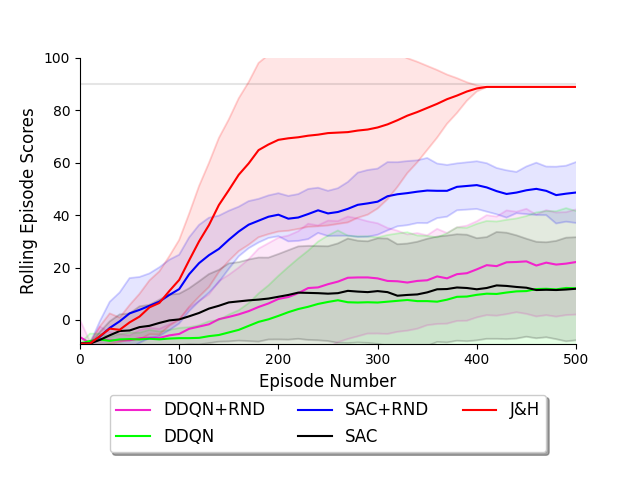}
		\label{fig:four-rooms-sto}
	}
	\caption{Deep RL experiments: score vs. number of training episodes, averaged over 10 seeds.}
		\label{fig:four-rooms}
	\vspace{-10pt}
\end{figure*}

\section{Contributions and limitations}
\label{sec:conclusion}
\myuline{Contributions:} We study a planning issue in actor-critic algorithms and tackle it by extending the policy gradient theory to \textit{policy updates} with respect to any state density. Under these generalized policy updates, we show convergence to optimality under a necessary and sufficient condition on the updates' state densities. We also significantly improve previous asymptotic convergence rates. We implement the principles prescribed by our theory in a novel algorithm, Dr Jekyll \& Mr Hyde (\jh{}), with a double personality: Dr Jekyll purely exploits while Mr Hyde purely explores. \jh{}'s independent policies allow to record two separate replay buffers: one on-policy (Dr Jekyll's) and one off-policy (Mr Hyde's), and therefore to update \jh{}'s models with a mixture of on-policy and off-policy updates. Beyond \jh{}, we define conditions for actor-critic algorithms to satisfy the requirements from our analysis. We extensively test \jh{} on finite MDPs and deep RL where it demonstrates superior planning abilities, and at least comparable asymptotic rates, to its competitors.

\myuline{Limitations and future work:} On the theory side, our non-asymptotic convergence rates are limited to the softmax and require \ass{2} and \con{3}. Although we are convinced our arguments hold, we did not develop the formal convergence proof in the RL setting (see second paragraph of \sect{} \ref{sec:conditions}). Finally, our theory does not tackle function approximation. In that context, we believe techniques from~\cite{agarwal2019optimality} could be useful. On the empirical side, we wish to emphasize that our deep RL experiments are to be considered as a proof of concept of the relevance of \jh{}'s principles in that setting. A thorough study, encompassing more environments and baselines, potentially both in discrete and continuous actions spaces, is left for future work.

\ack{}
We would like to thank Shangtong Zhang for our fruitful discussions and Alessandro Sordoni for helping us presenting and organizing the paper.

\bibliographystyle{plain}
\bibliography{biblioReport}

% \newpage
% \include{checklist}

\newpage
\appendix
\section{MDP notations and basic knowledge on policy gradients}
\label{app:notations}
\subsection{General notations}
\begin{itemize}
    \item $\Delta_\mathcal{X}$ is the simplex over set $\mathcal{X}$.
    \item $|\mathcal{X}|$ is the cardinality of set $\mathcal{X}$.
    \item $X\sim d$ means that random variable X is sampled from distribution $d$.
    \item $\mathbb{E}[f(X)|X\sim d]$ denotes the expectation of $f(X)$ when random variable $X$ is sampled from distribution $d$.
    \item $\mathbb{P}[X>x|Y=y]$ denotes the probability that random variable $X$ is greater than $x$ conditioned to the fact that random variable $Y$ equals $y$.
    \item Subscript $\top$ (resp. $\bot$) denotes the maximal (resp. minimal) value in a range. For instance, $r_{\mytop}$ denotes the maximal reward.
    \item $f(\mathcal{X})$ denotes the sum of $f(x)$ over set $\mathcal{X}$: $f(\mathcal{X})\doteq \sum_{x\in \mathcal{X}} f(x)$.
\end{itemize}

\subsection{Markov Decision Processes}
A Markov Decision Process (MDP) is a tuple $m=\langle\mathcal{S},\mathcal{A}, p, p_0, r, \gamma\rangle$, where:
\begin{itemize}
    \item $\mathcal{S}$ is the set of states,
    \item $\mathcal{A}$ is the set of actions,
    \item $p(s'|s,a):\mathcal{S}\times\mathcal{A}\to \Delta_{\mathcal{S}}$ is the probability of accessing state $s'$ after performing action $a$ in state $s$,
    \item $p_0\in\Delta_{\mathcal{S}}$ is the initial state probability,
    \item $r(s,a):\mathcal{S}\times\mathcal{A}\to \mathbb{R}$ is the (possibly stochastic) reward function,
    \item and $\gamma$ is the discount factor.
\end{itemize}

A stochastic policy $\pi(a|s): \mathcal{S}\to \Delta_{\mathcal{A}}$ determines the probability of performing action $a$ in state $s$ and therefore determines the behaviour of an agent. %In some parts of the article, we will also consider a special case of non-stationary policies: a mixture of two stochastic policies.
The objective of a planning algorithm is to maximize the expected return:
\begin{align}
    \mathcal{J}(\pi) &\doteq \mathbb{E}\left[\sum_{t=0}^\infty \gamma^t R_t\bigg| S_0\sim p_0, A_t\sim \pi(\cdot|S_t), S_{t+1}\sim p(\cdot|S_t,A_t), R_t\sim r(\cdot|S_t,A_t)\right].
\end{align}

\subsection{Values and advantages}
More generally, the state (resp. state-action) value function $v_\pi:\mathcal{S}\to\mathbb{R}$ (resp. $q_\pi:\mathcal{S}\times\mathcal{A}\to\mathbb{R}$) is the expected return starting from state $s$ (resp. after performing action $a$ in state $s$), when following policy $\pi$ afterwards:
\begin{align}
    v_\pi(s) &\doteq \mathbb{E}\left[\sum_{t=0}^\infty \gamma^t R_t\bigg| S_0=s, A_t\sim \pi(\cdot|S_t), S_{t+1}\sim p(\cdot|S_t,A_t), R_t\sim r(\cdot|S_t,A_t)\right],\\
    q_\pi(s,a) &\doteq \mathbb{E}\left[\sum_{t=0}^\infty \gamma^t R_t\bigg| S_0=s,A_0=a, A_t\sim \pi(\cdot|S_t), S_{t+1}\sim p(\cdot|S_t,A_t), R_t\sim r(\cdot|S_t,A_t)\right].
\end{align}

We have several notorious identities, for instance: 
\begin{align}
    \mathcal{J}(\pi) &= \mathbb{E}\left[v_\pi(S)| S_0\sim p_0\right],\\
    v_\pi(s) &= \mathbb{E}\left[q_\pi(s,A)| A\sim \pi(\cdot|s)\right],\\
    q_\pi(s,a) &= \mathbb{E}\left[R + \gamma v_\pi(S')| S'\sim p(\cdot|s,a), R\sim r(\cdot|s,a)\right].
\end{align}

We also define the advantage of an action $a$ (resp. a policy $\pi$) over a value $v$ in a given state $s$: 
\begin{align}
    \textnormal{adv}_{v}(s,a) &\doteq \mathbb{E}\left[R + \gamma v(S')| S'\sim p(\cdot|s,a), R\sim r(\cdot|s,a)\right] - v(s) \\
    \text{resp.}\quad \textnormal{adv}_{v}(s,\pi) &\doteq \mathbb{E}\left[\textnormal{adv}_{v}(s,A)|A\sim \pi(\cdot|s)\right].
\end{align}

By extension, we talk about advantage over a stochastic policy and write $\textnormal{adv}_{\pi}$ to denote the advantage over the value induced by this stochastic policy $\textnormal{adv}_{\pi}\doteq \textnormal{adv}_{v_\pi}$. For notational simplicity, we write $v_t$, $q_t$, $\textnormal{adv}_{t}$ instead of $v_{\pi_t}$, $q_{\pi_t}$ and $\textnormal{adv}_{\pi_t}$. A policy $\pi'$ is said to be advantageous over a value or another policy if its advantage is positive in all states $s\in\mathcal{S}$. During our analysis, we will constantly use the policy improvement theorem:
\begin{theorem}
    \label{th:improvement}
    (Policy Improvement Theorem, see, e.g., Eq 4.7 \& 4.8 in \cite{sutton2018reinforcement}) \\
    If $\forall s \in \mathcal{S}$, $\pi'$ is advantageous over $\pi$:
    \begin{align}
        \textnormal{adv}_{\pi}(s, \pi') \geq 0, \text{then } \forall s \in \mathcal{S}, v_{\pi'}(s) \geq v_\pi(s). \label{eq:improvment_value}
    \end{align}
    If the first inequality is strict in a given state $s$, so is the second inequality on $s$.
\end{theorem}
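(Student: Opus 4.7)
The plan is to iterate the advantage hypothesis along a trajectory rolled out by $\pi'$ and pass to the limit. First I would unpack the hypothesis via the definitions recalled in the appendix: $\textnormal{adv}_\pi(s,\pi') \geq 0$ is equivalent to $\mathbb{E}_{A\sim\pi'(\cdot|s)}[q_\pi(s,A)] \geq v_\pi(s)$. Combining this with the Bellman identity $q_\pi(s,a) = \mathbb{E}[R + \gamma v_\pi(S') \mid S_0 = s, A_0 = a]$ rewrites the hypothesis as
\begin{equation*}
v_\pi(s) \;\leq\; \mathbb{E}_{\pi'}\!\left[R_0 + \gamma v_\pi(S_1) \,\middle|\, S_0 = s\right],
\end{equation*}
where the subscript $\pi'$ means that $A_0$ is drawn from $\pi'(\cdot|s)$ and $S_1$ from $p(\cdot|s, A_0)$.

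Next I would iterate. Substituting the same bound for $v_\pi(S_1)$ inside the expectation (using the tower property), then for $v_\pi(S_2)$, and so on, after $T$ applications I obtain
\begin{equation*}
v_\pi(s) \;\leq\; \mathbb{E}_{\pi'}\!\left[\sum_{t=0}^{T-1} \gamma^t R_t + \gamma^T v_\pi(S_T) \,\middle|\, S_0 = s\right],
\end{equation*}
where the whole trajectory $(S_0, A_0, R_0, S_1, A_1, R_1, \dots)$ is now generated by $\pi'$. Since rewards are bounded (hence so is $v_\pi$) and $\gamma < 1$, the residual term $\gamma^T \mathbb{E}_{\pi'}[v_\pi(S_T) \mid S_0 = s]$ vanishes as $T \to \infty$. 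Passing to the limit and recognising the truncated discounted return as converging to $v_{\pi'}(s)$ yields $v_\pi(s) \leq v_{\pi'}(s)$, as claimed.

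For the strict part, suppose $\textnormal{adv}_\pi(s_0,\pi') > 0$ at a particular state $s_0$. Then the first application of the bound, at $t=0$ and starting from $s_0$, is strict: $v_\pi(s_0) < \mathbb{E}_{\pi'}[R_0 + \gamma v_\pi(S_1) \mid S_0 = s_0]$. Every subsequent substitution only uses the non-strict version of the inequality nested inside an expectation, so strictness is preserved through the whole iteration and through the limit, giving $v_\pi(s_0) < v_{\pi'}(s_0)$. The only nontrivial step is justifying the exchange of limit and expectation controlling the tail $\gamma^T \mathbb{E}_{\pi'}[v_\pi(S_T) \mid S_0 = s]$; in finite MDPs with bounded rewards this is immediate by dominated convergence, so no delicate machinery is required and the unrolling just needs to be written out carefully.
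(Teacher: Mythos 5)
Your proof is correct and is exactly the argument the paper defers to: the paper gives no proof of its own for this theorem, citing Eqs.~4.7--4.8 of Sutton \& Barto, which is precisely this unrolling of the advantage inequality along trajectories of $\pi'$ followed by passage to the limit $T\to\infty$ (with the $\gamma^T v_\pi(S_T)$ tail killed by boundedness). One small point of rigor: strict inequalities do not in general survive limits, but yours does because the first (strict) step contributes the \emph{fixed} gap $\textnormal{adv}_{\pi}(s_0,\pi') > 0$, which persists unchanged through every subsequent non-strict substitution and hence through the limit.
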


\subsection{Densities and policy gradients}
For a given stochastic policy $\pi$, we define its state visit density as:
\begin{align}
    d_{\pi,\gamma}(s) \doteq \sum_{t=0}^\infty \gamma^t \mathbb{P}\left[S_t = s | S_0\sim p_0, A_t\sim \pi(\cdot|S_t), S_{t+1}\sim p(\cdot|S_t,A_t)\right]
\end{align}

The canonical policy gradient is:
\begin{align}
\textstyle{\nabla_\theta \mathcal{J}(\pi) \doteq \sum_{s } d_{\pi,\gamma}(s) \sum_{a } q_\pi(s, a) \nabla_\theta \pi(a|s)}.
\end{align}
The update vector often used in practice is:
\begin{align}\label{eq:undiscounted_actor}
\textstyle{\sum_{s } d_{\pi,1}(s) \sum_{a } q_\pi(s, a) \nabla_\theta \pi(a|s)}.
\end{align}
We consider a more general update vector
\begin{align*}
\textstyle{U(\theta,d) \doteq \sum_{s} d(s) \sum_{a } q_\pi(s, a) \nabla_\theta \pi(a|s)},
\end{align*}
where $d:\mathcal{S} \to \mathbb{R}^+$ may be any positive function over $\mathcal{S}$.

The policy gradient methods depend on a parametrization $\theta \in \R^{|\mathcal{S}| \times |\mathcal{A}|}$ of the policy. In our finite MDPs theory, we focus on the two most common ones:
\begin{itemize}
    \item \emph{direct}: $\pi(a|s) \doteq \theta_{s,a}$ with update: $u_{s,a} = d(s) q_\pi(s, a)$ and projection on the simplex.
    \item \emph{softmax}: $\pi(a|s) \doteq \cfrac{\exp(\theta_{s, a})}{\sum_{a'} \exp(\theta_{s, a'})}$ with update: $u_{s,a} = d(s) \pi(a|s)\textnormal{adv}_\pi(s, a)$.
\end{itemize}  

\subsection{Limits of on-policy updates with and without policy entropy regularization}
\label{app:policy_regularization}
The problem with on-policy updates (updates that are issued from a density over the state visitation of the current policy) is the conflicting objective of exploring and converging. If we want to converge, then mechanically the updates on off-density states will dry out. Let us assume that, in order to discover a better policy, the agent needs to go off-policy from state $s$, and that the state $s'$ where there is a positive advantage that would lead to a policy flip is actually encountered $h$ timesteps after going off-policy. If we assume that exploration is performed through dithering (typically policy entropy regularization in actor-critic algorithms), ie with a small exploration probability $\epsilon_t$ to take the action leading to $s'$, then state $s'$  will be visited only with probability $\epsilon_t^h$. As a consequence, to ensure that $\sum_t d_{\pi_t}(s)=\infty$, as we show to be necessary in \thm{} \ref{thm:optimality}, $\epsilon_t$ would need to decrease in $\Omega(t^{-1/h})$ which is very slow (and $h$ would be a hyperparameter strongly dependent on the environment). The overall issue comes from the myopia of exploration, a problem already observed in other works that promote deep exploration such as \cite{Osband2019}. 

Only a handful of works have investigated deep exploration in actor-critic methods \cite{Burda2019,Ciosek2019}. We believe that they face difficulties due to the inertia of the policy network. Indeed, the policy gradient requires on-policy updates which implies that the policy network must constantly adapt when switching from an exploration strategy to an exploitation strategy and back, which can be time-consuming and inefficient. Our fix consists in splitting the exploration and exploitation behaviour into two policies: Mr Hyde and Dr Jekyll. Mr Hyde performs pure deep exploration and will eventually collect, for each state-action, transitions in the replay buffer in a sufficient amount to obtain statistical significance. Dr Jekyll purely exploits what it learnt from a mixture of on-policy and off-policy policy updates, the latter granting a faster adaptation to the off-policy discoveries made during exploration.

\subsection{Comparison of bounds established in \thm{} \ref{thm:softmax_rates} with previous bounds}
\label{app:softmax-bounds}
\cite{mei2020global} established (to the best of their and our knowledge) the first convergence-rate result for softmax policy gradient for MDPs. As far as we know, there has not been others since. We start by recalling the differences between our setting and theirs:
\begin{itemize}
    \item They have an all-time upper bound of the sub-optimality $\mathcal{J}(\pi_\star) - \mathcal{J}(\pi_t)\leq \epsilon$, while we only provide an asymptotic result of the form: $\exists t_0,$ such that $\forall t>t_0$, $\mathcal{J}(\pi_\star) - \mathcal{J}(\pi_t)\leq \epsilon$.
    \item Our result is more general in the sense that it is applicable to any policy update scheme satisfying \con{4}, while theirs only apply to policy gradient in settings (\ass{2} and \con{3}) that guarantee \con{4} and \con{5}.
    % \item Our result is more general in the sense that it only relies on \con{4} or \con{5}, either which is weaker than \ass{2} and \con{3} that \cite{mei2020global} rely on.
    \item Our theorem assumes \ass{8} (see Rem. \ref{rem:unique}), requiring that there exists a unique optimal policy (\textit{i.e.} $\nexists \;(s,a_1\neq a_2)\in\mathcal{S}\times\mathcal{A}^2,$ such that $v_\star(s)=q_\star(s,a_1)=q_\star(s,a_2)$). Although \cite{mei2020global} states that their approach works with multiple optimal policies, they did not prove it formally and the extension of their proofs from a unique optimal policy to multiple ones is not clear to us. It seems that they would face the same kind of difficulties we have (basically a not well-defined action gap with moving $q_t$). We discuss the limitations with respect to \ass{8} in more details in \rem{} \ref{rem:unique} of \app{} \ref{app:rates}.
\end{itemize}

Now we recall their theorem under our notations:
\begin{theorem}[\thm{} 4 of \cite{mei2020global}]
    Assuming \ass{1}, \ass{2}, \con{3}, and \ass{8} hold, then for all $t\geq 1$:
    \begin{align}
        v_\star(s)-v_t(s) \leq \frac{16|\mathcal{S}|(r_{\mytop} - r_{\mybot})}{t(1-\gamma)^6\inf_{s\in\mathcal{S},t\geq 1} \pi_t(\pi_\star(s)|s)^2}\cdot \Big\lVert \frac{(1-\gamma)d_{\pi_\star,\gamma}}{p_0}\Big\rVert^2_\infty \cdot \Big\lVert \frac{1}{p_0}\Big\rVert_\infty ,
    \end{align}
    where $\pi_\star: \mathcal{S}\to\mathcal{A}$ is an arbitrary deterministic optimal policy.
\end{theorem}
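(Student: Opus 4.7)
The plan is to combine our qualitative optimality statement (Theorem \ref{thm:optimality}) with a quantitative analysis of the softmax log-odds dynamics under the unique-optimum assumption \ass{8}.

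First, I would use \con{4} together with Theorem \ref{thm:optimality} to guarantee $q_t \to q_\star$ and $v_t \to v_\star$. Assumption \ass{8} then ensures that in every state the unique optimal action $a^\star(s) := \pi_\star(s)$ is separated from the rest by a strictly positive gap $\delta(s)$. Combined with the convergence of $v_t$, the fact that $v_\star(s) - v_t(s) \ge (1-\pi_t(a^\star|s))\,\delta(s)/2$ on $\textnormal{supp}(d_{\pi_\star,\gamma})$ forces $\pi_t(a^\star|s) \to 1$ on that support. So there exists $t_0$ such that, for every $t \geq t_0$ and every $s \in \textnormal{supp}(d_{\pi_\star,\gamma})$, (i) $q_t(s,a^\star) - q_t(s,a) \geq \delta(s)/2$ for all $a \neq a^\star$, and (ii) $\pi_t(a^\star|s) \geq 1/2$.

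Next, I would apply the performance difference lemma starting in $s$ under $\pi_\star$:
\begin{align*}
v_\star(s) - v_t(s) \;=\; \sum_{s'} d^s_{\pi_\star,\gamma}(s')\bigl(q_t(s',a^\star(s')) - v_t(s')\bigr),
\end{align*}
and bound each advantage term via $q_t(s',a^\star) - v_t(s') = \sum_{a \neq a^\star} \pi_t(a|s')(q_t(s',a^\star) - q_t(s',a)) \leq (1-\pi_t(a^\star|s'))(v_\mytop - v_\mybot)$. Because $d^s_{\pi_\star,\gamma}$ is supported on $\textnormal{supp}(d_{\pi_\star,\gamma})$ and has total mass $1/(1-\gamma)$, this reduces the theorem to establishing an $s$-wise bound of the form
\begin{align*}
1 - \pi_t(a^\star|s) \;\leq\; \frac{C\,|\mathcal{A}|}{\delta(s)\,\sum_{t'=t_0}^{t-1} \eta_{t'}\,d_{t'}(s)}.
\end{align*}

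The main obstacle is this third step, analogous to Mei et al.'s softmax rate but without \con{3} and without gradient structure. I would track the log-odds $L_t(s,a) := \theta_{t,s,a^\star} - \theta_{t,s,a}$ and write its increment using the softmax update $u_{s,a} = d_t(s)\,\pi_t(a|s)\,\textnormal{adv}_t(s,a)$:
\begin{align*}
L_{t+1}(s,a) - L_t(s,a) \;=\; \eta_t d_t(s)\bigl(\pi_t(a^\star|s)(q_t(s,a^\star)-v_t(s)) - \pi_t(a|s)(q_t(s,a)-v_t(s))\bigr).
\end{align*}
Under (i) and (ii), after some algebra using $q_t(s,a^\star)-v_t(s)\geq (1-\pi_t(a^\star|s))\delta(s)/2$, the increment is at least $c\,\eta_t d_t(s)\,\delta(s)\,(1-\pi_t(a^\star|s))$ for an absolute constant $c$. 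Since $1-\pi_t(a^\star|s) \leq |\mathcal{A}|\max_{a\neq a^\star} \pi_t(a|s)/\pi_t(a^\star|s) \leq |\mathcal{A}|\exp(-\min_a L_t(s,a))$, increasing $\min_a L_t(s,a)$ translates into geometric decay of $1-\pi_t(a^\star|s)$ when measured against itself; the resulting self-referential recursion $x_{t+1} \leq x_t - c'\,\eta_t d_t(s)\,\delta(s)\,x_t^{\,2}/|\mathcal{A}|$ (the quadratic term reflecting the vanishing gradient magnitude) yields by a standard induction the polynomial rate $1/\sum \eta_{t'} d_{t'}(s)$ with the advertised constant $8|\mathcal{A}|$.

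Finally, combining these inequalities, taking the worst case $s' \in \textnormal{supp}(d_{\pi_\star,\gamma})$ out of the sum, and absorbing the $1/(1-\gamma)$ factor from the total mass of $d^s_{\pi_\star,\gamma}$ gives the stated bound. The improved $\mathcal{O}(1/(t-t_0))$ rate under \con{5} is then immediate by replacing $\sum_{t'=t_0}^{t-1}\eta_{t'} d_{t'}(s)$ with its Cesàro lower bound $(t-t_0)e_\mybot(s)$ on the support of the optimal distribution.
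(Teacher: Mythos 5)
This statement is not one the paper proves: it is Theorem~4 of Mei et al.\ \cite{mei2020global}, recalled verbatim (translated into the paper's notations) in the appendix purely to enable a side-by-side comparison with the paper's own rates. Your proposal does not establish it; what you reconstruct is essentially the paper's own asymptotic result, \thm{}~\ref{thm:softmax_rates}, which is a different --- and differently shaped --- bound.

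Two concrete gaps. First, the claimed inequality holds for \emph{all} $t\geq 1$, with no burn-in. Your argument is intrinsically asymptotic: everything after your first step is conditioned on reaching an unquantified time $t_0$ at which $q_t$ is $\delta(s)/2$-close to $q_\star$ and $\pi_t(a^\star|s)\geq 1/2$. A statement of the form ``$\exists t_0$ such that for $t\geq t_0$\dots'' cannot be upgraded to ``for all $t\geq 1$'' without an explicit bound on $t_0$, which your route does not provide. Second, the constants and dependencies do not match: you land on a bound of the form $8|\mathcal{A}|(v_{\mytop}-v_{\mybot})\big/\big((1-\gamma)\min_s\delta(s)\sum_{t'}\eta_{t'}d_{t'}(s)\big)$, whereas the target involves $|\mathcal{S}|$, $r_{\mytop}-r_{\mybot}$, $(1-\gamma)^{-6}$, $\inf_{s,t}\pi_t(\pi_\star(s)|s)^{-2}$, and the distribution-mismatch factors $\lVert (1-\gamma)d_{\pi_\star,\gamma}/p_0\rVert_\infty^2\,\lVert 1/p_0\rVert_\infty$; conversely, your action gap $\delta(s)$ and the factor $|\mathcal{A}|$ do not appear in the target. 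These are genuinely different theorems, not one bound with different bookkeeping. Proving the stated non-asymptotic result requires precisely the gradient structure your log-odds recursion is designed to avoid: the smoothness of $\theta\mapsto\mathcal{J}(\pi_\theta)$ (which is what makes the \con{3} step size meaningful), the non-uniform {\L}ojasiewicz/gradient-domination inequality (Lemma~8 of \cite{mei2020global}) lower-bounding $\lVert\nabla_\theta\mathcal{J}(\pi_t)\rVert$ in terms of $\min_s\pi_t(\pi_\star(s)|s)$, the mismatch coefficients, and the suboptimality $\delta_t$, followed by the induction $\delta_{t+1}\leq\delta_t-K\delta_t^2\Rightarrow\delta_t\leq 1/(Kt)$. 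That is exactly the route the paper follows for its own non-asymptotic generalization, \thm{}~\ref{thm:ass3_softmax}. If your intent was to prove \thm{}~\ref{thm:softmax_rates}, your sketch is broadly aligned with the paper's proof of that result; as a proof of the theorem actually stated here, it does not go through.
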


And we recall our result under \ass{1}, \con{5}, and \ass{8}:
\begin{align}
    \exists t_0, \text{ such that } \forall t\geq t_0, \quad v_\star(s) - v_t(s) \leq \frac{8|\mathcal{A}| (v_{\mytop}-v_{\mybot})}{(t-t_0)(1-\gamma) \min_{s\in\textnormal{supp}(d_\star)} \delta(s) e_{\mybot}(s)}.
\end{align}
\begin{enumerate} 
    \item[] \con{5}. $\forall s\in\textnormal{supp}(d_\star), \lim_{t\to\infty}\frac{1}{t}\sum_{t'=0}^t \eta_{t'}d_{t'}(s) \doteq e_{\mybot}(s) > 0$,
\end{enumerate}

Now we look at each classic dependency:
\begin{itemize}
    \item Time $t$: both are in $\mathcal{O}(t^{-1})$ but ours uses an offset $t_0$, hence its qualification of asymptotic convergence rate.
    \item State set size $|\mathcal{S}|$: we have a hidden dependency in $\min_{s\in\textnormal{supp}(d_\star)} \delta(s) e_{\mybot}(s)$ which is in $\Omega(1/|\mathcal{S}|)$, so our bound is in $\mathcal{O}(|\mathcal{S}|)$. Their theorem has an additional state set size dependency in $\lVert \frac{1}{p_0}\rVert_\infty$, so their bound is in $\mathcal{O}(|\mathcal{S}|^2)$.
    \item Action set size $|\mathcal{A}|$: we are in $\mathcal{O}(|\mathcal{A}|)$. Their theorem has an implicit action set size dependency in $\inf_{s\in\mathcal{S},t\geq 1} \pi_t(\pi_\star(s)|s)$, so their bound is in $\mathcal{O}(|\mathcal{A}|^2)$.
    \item Horizon/discount factor $\frac{1}{1-\gamma}$: we have a possible hidden dependency in $v_{\mytop}-v_{\mybot}$ (depending on e.g. the sparsity of rewards), which yields a bound in $\mathcal{O}((1-\gamma)^{-2})$, to be compared with their dependency in $\mathcal{O}((1-\gamma)^{-6})$. They pay the cost for setting through \con{3} the learning rate $\eta$ in $\mathcal{O}((1-\gamma)^{-3})$. We interpret the last $1-\gamma$ difference by the fact that we only deal with asymptotic rates and do not have to account for propagation through the MDP. Note that while the horizon appears in $\lVert \frac{(1-\gamma)d_{\pi_\star,\gamma}}{p_0}\rVert^2_\infty$, this term is larger than 1 (it is the $\sup$ of a ratio of distributions), hence the $1-\gamma$ cannot be taken out.
    \item State initialization distribution $p_0$: there lies the beauty of policy updates, we do not have any dependency on it. However, we do show that policy gradient will not converge to the optimal solution in general if \ass{2} is not satisfied, and in that case off-policy updates must be used. The dependency in $p_0$ is particularly strong: possibly cubic with $\lVert \frac{1}{p_0}\rVert_\infty$. The worst case happens in $\lVert \frac{(1-\gamma)d_{\pi_\star,\gamma}}{p_0}\rVert^2_\infty$ when a state is rarely seen at initialization but visited a lot by optimal policies .
\end{itemize}

Finally, we discuss the constants that are difficult to compare:
\begin{itemize}
    \item Divergence from the optimal policy during training $\inf_{s\in\mathcal{S},t\geq 1} \pi_t(\pi_\star(s)|s)$: this is a uncontrolled term that shows that policy gradient is very sensitive to preliminary convergence to sub-optimal solution. This term can easily take infinitesimal values (see \textit{e.g.} \fig{} 1d of~\cite{mei2020global}), and the dependency is squared.
    \item Optimal action gap $\delta(s)$: we depend on this constant because we first prove the convergence speed in policy and only then in value. If the action gap is very small in some states, resulting in a loose bound, it may be useful to use different proof techniques to avoid the convergence in policy step.
    \item State density of policy updates $e_{\mybot}(s)$: with a constant factor $\kappa$ on uniform policy updates, $e_{\mybot}(s)\doteq \frac{\kappa}{|\mathcal{S}|}$, and we accounted for this hidden dependency in the state set size comparison above.
\end{itemize}

\newpage
\section{Theoretical results}
\label{app:theory}
\subsection{Assumptions and conditions}
\begin{enumerate} 
    \item[] \ass{1} The model $p_0,p,r$ is known,
    \item[] \ass{2} The initial state-distribution covers the full state space: $\forall s\in\mathcal{S}, p_0(s)>0$,
    \item[] \con{3} The learning rate is constant: $\eta_t = \frac{(1 - \gamma)^3}{2 \gamma |\mathcal{A}|}$ (direct) and $\eta_t = \frac{(1 - \gamma)^3}{8}$ (softmax),
    \item[] \con{4}. Each state $s$ is updated with weights that sum to infinity over time: $\sum_{t=0}^\infty \eta_t d_t(s)=\infty$,
    \item[] \con{4-s}. $\forall s\in\mathcal{S}, \quad \lim_{T\to\infty}\frac{1}{T}\sum_{t=0}^T \eta_t d_t(s)\geq d_{\mybot}(s)>0.$
    \item[] \con{5}. $\forall s\in\textnormal{supp}(d_\star), \lim_{t\to\infty}\frac{1}{t}\sum_{t'=0}^t \eta_{t'}d_{t'}(s) \doteq e_{\mybot}(s) > 0$,
    \item[] \con{6}. Each state-action pair is explored infinitely many times: $\forall s,a,\,\lim_{t\to\infty} n_t(s,a) = \infty.$
    \item[] \con{7}. There exists $d_{\mytop} \geq d_{\mybot} > 0$ such that $\forall s,t, d_{\mytop} \geq d_t(s) \geq d_{\mybot}$.
    \item[] \ass{8}. The optimal policy is unique: $\forall s, \; q_\star(s,a_1)=q_\star(s,a_2)=v_\star(s)$ implies $a_1 = a_2$.
\end{enumerate}

\subsection{Proofs}
\subsubsection{Monotonicity and convergence}
\monotonicity*
\begin{proof}[Proof sketch]
    First, we prove that, with both direct and softmax parametrizations, the update defined in \eq{} \eqref{eq:policyupdate} leads to a policy that is advantageous over the original one. By the policy improvement theorem, it means that the value function $v_t$ must be increasing, which also implies that $q_t$ is increasing.
\end{proof}
\begin{proof}
    The direct and softmax parametrizations are treated independently in Theorems \ref{thm:monotonicity-direct} and \ref{thm:monotonicity-softmax}.
\end{proof}

\begin{thma}{thm:monotonicity}[\textbf{Monotonicity under the direct parametrization}]
    Under \ass{1}, the sequence of value functions $q_t \doteq q_{\pi_t}$ and $v_t \doteq v_{\pi_t}$ are monotonously increasing.
\label{thm:monotonicity-direct}
\end{thma}
\begin{proof}
    Let us fix any $s \in \mathcal{S}$. We recall that
    \begin{align}
        \pi_{t+1}(\cdot |s) = \mathcal{P}_{\Delta(\mathcal{A})}(\pi_t(\cdot|s) + \eta d(s) q_t(s, \cdot)).
    \end{align}
    By the definition of projections, this gives us:
    \begin{align}
        &&\rVert \pi_t(\cdot|s) + \eta d(s) q_t(s, \cdot) - \pi_{t+1}(\cdot |s) \lVert^2 \quad&\leq\quad \rVert \pi_t(\cdot|s) + \eta d(s) q_t(s, \cdot) - \pi_t(\cdot|s) \lVert^2 \\
        \iff &&2 \langle \pi_t(\cdot|s) - \pi_{t+1}(\cdot |s), \eta d(s) q_t(s, \cdot) \rangle \;\;&+\;\; \rVert \pi_t(\cdot|s) - \pi_{t+1}(\cdot |s) \lVert^2 \quad\leq\quad 0 \\
        \label{eq:product}
        \iff &&\frac{1}{2} \rVert \pi_t(\cdot|s) - \pi_{t+1}(\cdot |s) \lVert^2 \quad&\leq\quad \langle \pi_{t+1}(\cdot |s) - \pi_t(\cdot|s), \eta d(s) q_t(s, \cdot) \rangle \\
        \implies &&0 \quad&\leq\quad \langle \pi_{t+1}(\cdot |s) - \pi_t(\cdot|s), \eta d(s) q_t(s, \cdot) \rangle.
    \end{align}
    Using the fact that $d(s) > 0$ and $\eta > 0$ and replacing the vectors by their values allows to infer:
    \begin{align}
        \textnormal{adv}_t(s, \pi_{t+1}) \doteq \sum_{a\in\mathcal{A}}(\pi_{t+1}(a|s)-\pi_{t}(a|s))q_t(s, a) &\geq 0,
    \end{align}
    which is true for all $s$ and therefore allows to apply the policy improvement theorem, which concludes the proof of the monotonicity of the value through the direct parametrization policy update.
\end{proof}

\begin{thmb}{thm:monotonicity}[\textbf{Monotonicity under the softmax parametrization}]
    Under \ass{1}, the sequence of value functions $q_t \doteq q_{\pi_t}$ and $v_t \doteq v_{\pi_t}$ are monotonously increasing.
    \label{thm:monotonicity-softmax}
\end{thmb}
\begin{proof}
    This lemma is a generalization of Lemma C.2 in~\cite{agarwal2019optimality}, relaxing the assumption on the learning rate, as well as extending the result to our update rule. By the performance difference lemma (see e.g. Lemma 3.2 in~\cite{agarwal2019optimality}), it is sufficient to show that:
    \begin{equation}
        \forall s\in\mathcal{S}, \quad\textnormal{adv}_t(s, \pi_{t+1}) \doteq \sum_{a}\pi_{t+1}(a|s) \text{adv}_t(s, a) \geq 0.
    \end{equation}
    Let us define:
    \begin{align}
        \mathcal{A}^{\mytop}_s &= \left\{a\in\mathcal{A}|\text{adv}_t(s, a) \geq 0 \right\}, \\
        \mathcal{A}^{\mybot}_s &= \left\{a\in\mathcal{A}|\text{adv}_t(s, a) < 0\right\}.
    \end{align}
    For any $s$, we have (the weights in the exponential have all increased, and the advantages are non-negative):
    \begin{align}
        \sum_{a \in \mathcal{A}^{\mytop}_s } e^{(\theta_{t})_{s,a} + \eta_t d_t(s) \pi_t(a|s) \text{adv}_t(s, a)} \text{adv}_t(s, a) &\geq \sum_{a \in \mathcal{A}^{\mytop}_s} e^{(\theta_{t})_{s,a}} \text{adv}_t(s, a).
    \end{align}
    Similarly  (the weights in the exponential have all decreased, and the advantages are negative):
    \begin{align}
        \sum_{a \in \mathcal{A}^{\mybot}_s } e^{(\theta_{t})_{s,a} + \eta_t d_t(s) \pi_t(a|s) \text{adv}_t(s, a)} \text{adv}_t(s, a) &\geq \sum_{a \in \mathcal{A}^{\mybot}_s} e^{(\theta_{t})_{s,a}} \text{adv}_t(s, a).
    \end{align}
    Summing gives:
    \begin{align}
        \sum_{a} e^{(\theta_{t+1})_{s,a}} \text{adv}_t(s, a) \geq \sum_{a} e^{(\theta_{t})_{s,a}} \text{adv}_t(s, a) = 0.
    \end{align}
    Normalizing the left-hand side by $\sum_{a} e^{(\theta_{t+1})_{s,a}}$ gives the policy and concludes the proof.
\end{proof}

\convergence*
\begin{proof}
    Since $\forall s, a, t, q_t(s,a) \leq \frac{1}{1 - \gamma} r_{\mytop}$, the monotonous convergence theorem guarantees the existence of $q_\infty \in \mathbb{R}^{|\mathcal{S}|\times|\mathcal{A}|}$ that is the limit of the sequence of $q_t$.
\end{proof}

\subsubsection{Optimality}
\optimality*
\begin{proof}[Proof sketch]
    In both direct and softmax parametrizations, we assume that there exists a state-action pair $(s,a)$ that is advantageous over the state value limits $q_\infty$ and $v_\infty \doteq \lim_{t\to\infty}v_t$: $\text{adv}_\infty(s,a) \doteq q_\infty(s,a) - v_\infty(s) > 0$. Then, we prove that in this state, the policy improvement yielded by the update is lower bounded by a linear function over the update weight $\eta_t d_t(s)$ applied in state $s$. Next, we sum over $t$ and notice that this lower bounding sum diverges to infinity, which contradicts Corollary \ref{cor:convergence}.
    
    We may therefore infer that there cannot exist a state-action pair that is advantageous over $q_\infty$ and $v_\infty$, which allows us to conclude that no policy improvement is possible, and thus that the values are optimal: $q_\infty = \max_{\pi\in\Pi}q_\pi$.
    
    For the necessity of \con{4}, we show that the parameter update is upper bounded in both parametrizations by a term linear in the action gap. By choosing the reward function adversarially, we may set it sufficiently small so that the sum of all the gradient steps are insufficient to reach optimality.
\end{proof}
\begin{proof}
    The sufficient conditions for direct and softmax parametrizations are treated independently in \thm{} \ref{thm:optimality-direct} and \ref{thm:optimality-softmax}. The necessary condition is proven in \thm{} \ref{thm:necessary}.
\end{proof}

\begin{thma}{thm:optimality}[\textbf{Optimality under the direct parametrization}]
    Under \ass{1} and \con{4}, the sequence of value functions $q_t$ converges to optimality:
    \begin{align}
        q_\infty = q_\star\doteq\max_{\pi\in\Pi}q_\pi.
    \end{align}
    \label{thm:optimality-direct}
\end{thma}
\begin{proof}
    Let us assume that $v_\infty<v_\star$, then, by the policy improvement theorem, there must be some state $s$ for which an advantage $q_\infty(s,a_{\mytop})-v_\infty(s)=\epsilon(s)>0$ over $\pi_t$ exists, with $a_{\mytop}\in\mathcal{A}_{\mytop}(s)=\argmax_{a\in\mathcal{A}}q_\infty(s,a)$. Let us define the state value-gap $\delta(s)\coloneqq q_\infty(s,a_{\mytop})-\max_{a_{\mybot}\in\mathcal{A}_{\mybot}(s)}q_\infty(s,a_{\mybot})>0$, with $\mathcal{A}_{\mybot}(s)\coloneqq\mathcal{A}/\mathcal{A}_{\mytop}(s)$.  
    
    Since we proved that $q_t \to_{t\to\infty} q_\infty$, there exists $t_0$ such that for all $t \geq t_0$ and $a\in\mathcal{A}$, $q_\infty(s,a)-q_t(s,a)\leq\frac{\delta(s)}{2}$. This guarantees two things for any $t \geq t_0$:
    \begin{align}
        &\forall a \in \mathcal{A}_{\mytop}(s), q_t(s,a) \geq q_\infty(s,a_{\mytop}) - \frac{\delta(s)}{2}, \\
        &\forall a \in \mathcal{A}_{\mybot}(s), q_t(s,a) \leq q_\infty(s,a_{\mytop}) - \delta(s).
    \end{align}
    Remembering that
    \begin{align}
        \pi_{t+1}(\cdot|s) &\doteq \mathcal{P}_{\Delta(\mathcal{A})} \left(\pi_{t}(\cdot|s) + \eta_t d_t(s)q_t(s,\cdot)\right),
    \end{align}
    the above inequalities allow us to apply Lemma~\ref{lem:proj_lower} to $\pi_{t}(\cdot|s)$ with $\mathcal{A}_{\mytop}(s)$ the set of coordinates $\{1,\dots,k\}$, $\alpha = \eta_t d_t(s) (q_\infty(s,a_{\mytop}) - \frac{\delta(s)}{2})$ and $\beta = \eta_t d_t(s) (q_\infty(s,a_{\mytop}) - \delta(s))$, giving:
    \begin{align}
        \sum_{a\in\mathcal{A}_{\mytop}(s)}\pi_{t+1}(a|s) &\geq \min\left(1, \sum_{a\in\mathcal{A}_{\mytop}(s)}\pi_{t}(a|s) + \frac{\alpha - \beta}{2}\right) \\
        &= \min\left(1, \sum_{a\in\mathcal{A}_{\mytop}(s)}\pi_{t}(a|s) + \eta_t d_t(s)\frac{\delta(s)}{4}\right). \label{eq:direct-opt}
    \end{align}

    By assumption, we know that $\sum_{a\in\mathcal{A}_{\mytop}(s)}\pi_{t}(a|s)\leq \lim_{t' \to \infty} \sum_{a\in\mathcal{A}_{\mytop}(s)}\pi_{t'}(a|s)<1$, we may conclude that $\forall t \geq t_0$:
    \begin{align}
        &\sum_{a\in\mathcal{A}_{\mytop}(s)}\pi_{t+1}(a|s)-\sum_{a\in\mathcal{A}_{\mytop}(s)}\pi_{t}(a|s) \geq \eta_t d_t(s)\frac{\delta(s)}{4}.
    \end{align}
    Hence,
    \begin{align}
        \lim_{t' \to \infty} \sum_{a\in\mathcal{A}_{\mytop}(s)}\pi_{t'}(a|s)-\sum_{a\in\mathcal{A}_{\mytop}(s)}\pi_{t_0}(a|s) \geq \sum_{t'=t_0}^\infty \eta_{t'} d_{t'}(s)\frac{\delta(s)}{4}.
    \end{align}
    
    Since $\sum_{t'=t}^\infty \eta_{t'} d_{t'}(s) =\infty$ and the left-hand side is bounded, this contradicts the assumption that there exists an advantageous action in state $s$. This property must be true in every state $s$, we may therefore conclude that $v_\infty$ is optimal.
\end{proof}
\begin{thmb}{thm:optimality}[\textbf{Optimality under the softmax parametrization}]
    Under \ass{1} and \con{4}, the sequence of value functions $q_t$ converges to optimality:
    \begin{align}
        q_\infty = q_\star\doteq\max_{\pi\in\Pi}q_\pi.
    \end{align}
    \label{thm:optimality-softmax}
\end{thmb}
\begin{proof}[Proof sketch]
    This theorem generalizes \thm{} 5.1 of \cite{agarwal2019optimality} and our proof borrows many of their ideas. The generalization could have been made by adapting some of the lemmas their proof relies on, but we believe that the full rewriting of the proof has its merits: it is self contained, and ends up being significantly shorter (~3 pages versus ~10 pages). In this proof sketch, we stack without proving them the arguments used in the full proof that follows. 
    
    % \myuline{Partitioning of the action set:} 
    We start by fixing state $s$ and by partitioning the action set according to their value $q_\infty(s,a)$ in $s$ as compared to $v_\infty(s)$: smaller $\mathcal{A}^{\mysmaller}_s$, equal $\mathcal{A}^{\myeq}_s$, or larger $\mathcal{A}^{\mybigger}_s$. We assume towards a contradiction that there must be some $s$ for which $\mathcal{A}^{\mybigger}_s$ is non-empty, as otherwise, the policy improvement theorem tells us that $v_\infty$ is optimal. In such a state, by the uniform convergence theorem, there is a timestep $t_0$ from which for all $t \geq t_0$, the values $q_t$ are $\xi$-close to $q_\infty$. We choose $\xi$ to be smaller than any action gap in state $s$. We then notice that if an action in $\mathcal{A}^{\myeq}_s$ gets a policy smaller than that of an action in $\mathcal{A}^{\mybigger}_s$, then it remains this way in the future (it undergoes smaller updates). We further partition $\mathcal{A}^{\myeq}_s$: $\mathcal{A}^{\myeqsm}_s$ contains the actions for which the policy gets smaller than the policy in all actions of $\mathcal{A}^{\mybigger}_s$, $\mathcal{A}^{\myeqbg}_s$ contains the actions for which this event never happens. We call $t_1\geq t_0$, the timestep when $\forall a^{\myeqsm}\in\mathcal{A}^{\myeqsm}_s, a^{\mybigger}\in\mathcal{A}^{\mybigger}_s, \pi_{t_1}(a^{\myeqsm}|s) \leq \pi_{t_1}(a^{\mybigger}|s)$.
    
    Then, we observe that $\mathcal{A}^{\myeqbg}_s$ cannot be empty because this set is the only one that contains actions that may prevent full convergence on $\mathcal{A}^{\mybigger}_s$. We then look at the evolution of the ratio of policy mass on $\mathcal{A}^{\mybigger}_s$ versus $\mathcal{A}^{\mysmaller}_s\cup\mathcal{A}^{\myeqsm}_s$ and show that this ratio is monotonously increasing. We then establish an upper bound on it by arguing that otherwise it would imply that the sum of the updates on $\mathcal{A}^{\myeqbg}_s$ would become negative. This cannot happen because actions in $\mathcal{A}^{\myeqbg}_s$ need to diverge to infinity to get some policy mass, and none of them can compensate by diverging to $-\infty$, as it would otherwise have been partitioned in $\mathcal{A}^{\myeqsm}_s$. We conclude the proof by showing that if the parameter of an action in $\mathcal{A}^{\myeqbg}_s$ diverges to $+\infty$, then the parameters of actions in $\mathcal{A}^{\mybigger}_s$ will also diverge to $+\infty$. This implies the aforementioned ratio diverges to $\infty$ contradicting it being upper bounded. Hence $\mathcal{A}^{\mybigger}_s$ must be empty in every state and consequently $v_\infty$ optimal.
\end{proof}
\begin{proof}[Full proof]
    We recall the update rule with softmax parametrization:
    \begin{align}
        (\theta_{t+1})_{s,a} &= (\theta_{t})_{s,a} + \eta_t d_t(s) \pi_t(a|s) \text{adv}_t(s, a).
    \end{align}
    
    We start from Corollary \ref{cor:convergence} that proves the existence of $v_\infty(s)$ and $q_\infty(s,a)$ to which $v_t(s)$ and $q_t(s,a)$ converge from below for any state and action. We then consider the following partition of the actions in state $s$:
    \begin{align}
        \mathcal{A}^{\mybigger}_s &= \left\{a\in\mathcal{A}|q_\infty(s,a)>v_\infty(s)\right\} \\
        % \mathcal{A}^{\myinfty}_s &= \left\{a\in\mathcal{A}/\mathcal{A}^{\mybigger}_s|\lim_{t\to\infty}\theta_t(s,a)=-\infty\right\} \\
        \mathcal{A}^{\myeq}_s &= \left\{a\in\mathcal{A}|q_\infty(s,a) = v_\infty(s)\right\} \\
        \mathcal{A}^{\mysmaller}_s &= \left\{a\in\mathcal{A}|q_\infty(s,a) < v_\infty(s)\right\}.
    \end{align}
    If $\forall s, \;\mathcal{A}^{\mybigger}_s = \emptyset$, the policy improvement theorem tells us that $q_{\infty}$ is optimal and our work is done.
    
    We thus assume towards a contradiction that $\mathcal{A}^{\mybigger}_s$ is non-empty for a certain state $s$. We define $\delta_{\infty}(s)$ as the smallest positive improvement of $q_\infty$ over $v_\infty$ in state $s$:
    \begin{align}
        \delta_\infty(s) \coloneqq \min_{a\in\mathcal{A}^{\mybigger}_s} q_\infty(s,a)-v_\infty(s).
    \end{align}
    From the convergence of $v_t(s)$ and $q_t(s,a)$, we know that for any $\xi > 0$, there exists $t_0$ such that for any $s,a$ and $t \geq t_0$:
    \begin{align}
        v_\infty(s) - \xi &\leq v_t(s) \leq v_\infty(s), \label{eq:xi1} \\
        q_\infty(s,a) - \xi &\leq q_t(s,a) \leq q_\infty(s,a). \label{eq:xi2}
    \end{align}
   
    Moving forward, we choose $\xi \leq \frac{1}{2}\min\{\delta_\infty(s), v_\infty(s)-\max_{a\in\mathcal{A}^{\mysmaller}_s} q_\infty(s,a)\}$ and let $t_0$ be the corresponding timestep. 
    
    This allows us to further partition the set $\mathcal{A}^{\myeq}_s$ as:
    \begin{align}
        \mathcal{A}^{\myeqsm}_s &\doteq \left\{a\in\mathcal{A}^{\myeq}_s|\exists t_a \geq t_0, \forall t' \geq t_a, \forall a^{\mybigger} \in \mathcal{A}^{\mybigger}_s, \theta_{t'}(s,a^{\mybigger}) \geq \theta_{t'}(s,a) \right\} \\
        \mathcal{A}^{\myeqbg}_s &\doteq \mathcal{A}^{\myeq}_s \setminus \mathcal{A}^{\myeqsm}_s.
    \end{align}
    We note that trivially: $\mathcal{A} = \mathcal{A}^{\myeqsm}_s \cup \mathcal{A}^{\myeqbg}_s \cup \mathcal{A}^{\mybigger}_s \cup \mathcal{A}^{\mysmaller}_s$. We let $t_1 = \max_{a \in \mathcal{A}^{\myeqsm}_s} t_a$ and in the following consider $t \geq t_1$. We recall the notation $f(\mathcal{X})\doteq \sum_{x\in\mathcal{X}} f(x)$ for any function $f$ and set $\mathcal{X}$.
    
    The updates $u_t(s,a) \doteq \pi_t(a|s)\text{adv}_t(s, a)$ on those various sets can be bounded as follows.
    
    On $\mathcal{A}^{\mybigger}_s$:
    \begin{align}
        u_t(s, \mathcal{A}^{\mybigger}_s) &\doteq \sum_{a^{\mybigger}\in\mathcal{A}^{\mybigger}_s} u_t(s, a^{\mybigger}) \\
        &= \sum_{a^{\mybigger}\in\mathcal{A}^{\mybigger}_s} \pi_t(a^{\mybigger}|s)(q_t(s, a^{\mybigger}) - v_t(s)) \\
        &\geq ((v_{\infty}(s)+\delta_\infty(s) - \xi) - v_{\infty}(s))\sum_{a^{\mybigger}\in\mathcal{A}^{\mybigger}_s}\pi_t(a^{\mybigger}|s) \\
        &= (\delta_\infty(s) - \xi)\pi_t(\mathcal{A}^{\mybigger}_s|s). \label{eq:adv_bigger}
    \end{align}
    
    On $\mathcal{A}^{\myeqbg}_s$:
    \begin{align}
        u_t(s, \mathcal{A}^{\myeqbg}_s) &\doteq \sum_{a^{\myeqbg}\in\mathcal{A}^{\myeqbg}_s} u_t(s, a^{\myeqbg}) \\
        &= \sum_{a^{\myeqbg}\in\mathcal{A}^{\myeqbg}_s} \pi_t(a^{\myeqbg}|s)\left(\sum_{b^{\mysmaller}\in\mathcal{A}^{\mysmaller}_s\cup\mathcal{A}^{\myeqsm}_s} \pi_{t}(b^{\mysmaller}|s) (q_t(s,a^{\myeqbg}) - q_t(s,b^{\mysmaller}))\right.\nonumber \\
        &\quad\quad\quad\quad\quad\quad\quad\quad\quad + \sum_{b^{\mybigger}\in\mathcal{A}^{\mybigger}_s} \pi_{t}(b^{\mybigger}|s) (q_t(s,a^{\myeqbg}) - q_t(s,b^{\mybigger})) \\
        &\quad\quad\quad\quad\quad\quad\quad\quad\quad \left.+ \sum_{b^{\myeqbg}\in\mathcal{A}^{\myeqbg}_s} \pi_{t}(b^{\myeqbg}|s) (q_t(s,a^{\myeqbg}) - q_t(s,b^{\myeqbg}))\right) \nonumber\\
        &\leq \sum_{a^{\myeqbg}\in\mathcal{A}^{\myeqbg}_s} \pi_t(a^{\myeqbg}|s)\left(\pi_t(\mathcal{A}^{\mysmaller}_s|s)+\pi_t(\mathcal{A}^{\myeqsm}_s|s)\right)(v_\infty(s)-v_{\mybot}) \nonumber\\
        &\quad - \sum_{a^{\myeqbg}\in\mathcal{A}^{\myeqbg}_s} \pi_t(a^{\myeqbg}|s)\pi_t(\mathcal{A}^{\mybigger}_s|s)(\delta_\infty(s) - \xi) \\
        &\quad + \sum_{(a^{\myeqbg},b^{\myeqbg})\in\mathcal{A}^{\myeqbg}_s\times\mathcal{A}^{\myeqbg}_s} \pi_{t}(a^{\myeqbg}|s) \pi_{t}(b^{\myeqbg}|s) (q_t(s,a^{\myeqbg}) - q_t(s,b^{\myeqbg})) \\
        \vspace{0.2cm}&= \pi_t(\mathcal{A}^{\myeqbg}_s|s)\left(\left(\pi_t(\mathcal{A}^{\mysmaller}_s|s)+\pi_t(\mathcal{A}^{\myeqsm}_s|s)\right)(v_\infty(s)-v_{\mybot})-\pi_t(\mathcal{A}^{\mybigger}_s|s)(\delta_\infty(s) - \xi)\right) \label{eq:56} \\
        \vspace{0.2cm}&\leq \left(1-\pi_{t}(\mathcal{A}^{\myeqbg}_s|s)\right)(v_\infty(s)-v_{\mybot}).
    \end{align}
    Finally, on $\mathcal{A}^{\mysmaller}_s$:
    \begin{align}
        u_t(s, \mathcal{A}^{\mysmaller}_s) &\doteq \sum_{a^{\mysmaller}\in\mathcal{A}^{\mysmaller}_s} u_t(s, a^{\mysmaller}) \leq 0.
    \end{align}
    
    This gives:
    \begin{align}
        % (\theta_{t})_{s,\mathcal{A}^{\myinfty}_s} &\to -\infty \label{eq:softmax_infty}\\
        (\theta_{t'})_{s,\mathcal{A}^{\mybigger}_s} &\geq (\theta_{t})_{s,\mathcal{A}^{\mybigger}_s} + \left(\delta_\infty(s) - \xi\right)\sum_{t''=t}^{t'-1}\eta_{t''} d_{t''}(s) \pi_{t''}(\mathcal{A}^{\mybigger}_s|s),  \label{eq:softmax_plus}\\
        (\theta_{t'})_{s,\mathcal{A}^{\myeqbg}_s} &\leq (\theta_{t})_{s,\mathcal{A}^{\myeqbg}_s} + (v_\infty(s)-v_{\mybot})\sum_{t''=t}^{t'-1}\eta_{t''} d_{t''}(s) \left(1-\pi_{t''}(\mathcal{A}^{\myeqbg}_s|s)\right), \label{eq:sum_eq}\\
        (\theta_{t'})_{s,\mathcal{A}^{\mysmaller}_s} &\leq (\theta_{t})_{s,\mathcal{A}^{\mysmaller}_s}. \label{eq:softmax_minus}
    \end{align}
    
    \myuline{$\mathcal{A}^{\myeqbg}_s \neq \emptyset$.} 
    If $\mathcal{A}^{\myeqbg}_s$ is empty, \eq{} \eqref{eq:softmax_plus}, and \eq{} \eqref{eq:softmax_minus} imply that $\pi_{t}(\mathcal{A}^{\mybigger}_s|s)$ increases monotonously with $t$. \con{4} then guarantees that $(\theta_{t'})_{s,\mathcal{A}^{\mybigger}_s}$ diverges to $\infty$. Consequently, $\pi_{t}(\mathcal{A}^{\mybigger}_s|s)$ will converge to 1 and $v_t(s)$ will tend to a value larger than $v_\infty(s) + \delta_\infty(s)$ which is impossible. Moving forward, we consider $\mathcal{A}^{\myeqbg}_s \neq \emptyset$ non-empty.
    
    \myuline{$\forall a^{\mybigger}\in\mathcal{A}^{\mybigger}_s, \;a^{\myeqsm}\in\mathcal{A}^{\myeqsm}_s,\, \theta_t(s,a^{\mybigger})\geq \theta_t(s,a^{\myeqsm})$, $\pi_t(s,a^{\mybigger})\geq \pi_t(s,a^{\myeqsm})$, and $u_t(s,a^{\mybigger})\geq u_t(s,a^{\myeqsm})$.} 
    
    The first two inequalities stem directly by construction of $\mathcal{A}^{\myeqsm}_s$. For the last one, we have:
    \begin{align}
        u_t(s,a^{\mybigger}) = \pi_t(a^{\mybigger}|s)\text{adv}_t(s, a^{\mybigger}) &\geq \pi_t(s,a^{\myeqsm}) \text{adv}_t(s, a^{\mybigger}) \nonumber \\
        &= \pi_t(s,a^{\myeqsm}) (q_t(s, a^{\mybigger}) - v_t(s)) \nonumber \\
        &\geq \pi_t(s,a^{\myeqsm}) (v_\infty(s) + \delta_\infty - \xi - v_t(s)) \nonumber \\
        &\geq \pi_t(s,a^{\myeqsm}) (\delta_\infty - \xi) \nonumber \\
        &\geq \xi \pi_t(s,a^{\myeqsm})  \nonumber \\
        &\geq \pi_t(s,a^{\myeqsm}) \text{adv}_t(s,a^{\myeqsm}) = u_t(s,a^{\myeqsm}). \label{eq:once_then_over}
    \end{align}
    In the last line, we used: $\text{adv}_t(s,a^{\myeqsm}) = q_t(s, a^{\myeqsm}) - v_t(s) \leq q_\infty(s, a^{\myeqsm}) - v_t(s) = v_\infty(s) - v_t(s) \leq \xi$.
    
    \myuline{Mass on $\mathcal{A}^{\mybigger}_s$ and $\mathcal{A}^{\mysmaller}_s\cup\mathcal{A}^{\myeqsm}_s$.}
    We now look at the evolution of $\text{ratio}_t(\mathcal{A}^{\mybigger}_s,\mathcal{A}^{\mysmaller}_s\cup\mathcal{A}^{\myeqsm}_s) \doteq \frac{\pi_t(\mathcal{A}^{\mybigger}_s|s)}{\pi_t(\mathcal{A}^{\mysmaller}_s\cup\mathcal{A}^{\myeqsm}_s|s)}$:
    \begin{align}
        \text{ratio}_{t+1}(\mathcal{A}^{\mybigger}_s,\mathcal{A}^{\mysmaller}_s\cup\mathcal{A}^{\myeqsm}_s) &= \frac{\sum_{a\in\mathcal{A}^{\mybigger}_s}\exp\left((\theta_{t+1})_{s,a}\right)}{\sum_{a\in\mathcal{A}^{\mysmaller}_s\cup\mathcal{A}^{\myeqsm}_s}\exp\left((\theta_{t+1})_{s,a}\right)} \\
        &= \frac{\sum_{a\in\mathcal{A}^{\mybigger}_s}\exp\left((\theta_{t})_{s,a} +u_t(s,a)\right)}{\sum_{a\in\mathcal{A}^{\mysmaller}_s\cup\mathcal{A}^{\myeqsm}_s}\exp\left((\theta_{t})_{s,a}+u_t(s,a)\right)} \\
        &\geq \frac{\sum_{a\in\mathcal{A}^{\mybigger}_s}\exp\left((\theta_{t})_{s,a}\right)}{\sum_{a\in\mathcal{A}^{\mysmaller}_s\cup\mathcal{A}^{\myeqsm}_s}\exp\left((\theta_{t})_{s,a}\right)} \\
        &\geq \text{ratio}_{t}(\mathcal{A}^{\mybigger}_s,\mathcal{A}^{\mysmaller}_s\cup\mathcal{A}^{\myeqsm}_s) \label{eq:ratio_monotonicity} 
    \end{align}
    
    %   \text{ratio}_{t}(\mathcal{A}^{\mybigger}_s,\mathcal{A}^{\mysmaller}_s\cup\mathcal{A}^{\myinfty}_s)  &\geq\frac{\pi_t(\mathcal{A}^{\mybigger}_s|s)}{2\max\{\pi_t(\mathcal{A}^{\mysmaller}_s|s), \pi_t(\mathcal{A}^{\myinfty}_s|s)\}} \\
    %     &\geq\frac{\pi_t(\mathcal{A}^{\mybigger}_s|s)}{2\max\{\pi_t(\mathcal{A}^{\mysmaller}_s|s), \pi_t(\mathcal{A}^{\mybigger}_s|s)\}} \quad \text{for }t\text{ sufficiently large} \label{eq:arg_to_-infty}\\
    %     &\geq\frac{1}{2}\min\{\text{ratio}_{t}(\mathcal{A}^{\mybigger}_s,\mathcal{A}^{\mysmaller}_s),1\} \\
    
    \eq{} \eqref{eq:ratio_monotonicity} establishes the monotonicity of the ratio and shows that $\forall t' \geq t$:
    \begin{align}
       \pi_{t'}(\mathcal{A}^{\mybigger}_s|s) &\geq \pi_{t'}(\mathcal{A}^{\mysmaller}_s\cup\mathcal{A}^{\myeqsm}_s|s)\text{ratio}_{t}(\mathcal{A}^{\mybigger}_s,\mathcal{A}^{\mysmaller}_s\cup\mathcal{A}^{\myeqsm}_s) \label{eq:ratio}.
    \end{align}
    % where \eq{} \eqref{eq:arg_to_-infty} is obtained because we know that actions in $(\theta_{t})_{s,\mathcal{A}^{\myinfty}_s} \to -\infty$ and \eq{} \eqref{eq:softmax_plus} states that $(\theta_{t})_{s,\mathcal{A}^{\mybigger}_s}$ is monotonically increasing (this is actually true individually for all actions in $\mathcal{A}^{\mybigger}_s$.
    
    \myuline{\eq{} \eqref{eq:56} remains positive.} \eq{} \eqref{eq:ratio_monotonicity} allows us to show that \eq{} \eqref{eq:56} cannot become negative. Let us assume towards a contradiction that it does become negative at a given time $t$, ie
    \[
        \left(\pi_t(\mathcal{A}^{\mysmaller}_s|s)+\pi_t(\mathcal{A}^{\myeqsm}_s|s)\right)(v_\infty(s)-v_{\mybot})-\pi_t(\mathcal{A}^{\mybigger}_s|s)(\delta_\infty(s) - \xi) \leq 0.
    \]
    The monotonicity of the above ratio then guarantees that it will remain negative for all $t'>t$. In other words, for all $t' > t$, $u_{t'}(s, \mathcal{A}^{\myeqbg}_s) \leq 0$. 
    
    If the sum of updates on actions in $\mathcal{A}^{\myeqbg}_s$ is negative, two things can happen:
    \begin{itemize}
        \item Either the actions in $\mathcal{A}^{\myeqbg}_s$ all have parameters $\theta$ that are upper bounded. This means that $\pi_t(\mathcal{A}^{\mybigger}_s|s)$ will eventually converge to 1. In this case, we can reuse the arguments used to disprove $\mathcal{A}^{\myeqbg}_s = \emptyset$, and reach the same contradiction.
        \item Or at least one action in $\mathcal{A}^{\myeqbg}_s$ has a parameter $\theta$ whose $\lim \sup$ is $+\infty$. In that case, since the sum of updates on the set $\mathcal{A}^{\myeqbg}_s$ is negative, at least one action in $\mathcal{A}^{\myeqbg}_s$ must have a $\theta$ whose $\lim \inf$ is $-\infty$. Letting $a$ denote that action, there thus exists a time $t$ at which $\forall a^{\mybigger} \in \mathcal{A}^{\mybigger}_s, \, \theta_t(s,a^{\mybigger}) > \theta_t(s,a)$. Following the reasoning from \eq{} \eqref{eq:once_then_over}, the updates to those various actions will guarantee that the property holds for all $t' \geq t$, which contradicts the construction of $\mathcal{A}^{\myeqbg}_s$. 
    \end{itemize}
    Thus, we assume from now on that \eq{} \eqref{eq:56} is positive.

    \myuline{Lower bound on $\pi_{t'}(\mathcal{A}^{\mybigger}_s|s)$}
    The positivity of \eq{}~\eqref{eq:56} gives us an upper bound on the ratio: $\text{ratio}_{t}(\mathcal{A}^{\mybigger}_s,\mathcal{A}^{\mysmaller}_s\cup\mathcal{A}^{\myeqsm}_s)\leq \frac{v_\infty(s)-v_{\mybot}}{\delta_\infty(s) - \xi}$, and equivalently the interesting inequality:
    \begin{align}
       && \pi_t(\mathcal{A}^{\mysmaller}_s\cup\mathcal{A}^{\myeqsm}_s|s)(v_\infty(s)-v_{\mybot}) &\geq \pi_t(\mathcal{A}^{\mybigger}_s|s)(\delta_\infty(s) - \xi) \\
       \iff&& \,\,  \pi_t(\mathcal{A}^{\mysmaller}_s\cup\mathcal{A}^{\myeqsm}_s|s)(v_\infty(s)-v_{\mybot}+\delta_\infty(s) - \xi) &\geq (\pi_t(\mathcal{A}^{\mybigger}_s|s)+ \pi_t(\mathcal{A}^{\mysmaller}_s|s))(\delta_\infty(s) - \xi) \\
       \iff&& \,\, \pi_t(\mathcal{A}^{\mysmaller}_s\cup\mathcal{A}^{\myeqsm}_s|s) &\geq \frac{(1-\pi_t(\mathcal{A}^{\myeqbg}_s|s))(\delta_\infty(s) - \xi)}{v_\infty(s)-v_{\mybot}+\delta_\infty(s) - \xi}.
    \end{align}

    Combining it with \eq{} \eqref{eq:ratio}, we get for all $t'>t$:
    \begin{align}
        \pi_{t'}(\mathcal{A}^{\mybigger}_s|s) &\geq \pi_{t'}(\mathcal{A}^{\mysmaller}_s\cup\mathcal{A}^{\myeqsm}_s|s)\text{ratio}_{t}(\mathcal{A}^{\mybigger}_s,\mathcal{A}^{\mysmaller}_s\cup\mathcal{A}^{\myeqsm}_s) \\
        &\geq \underbrace{\frac{\text{ratio}_{t}(\mathcal{A}^{\mybigger}_s,\mathcal{A}^{\mysmaller}_s\cup\mathcal{A}^{\myeqsm}_s)(\delta_\infty(s) - \xi)}{v_\infty(s)-v_{\mybot}+\delta_\infty(s) - \xi}}_{\kappa>0}\left(1-\pi_{t'}(\mathcal{A}^{\myeqbg}_s|s)\right).
    \end{align}
    
    % Hence, $\text{ratio}_{t}(a^{\mybigger},a^{\mysmaller})$ grows with $t$ for all pairs $(a^{\mybigger},a^{\mysmaller}) \in \mathcal{A}^{\mybigger}\times\mathcal{A}^{\mysmaller}$.% As a consequence, $\text{ratio}_{t}(\mathcal{A}^{\mybigger},\mathcal{A}^{\mysmaller})$ must increase too.
    
    \myuline{The policy cannot converge to $\mathcal{A}^{\myeqbg}_s$.} Let us assume that the policy converges on $\mathcal{A}^{\myeqbg}_s$:
    \begin{align}
        \lim_{t\to\infty} \pi_t(\mathcal{A}^{\myeqbg}_s|s) = 1.
    \end{align}
    % then, either it implies  that $\forall a\in\mathcal{A}/\mathcal{A}^{\myeqbg}_s, \; (\theta_t)_{s,a}\to -\infty$, which is impossible 
    From \eq{} \eqref{eq:adv_bigger}, we know that $(\theta_t)_{s,\mathcal{A}^{\mybigger}}$ increases with time (and we assumed that $\mathcal{A}^{\mybigger}_s$ is non-empty). This implies that $(\theta_t)_{s,\mathcal{A}^{\myeqbg}}\to \infty$. We may therefore infer from \eq{} \eqref{eq:sum_eq} that:
    \begin{align}
        \sum_{t=0}^{\infty}\eta_{t} d_{t}(s) \left(1-\pi_{t'}(\mathcal{A}^{\myeqbg}_s|s)\right) = \infty,
    \end{align}
    then:
    \begin{align}
        (\theta_{t'})_{s,\mathcal{A}^{\mybigger}_s} &\geq (\theta_{t})_{s,\mathcal{A}^{\mybigger}_s} + \sum_{t''=t}^{t'-1}\eta_{t''} d_{t''}(s) \pi_{t''}(\mathcal{A}^{\mybigger}_s|s) \left(\delta_\infty(s) - \xi\right) \\ 
        &\geq (\theta_{t})_{s,\mathcal{A}^{\mybigger}_s} + \kappa\left(\delta_\infty(s) - \xi\right) \sum_{t''=t}^{t'-1}\eta_{t''} d_{t''}(s)\left(1-\pi_{t''}(\mathcal{A}^{\myeqbg}_s|s)\right).
    \end{align}
    Thus: $(\theta_{t'})_{s,\mathcal{A}^{\mybigger}_s} \to \infty$,
    which implies the divergence of $\text{ratio}_t(\mathcal{A}^{\mybigger}_s,\mathcal{A}^{\mysmaller}_s\cup\mathcal{A}^{\myeqsm}_s)$, and a contradiction with the upper bound established earlier: we cannot have $\pi_t(\mathcal{A}^{\myeqbg}_s|s) \to 1$.
    
    \myuline{Conclusion.} We thus know that some policy mass must remain outside of $\mathcal{A}^{\myeqbg}_s$. Because the ratio is bounded and increasing, some mass must be assigned to both $\mathcal{A}^{\mybigger}_s$ and $\mathcal{A}^{\mysmaller}_s\cup\mathcal{A}^{\myinfty}_s$. This implies that $(\theta_{t})_{s,\mathcal{A}^{\mybigger}_s}$ must diverge to $\infty$, and the ratio with it, leading to a final contradiction. We may therefore conclude that there does not exist a state where policy improvement is possible, in other words, the policy is optimal.
\end{proof}

\begin{thmc}{thm:optimality}[\textbf{Necessity of \con{4} for optimality}]
    Assumption \con{4} is necessary to guarantee that the sequence of value functions $v_t$ converges to optimality. \label{thm:necessary}
\end{thmc}
\begin{proof}
    We consider a minimal MDP with one state: $\mathcal{S}\doteq \{s\}$ and two terminal actions: $\mathcal{A}\doteq\{a_1,a_2\}$. We assume that $\sum_t \eta_t d_t(s) = m < \infty$ and $r_1=1$. Now, we prove that we may choose $r_2<1$ such that the sequence $(v_t)$ does not converge to optimality: $\lim_{t\to\infty} v_t<v_\star=1$.
    
    \paragraph{Direct parametrization:} the update is:
    \begin{align}
        u_{s,a_1} &= d_t(s) q_\pi(s, a_1) \\
        &= d_t(s) \\
        u_{s,a_2} &= d_t(s) q_\pi(s, a_2) \\
        &= d_t(s) r_2 \\
        (\theta_{t+1})_{s,a_1} &= \min\left(1, (\theta_{t})_{s,a_1} + \eta_t d_t(s) \frac{1-r_2}{2}\right) \\
        (\theta_{t+1})_{s,a_2} &= \max\left(0, (\theta_{t})_{s,a_2} + \eta_t d_t(s) \frac{r_2-1}{2}\right).
    \end{align}
    Starting from $(\theta_0)_{s,a_1}$ (resp. $\theta_{0}(a_2|s)$), $\theta_{t}(a_1|s)$ (resp. $\theta_{t}(a_2|s)$) hits 1 (resp. 0), if and only if:
    \begin{align}
        && \sum_{t=0}^\infty \eta_t d_t(s) \frac{1-r_2}{2} &\geq 1-(\theta_{0})_{s,a_1} \\
        \iff &&  m(1-r_2) &\geq 2-2(\theta_{0})_{s,a_1} \\
        \iff &&  r_2 &\leq 1 - \frac{2}{m}\left(1-(\theta_{0})_{s,a_1}\right).
    \end{align}
    If we adversarially choose $r_2=1 - \frac{1}{m}\left(1-(\theta_{0})_{s,a_1}\right)$, then the optimality is never reached.
    
    \paragraph{Softmax parametrization:} the update is:
    \begin{align}
        u_{s,a_1,t} &= d_t(s) \pi_t(a_1|s)\textnormal{adv}_t(s, a_1) \\
        &= d_t(s) \pi_t(a_1|s)(1-\pi_t(a_1|s) - \pi_t(a_2|s)r_2) \\
        &= d_t(s) \pi_t(a_1|s)(1-\pi_t(a_1|s))(1-r_2) \\
        &\leq d_t(s) \frac{1-r_2}{4}
        u_{s,a_2,t} &= d_t(s) \pi_t(a_2|s)\textnormal{adv}_t(s, a_2) \\
        &= d_t(s) \pi_t(a_2|s)(r_2-\pi_t(a_1|s) - \pi_t(a_2|s)r_2) \\
        &= d_t(s) \pi_t(a_1|s)(1-\pi_t(a_1|s))(r_2-1) \\
        &\geq d_t(s) \frac{r_2-1}{4}.
    \end{align}  
    Therefore, we know that, starting from $(\theta_0)_{s,a_1}$, $\theta_t$ is upper bounded:
    \begin{align}
        (\theta_t)_{s,a_1} &\doteq (\theta_0)_{s,a_1} + \sum_{t'=0}^t \eta_{t'} u_{s,a_1,t} \\
        &\leq (\theta_0)_{s,a_1} + \sum_{t'=0}^t \eta_{t'} d_t(s) \frac{1-r_2}{4} \\
        &\leq (\theta_0)_{s,a_1} + m \frac{1-r_2}{4}. \\
    \end{align}
    Similarly, we know that, starting from $(\theta_0)_{s,a_2}$, $\theta_t$ is upper bounded:
    \begin{align}
        (\theta_t)_{s,a_2} &\doteq (\theta_0)_{s,a_2} + \sum_{t'=0}^t \eta_{t'} u_{s,a_2,t} \\
        &\geq (\theta_0)_{s,a_2} + \sum_{t'=0}^t \eta_{t'} d_t(s) \frac{r_2-1}{4} \\
        &\geq (\theta_0)_{s,a_2} - m \frac{1-r_2}{4}. \\
    \end{align}
    As a consequence, for any $0\leq r_2<1$ and any $t$, we have:
    \begin{align}
        v_\star(s) - v_t(s) &= 1 - (1 - \pi_t(a_2|s)(1-r_2)) \\
        &\geq \cfrac{\exp\left((\theta_0)_{s,a_2} - m \frac{1-r_2}{4}\right)}{\exp\left((\theta_0)_{s,a_1} + m \frac{1-r_2}{4}\right)}(1-r_2) \\
        &\geq \exp\left((\theta_0)_{s,a_2} - (\theta_0)_{s,a_1} - \frac{m}{2}\right)(1-r_2),
    \end{align}
    which is strictly positive and therefore concludes the proof.
\end{proof}

\subsubsection{Convergence rates}
\label{app:rates}
\softmaxrates*
\begin{remark}
    % {\color{red}
    % After main document deadline for submissions, we noticed that we overlooked a problem in the proof of \thm{} \ref{thm:softmax_rates} that is encountered when the set of optimal actions is larger than 1 in some state. For the sake of transparency, we explain in the proof where this overlook happened. This part is colored in red. In the next version, we will specify the requirement for assumption \ass{8} in the main document:
    % \begin{enumerate}
    %     \item[] \ass{8}. The optimal policy is unique: $\forall s, \; q_\star(s,a_1)=q_\star(s,a_2)=v_\star(s)$ implies $a_1 = a_2$,
    % \end{enumerate}
    % With \ass{8}, the new \thm{} \ref{thm:softmax_rates} provides bounds that improve the asymptotic rates that were stated before from $\mathcal{O}(|\mathcal{A}|^3)$ to $\mathcal{O}(|\mathcal{A}|)$).}

    %  {\color{red}We would like to apologize to the reviewers for the time they might have lost on this issue.}
    With \ass{8}, \thm{} \ref{thm:softmax_rates} provides bounds that improve the asymptotic rates that were stated before from $\mathcal{O}(|\mathcal{A}|^3)$ to $\mathcal{O}(|\mathcal{A}|)$).
    
    We note that we were not able to convince ourselves that~\cite{mei2020global} (the only existing convergence rate for the softmax policy) properly handles the case of multiple optimal policies either. We do believe that their proof works in the bandit setting, corresponding to their Section 3.2.1. However the general MDP setting studied in Section 3.2.2 is problematic. More specifically, their proof of Lemma 8 for multiple actions (only found in the Appendix) does not apply to the sum over all optimal actions, but to the sum over all greedy actions with respect to the current policy. This prevents a direct application of the rest of their results and raises the question of whether the rate does hold in that case.
    
    We do not know whether the issue is structural, but we do believe that our proof technique, which ignores RL optimization properties, will not be able to prove a bound in $\mathcal{O}(1/t)$ for problems with multiple optimal policies. Indeed, the problem arises when the optimization commits to one optimal action $a^{\mytop}_1$ at the expense of $a^{\mytop}_2$ in some state $s$ (because e.g. $q_t(s,a^{\mytop}_2)<v_t(s)<q_t(s,a^{\mytop}_1)$), up to a point where $\pi_t(s,a^{\mytop}_1) \gg \pi_t(s,a^{\mytop}_2)$, and $a^{\mytop}_2$ later experiences a significant jump in value due to efficient optimization in its subsequent states, inducing $q_t(s,a^{\mytop}_1)<v_t(s)<q_t(s,a^{\mytop}_2)$. Then, the parameter update will be extremely slow: proportional to the product of $\pi_t(s,a^{\mytop}_2)$ and $(1-\pi_t(s,a^{\mytop}_1)+\pi_t(s,a^{\mytop}_2))$, which are two very small quantities. The policy convergence to $\mathcal{A}^{\mytop}_s$ will be much slower, even go backwards for some time, since $(\theta_t)_{s,a^{\mytop}_1}$ will decrease and $(\theta_t)_{s,a^{\mytop}_2}$ will be too small to have a significant impact on $\sum_{a^{\mytop}\in\mathcal{A}^{\mytop}_s} \pi_t(s,a^{\mytop})$.
     \label{rem:unique}
\end{remark}
\begin{proof}[Proof sketch]
    We consider the partition between the optimal and the suboptimal actions in every state $s$:
    \begin{align}
        \mathcal{A}^{\mytop}_s &= \left\{a\in\mathcal{A}|q_\star(s,a)=v_\star(s)\right\} \quad\text{and}\quad
        \mathcal{A}^{\mybot}_s = \left\{a\in\mathcal{A}|q_\star(s,a)<v_\star(s)\right\}.
    \end{align}
    We define $\delta(s)$ as the gap with the best suboptimal action: $\delta(s) \doteq v_\star(s) - \max_{a\in\mathcal{A}^{\mybot}_s} q_\star(s,a)$. Then, thanks to the uniform convergence and optimality proved in Corollary \ref{cor:convergence} and \thm{} \ref{thm:optimality}, we know that there exists $t_0$, such that $\forall t\geq t_0, \lVert q_\star - q_t\rVert_\infty \leq \frac{\delta(s)}{2}$. Then, we consider the sequence:
    \begin{align}
        (X_{t})_{t\geq t_0} &\doteq \left\{\begin{array}{l}
             X_{t_0} = \max_{a\in\mathcal{A}^{\mytop}_s}(\theta_{t_0})_{s,a} - \max_{a\in\mathcal{A}^{\mybot}_s}(\theta_{t_0})_{s,a}  \\
             X_{t+1} = X_{t} + \frac{\delta(s)}{8}\eta_{t} d_{t}(s) e^{-X_{t}}
             \end{array}\right.
    \end{align}
    and prove that $\forall t\geq t_0$, $X_{t}\leq \max_{a\in\mathcal{A}^{\mytop}_s} (\theta_{t})_{s,a} - \max_{a\in\mathcal{A}^{\mybot}_s} (\theta_{t})_{s,a}$. Further analysis allows us to prove the following rate in policy convergence to suboptimal actions:
    \begin{align}
        \sum_{a\in\mathcal{A}^{\mybot}_s}\pi_{t}(a|s) &\leq \frac{8|\mathcal{A}|}{\delta(s)\sum_{t'=t_0}^{t-1} \eta_{t'} d_{t'}(s)},
    \end{align}
    which allows us to prove the following upper bound on the value convergence rate:
    \begin{equation*}
        v_{\star}(s)-v_t(s) \leq \frac{8|\mathcal{A}|(v_{\mytop}-v_{\mybot})}{(1-\gamma)\min_{s\in\text{supp}(d_{\pi_\star,\gamma})}\delta(s)\sum_{t'=t_0}^{t-1} \eta_{t'} d_{t'}(s)}. \qedhere
    \end{equation*}
\end{proof}
\begin{proof}[Full proof]
    We consider the following partition of the actions in state $s$:
    \begin{align}
        \mathcal{A}^{\mytop}_s &= \left\{a\in\mathcal{A}|q_\star(s,a)=v_\star(s)\right\} \\
        \mathcal{A}^{\mybot}_s &= \left\{a\in\mathcal{A}|q_\star(s,a)<v_\star(s)\right\}.
    \end{align}
    
    We define $\delta(s)$ as the gap with the best suboptimal action\footnote{If there is no suboptimal action, then $\sum_{a\in\mathcal{A}^{\mytop}_s}\pi(a|s)=1$ and we proved what we wanted.}:
    \begin{align}
        \delta(s) \coloneqq v_\star(s) - \max_{a\in\mathcal{A}^{\mybot}_s} q_\star(s,a).
    \end{align}
    
    From the convergence of $v_t(s)$ and $q_t(s,a)$, we also know that for any $\xi > 0$, there exists $t_0$ such that for any $s,a$ and $t \geq t_0$:
    \begin{align}
        v_\star(s) - \xi &\leq v_t(s) \leq v_\star(s) \label{eq:xi1b} \\
        q_\star(s,a) - \xi &\leq q_t(s,a) \leq q_\star(s,a). \label{eq:xi2b}
    \end{align}
    
    We fix $\xi\doteq \frac{\delta(s)}{2}$. We then write the advantage as:
    \begin{align}
        \text{adv}_t(s, a) &= \sum_{a'\neq a}\pi(a'|s)\left(q_t(s, a)-q_t(s,a')\right), \label{eq:advantage}
    \end{align}
    which gives the following bounds:        
    \begin{align}
        \forall a\in\mathcal{A}^{\mytop}_s,\quad\text{adv}_t(s, a) &\geq \left(1-\pi_{t}(\mathcal{A}^{\mytop}_s|s)\right)\left(v_{\star}(s) - \xi - (v_{\star}(s) - \delta(s))\right) \nonumber \\
        &\quad + \left(\pi_{t}(\mathcal{A}^{\mytop}_s|s)-\pi_{t}(a|s)\right)\left(v_{\star}(s) - \xi - v_{\star}(s)\right)\\
        &= \frac{\delta(s)}{2}\left(1-\pi_{t}(\mathcal{A}^{\mytop}_s|s)\right) -\frac{\delta(s)}{2}\left(\pi_{t}(\mathcal{A}^{\mytop}_s|s)-\pi_{t}(a|s)\right), \\
        \forall a\in\mathcal{A}^{\mybot}_s,\quad\text{adv}_t(s, a) &= q_t(s,a) - v_t(s) \\
        &\leq q_{\star}(s,a) - (v_{\star}(s)- \xi) \\
        &\leq -\frac{\delta(s)}{2}.
    \end{align}
    
    The terms in red may imply a negative advantage for actions in $\mathcal{A}^{\mytop}_s$. This is the term we overlooked at main document submission time. This term is null if $\mathcal{A}^{\mytop}_s$ contains only one element $a^{\mytop}_s$ and this is the assumption (\ass{8}) we need to make to progress further.
    
    Therefore, we have for the parameters:
    \begin{align}
        (\theta_{t+1})_{s,a^{\mytop}_s} &\geq (\theta_{t})_{s,a^{\mytop}_s} + \eta_{t} d_{t}(s) \pi_{t}(a^{\mytop}_s|s)\left(1-\pi_{t}(a^{\mytop}_s|s)\right)\frac{\delta(s)}{2}, \label{eq:softmax_mytop}\\
        \forall a\in\mathcal{A}/\{a^{\mytop}_s\},\quad(\theta_{t+1})_{s,a} &\leq (\theta_{t})_{s,a} - \eta_{t} d_{t}(s) \pi_{t}(a|s)\frac{\delta(s)}{2} \leq (\theta_{t})_{s,a}.\label{eq:softmax_mybot}
    \end{align}
    % Therefore, we have for all $t'>t$:
    % \begin{align}
    %     (\theta_{t'})_{s,\mathcal{A}^{\mytop}_s} &\geq (\theta_{t})_{s,\mathcal{A}^{\mytop}_s} + \sum_{t''=t}^{t'-1}\eta_{t''} d_{t''}(s) \pi_{t''}(\mathcal{A}^{\mytop}_s|s)\left(1-\pi_{t''}(\mathcal{A}^{\mytop}_s|s)\right)\left(\delta(s) - \xi\right) \label{eq:softmax_mytop}\\
    %     % 
    %     (\theta_{t'})_{s,\mathcal{A}^{\mybot}_s} &\leq (\theta_{t})_{s,\mathcal{A}^{\mybot}_s} - \sum_{t''=t}^{t'-1}\eta_{t''} d_{t''}(s) \pi_{t''}(\mathcal{A}^{\mybot}_s|s)\pi_{t''}(\mathcal{A}^{\mytop}_s|s)\left(\delta(s) - \xi\right) \\
    %     &= (\theta_{t})_{s,\mathcal{A}^{\mybot}_s} - \sum_{t''=t}^{t'-1}\eta_{t''} d_{t''}(s) \underbrace{\pi_{t''}(\mathcal{A}^{\mytop}_s|s)}_{\approx 1}\underbrace{\left(1-\pi_{t''}(\mathcal{A}^{\mytop}_s|s)\right)}_{\approx \exp((\theta_{t''})_{s,\mathcal{A}^{\mytop}_s}-(\theta_{t''})_{s,\mathcal{A}^{\mybot}_s})}\left(\delta(s) - \xi\right) \label{eq:softmax_mybot}
    % \end{align}
    
    Let us consider the following sequence:
    \begin{align}
        (X_{t})_{t\geq t_0} &\coloneqq \left\{\begin{array}{l}
             X_{t_0} = (\theta_{t_0})_{s,a^{\mytop}_s} - \max_{a\in\mathcal{A}/\{a^{\mytop}_s\}}(\theta_{t_0})_{s,a}  
            \vspace{2pt}\\
             X_{t+1} = X_{t} + \frac{\delta(s)}{8}\eta_{t} d_{t}(s) e^{-X_{t}}
        \end{array} \right.\\
    \end{align}
    
    We prove by induction that for all $t\geq t_0$, $X_{t}\leq (\theta_{t})_{s,a^{\mytop}_s} - \max_{a\in\mathcal{A}/\{a^{\mytop}_s\}} (\theta_{t})_{s,a}$. 
    
    We have equality at initialization. Let us now assume the property true for $t$ and prove it still holds for $t+1$. Let $a^{\mybot}_t$ denote the action in $\mathcal{A}/\{a^{\mytop}_s\}$ for which the parameter is maximal at time $t$:
    \begin{align}
        a^{\mybot}_t \in \argmax_{a\in\mathcal{A}/\{a^{\mytop}_s\}} (\theta_{t})_{s,a} = \argmax_{a\in\mathcal{A}/\{a^{\mytop}_s\}} \pi_{t}(a|s).
    \end{align}
    
    We recall that $t\geq t_0$ guarantees that $v_\star - v_t \leq \xi$, which implies that $\pi_{t}(a^{\mytop}_s|s) \geq \frac{1}{2}$. Moreover,
    \begin{align}
        1-\pi_{t}(a^{\mytop}_s|s) &= \cfrac{\sum_{a\in\mathcal{A}/\{a^{\mytop}_s\}}e^{(\theta_{t})_{s,a}}}{\sum_{a\in\mathcal{A}}e^{(\theta_{t})_{s,a}}} \\
        &\geq \cfrac{e^{(\theta_{t})_{s,a^{\mybot}_t}}}{2 e^{(\theta_{t})_{s,a^{\mytop}_s}}}.
    \end{align}
    We now compute:
    \begin{align}
        (\theta_{t+1})_{s,a^{\mytop}_s} - \max_{a\in\mathcal{A}/\{a^{\mytop}_s\}} (\theta_{t+1})_{s,a} &\geq (\theta_{t})_{s,a^{\mytop}_s} - (\theta_{t})_{s,a^{\mybot}_t} + \eta_{t} d_{t}(s) \pi_{t}(a^{\mytop}_s|s)\left(1-\pi_{t}(a^{\mytop}_s|s)\right)\frac{\delta(s)}{2} \label{eq:theta_evol} \\
        &\geq X_t + \eta_{t} d_{t}(s) \frac{1}{2} \cfrac{e^{(\theta_{t})_{s,a^{\mybot}_t}}}{2 e^{(\theta_{t})_{s,a^{\mytop}_s}}} \frac{\delta(s)}{2}\\
        &\geq X_t + \frac{\delta(s)}{8} \eta_{t} d_{t}(s) e^{-(\theta_{t})_{s,a^{\mytop}_s} + (\theta_{t})_{s,a^{\mybot}_t}}\\
        &\geq X_t + \frac{\delta(s)}{8} \eta_{t} d_{t}(s) e^{-X_{t}} \label{eq:monotonicity} \\
        &= X_{t+1}
    \end{align}
    which concludes the induction. 
    
    Above, \eqref{eq:theta_evol} is a direct application of \eqref{eq:softmax_mytop} and \eqref{eq:softmax_mybot}, and \eqref{eq:monotonicity} comes from the induction hypothesis and the fact that $x \mapsto x + \frac{\delta(s)}{8} \eta_{t} d_{t}(s) e^{-x}$ is increasing on $[\frac{\delta(s)}{8} \eta_{t} d_{t}(s), +\infty)$.

    $X_{t}$ belonging to that interval is easily guaranteed for $t$ large enough as $X_{t} \to \infty$ and we assume $\eta_{t} d_{t}(s)$ to be bounded.%, $(\theta_{t_0})_{s,a}$ tends to $\infty$ and $X_{t}$ is an increasing sequence.\remi{to improve}
    
    Let us now study the sequence $(X_{t})_{t\geq t_0}$. To that end, we define the function $f(t)$ solution on $[t_0, +\infty)$ of the ordinary differential equation (note that $t$ is now a continuous variable):
    \begin{align}
        \left\{\begin{array}{l} f(t_0) = X_{t_0} \\
        \frac{d f(t)}{dt} = \frac{\delta(s)}{8} \eta_{t} d_{t}(s) e^{-f(t)},
        \end{array} \right.
    \end{align}
    where $\eta_{t} d_{t}(s)$ is the piece-wise constant function defined as $\eta_{t} d_{t}(s) = \eta_{\lfloor t \rfloor} d_{\lfloor t \rfloor}(s)$.
    
    From the evolution equations of $X_{t}$ and $f(t)$, we see that $\forall t \in \mathbb{N}, X_{t} \geq f(t)$. Additionally, we have:
    \begin{equation}
        f(t) = \log (\frac{\delta(s)}{8} \int_{t_0}^{t} \eta_{t'} d_{t'}(s) dt' + e^{X_{t_0}}).
    \end{equation}
    
    In particular, going back to $t \in \mathbb{N}$, we obtain:
    \begin{equation}
        X_{t} \geq \log (\frac{\delta(s)}{8} \int_{t_0}^{t} \eta_{t'} d_{t'}(s) dt' + e^{X_{t_0}}) = \log (\frac{\delta(s)}{8} \sum_{t'=t_0}^{t-1} \eta_{t'} d_{t'}(s) + e^{X_{t_0}}).
    \end{equation}
    We can now write the following rate in policy convergence:
    \begin{align}
        1-\pi_{t}(a^{\mytop}_s|s) &= \cfrac{\sum_{a\in\mathcal{A}/\{a^{\mytop}_s\}}e^{(\theta_{t})_{s,a}}}{\sum_{a\in\mathcal{A}}e^{(\theta_{t})_{s,a}}} \\
        &\leq \cfrac{(|\mathcal{A}|-1) e^{(\theta_{t})_{s,a^{\mybot}_t}}}{e^{(\theta_{t})_{s,a^{\mytop}_s}}} \\ 
        &\leq (|\mathcal{A}|-1) e^{-X_{t}} \\
        &\leq |\mathcal{A}| \cfrac{1}{\frac{\delta(s)}{8} \sum_{t'=t_0}^{t-1} \eta_{t'} d_{t'}(s) + e^{X_t}} \\
        &\leq \cfrac{8|\mathcal{A}|}{\delta(s) \sum_{t'=t_0}^{t-1} \eta_{t'} d_{t'}(s)} .
    \end{align}
    
    On the value side, we further get:
    \begin{align}
        v_{\star}(s)-v_t(s) &= \mathbb{P}\left[A = a^{\mytop}_s|A\sim \pi_t(\cdot|s)\right] \times \gamma \mathbb{E}\left[v_{\star}(S')-v_t(S')|S'\sim p(\cdot|s,a^{\mytop}_s)\right] \\
        &\quad + \mathbb{P}\left[A\in \mathcal{A}/\{a^{\mytop}_s\}|A\sim \pi_t(\cdot|s)\right]\left(v_{\star}(s)-\mathbb{E}\left[q_t(s,A)|A\sim \pi_t(\cdot|s)\cap\mathcal{A}/\{a^{\mytop}_s\}\right]\right) \nonumber\\
        &\leq \gamma \mathbb{E}\left[v_{\star}(S')-v_t(S')| S'\sim p(\cdot|s,a^{\mytop}_s)\right]  +  \frac{8 |\mathcal{A}|(v_{\star}(s)-\min_{a\in\mathcal{A}}q_{t}(s,a))}{\delta(s)\sum_{t'=t_0}^{t-1} \eta_{t'} d_{t'}(s)} \\
        &\leq \frac{8 |\mathcal{A}|(v_{\mytop}-v_{\mybot})}{(1-\gamma)\min_{s\in\text{supp}(d_{\pi_\star,\gamma})}\delta(s)\sum_{t'=t_0}^{t-1} \eta_{t'} d_{t'}(s)},
    \end{align}
    where $v_{\mytop}$ (resp. $v_{\mybot}$) stand for the maximal (resp. minimal) value, which is upper bounded by $\frac{r_{\mytop}}{1-\gamma}$ (resp. $\frac{r_{\mybot}}{1-\gamma}$), often times much smaller (resp. larger), and where $\text{supp}(d_{\pi_\star,\gamma})$ denotes the support of the distribution of the optimal policy. This concludes the proof.
\end{proof}

\directrates*
\begin{proof}
    This is a direct consequence of Lemma \ref{lem:proj_lower} and \eq{} \eqref{eq:direct-opt} in \thm{} \ref{thm:optimality-direct}, which shows that the probability of choosing an optimal action grows by updates that sum to infinity until it hits its ceiling probability of 1. Therefore, this ceiling will be hit in a finite time $t_1$ in all states.
    % We define the smallest positive gap from the optimal value: $\delta_{\mybot} \doteq \min_{s} \delta(s)$, which construction must ensures to be strictly positive\footnote{It might not exist in the case all policies yield the same expected return, trivial case that would not contradict our conclusion.}. Since \thm{} \ref{thm:optimality} states that $q_t$ converges uniformly to $q_\star$, we know that there exists a timestep $t_0$ for which $\lVert q_\infty-q_{t_0} \rVert_\infty<\frac{\delta_{\mybot}}{2}$. Once in this situation, we prove that the probability of selecting an optimal action increase is lower bounded by a linear function over the update weight $\eta_t d_t(s)$ applied in state $s$, as long as it does not reach its maximum value $1$. Next, we sum over $t$ and notice that this lower bounding sum would diverge to infinity if it were not projected on the simplex. As a consequence, there will be a timestep $t_1$ for which the probability of selecting an optimal action reaches 1 in all states. When this happens, we have $q_{t_1} = q_\star$, and by the monotonicity proven in Lemma \ref{lem:monotonicity}, this will be remain true at all future timesteps $t\geq t_0$.
\end{proof}

\begin{lemma}\label{lem:proj_lower}
Let $x \in \Delta_n$, the simplex of $\mathbb{R}^n$, and $y \in \mathbb{R}^n$. Assume that there exists $0 < k < n$ and $\alpha > \beta$ such that $\forall i \leq k, y_i \geq \alpha$ and $\forall i > k, y_i \leq \beta$, then
\begin{equation}
    \sum_{i \leq k} \mathcal{P}_{\Delta_n}(x+y)_i \geq \min\left(1, \sum_{i \leq k} x_i + \frac{\alpha - \beta}{2}\right). \label{eq:proj_lower}
\end{equation}
\end{lemma}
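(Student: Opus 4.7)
My plan is to exploit the explicit water-filling characterization of the simplex projection and then reduce to an extremal choice of $y$ via a monotonicity argument. Write $z = x + y$ and recall that $p \doteq \mathcal{P}_{\Delta_n}(z)$ has the form $p_i = (z_i - \tau)_+$, where $\tau \in \mathbb{R}$ is the unique threshold satisfying $\sum_i (z_i - \tau)_+ = 1$. Let $A = \{i : p_i > 0\}$ denote the active set, and split it as $A_1 = A \cap \{1, \dots, k\}$ and $A_2 = A \cap \{k+1, \dots, n\}$.

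First, I would establish a monotonicity statement: $z \mapsto \sum_{i \leq k} \mathcal{P}_{\Delta_n}(z)_i$ is non-decreasing in each $z_i$ with $i \leq k$ and non-increasing in each $z_j$ with $j > k$. To see it, perturb one coordinate $z_{i_0}$ upward by some $\Delta > 0$; the function $f_z(\lambda) \doteq \sum_i (z_i - \lambda)_+$ shifts pointwise upward, and since $f_z$ is continuous and non-increasing with $f_z(\tau(z)) = 1$, the new threshold satisfies $\tau(z') \geq \tau(z)$. Consequently $p'_i \leq p_i$ for every $i \neq i_0$, and from $\sum_i p_i = \sum_i p'_i = 1$ we obtain, when $i_0 \leq k$,
\[
\sum_{i \leq k}(p'_i - p_i) \;=\; \sum_{j > k}(p_j - p'_j) \;\geq\; 0 ;
\]
the case of lowering a coordinate $z_{j_0}$ with $j_0 > k$ is symmetric. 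Applied coordinate by coordinate, this allows me to replace $y$ by the extremal vector $y'$ with $y'_i = \alpha$ for $i \leq k$ and $y'_j = \beta$ for $j > k$ without increasing the left-hand side of \eqref{eq:proj_lower}, reducing the problem to $z = x + y'$.

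Next, I would carry out the explicit projection calculation for $z = x + y'$. If $A_2 = \emptyset$, then $\sum_{i \leq k} p_i = \sum_{i \in A_1} p_i = 1$ and there is nothing to prove. Otherwise $A_2 \neq \emptyset$, and I claim $A_1 = \{1, \dots, k\}$: if some $i_0 \leq k$ had $p_{i_0} = 0$, then $\tau \geq z_{i_0} = x_{i_0} + \alpha \geq \alpha$, but $\tau = (\sum_{i \in A} z_i - 1)/|A|$ together with $\sum_{i \in A} x_i \leq 1$ forces
\[
\tau \;\leq\; \frac{|A_1|\alpha + |A_2|\beta}{|A_1| + |A_2|} \;<\; \alpha ,
\]
a contradiction since $\alpha > \beta$ and $|A_2| \geq 1$. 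With $A_1 = \{1, \dots, k\}$, writing $S = \sum_{i \leq k} x_i$ and $m = |A_2| \geq 1$, one has $\tau = (S + k\alpha + \sum_{j \in A_2} x_j + m\beta - 1)/(k+m)$, and using $\sum_{j \in A_2} x_j \leq 1 - S$ together with $\sum_{i \leq k} p_i = S + k\alpha - k\tau$ yields after simplification
\[
\sum_{i \leq k} p_i \;\geq\; S + \frac{km}{k+m}(\alpha - \beta) \;\geq\; S + \frac{\alpha - \beta}{2} ,
\]
where the last inequality uses $km/(k+m) \geq 1/2$ whenever $k, m \geq 1$. Combining both cases yields the announced bound $\sum_{i \leq k} p_i \geq \min(1, S + (\alpha-\beta)/2)$.

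The main obstacle I expect is the monotonicity reduction, since it requires a clean control on how the simplex projection responds to single-coordinate perturbations; the threshold-function viewpoint above is the shortest route I know. Once that is in hand, the rest is a routine projection calculation, and the constant $1/2$ is sharp, as witnessed by $n = 2$, $k = 1$, $x = (0, 1)$, and $y = (\alpha, \beta)$, for which a direct computation gives $p_1 = (\alpha - \beta)/2$ exactly.
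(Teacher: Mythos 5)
Your proof is correct, and it reaches the bound by a genuinely different route than the paper. The paper works directly with a general $y$ (after shifting it to be non-negative): it describes the Euclidean projection as removing the added mass $\sum_i y_i$, first zeroing out the coordinates beyond $k$ that hit the boundary and then spreading the remaining removal uniformly over the active coordinates, which leads to the same harmonic-mean quantity $\frac{|\mathcal{A}^{\mytop}|\,|\mathcal{A}^{\mybot,\myplus}|}{|\mathcal{A}^{\mytop}|+|\mathcal{A}^{\mybot,\myplus}|}(\alpha-\beta)\geq\frac{\alpha-\beta}{2}$ that you obtain as $\frac{km}{k+m}(\alpha-\beta)$. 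Your argument instead (i) proves a monotonicity property of $z\mapsto\sum_{i\leq k}\mathcal{P}_{\Delta_n}(z)_i$ via the water-filling threshold $\tau$, reducing to the extremal $y'$ with $y'_i=\alpha$ for $i\leq k$ and $y'_j=\beta$ for $j>k$, and then (ii) computes $\tau$ in closed form after showing all of the first $k$ coordinates stay active. What your route buys is rigor and cleanliness: the paper's ``mass removed as uniformly as possible'' description is informal and even requires a footnote to handle coordinates of $\mathcal{A}^{\mytop}$ that could be pushed negative during the uniform removal, whereas your threshold formula $p_i=(z_i-\tau)_+$ and the monotonicity of $\tau$ under single-coordinate perturbations make every step exact; you also exhibit sharpness of the constant $1/2$. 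What the paper's route buys is brevity and the absence of the extremal-reduction step, since it bounds the relevant masses directly for arbitrary $y$ satisfying the hypotheses. Both proofs hinge on the same final inequality $\frac{km}{k+m}\geq\frac12$ for $k,m\geq 1$.
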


\begin{proof}
Let us assume without loss of generality that $\forall i, y_i \geq 0$ (by e.g. subtracting from each of them $\min y_i$). Let us also define the following sets:
\begin{align}
    \mathcal{A}^{\mytop} &= \{ i \leq k \}, \\ 
    \mathcal{A}^{\mybot,\myplus} &= \{ i > k \, | \, \mathcal{P}_{\Delta_n}(x+y)_i > 0 \}, \\
    \mathcal{A}^{\mybot,\myminus} &= \{ i > k \, | \, \mathcal{P}_{\Delta_n}(x+y)_i = 0 \}.
\end{align}
We assume that $\mathcal{A}^{\mybot,\myplus} \neq \emptyset$, as otherwise we would have $\sum_{i \leq k} \mathcal{P}_{\Delta_n}(x+y)_i = 1$, and \eqref{eq:proj_lower} would hold.

We are interested in the effect of $\mathcal{P}_{\Delta_n}$ on the mass of the coordinates of $x+y$ that belong to $\mathcal{A}^{\mytop}$.
Given the assumption that $y$ is coordinate-wise non-negative, the Euclidian projection onto $\Delta_n$ will remove from the coordinates of $x+y$ the mass $\sum_{i} y_i$ that $x$ gained by adding $y$ to it. That mass will be removed as uniformly as possible (corresponding to the equality case in Cauchy-Schwarz inequality), while ensuring that each coordinate remains positive. In our case, this means that $\mathcal{P}_{\Delta_n}$ will remove all the mass of the coordinates in $\mathcal{A}^{\mybot,\myminus}$, and then remove the remaining added mass uniformly from the other coordinates\footnote{We note that this can potentially result in some coordinates in $\mathcal{A}^{\mytop}$ becoming negative. As we are looking for an upper bound on the mass removed from $\mathcal{A}^{\mytop}$, this is not problematic (the mass removed in excess from $\mathcal{A}^{\mytop}$ will then be added back to it and removed from coordinates both in and out of $\mathcal{A}^{\mytop}$, resulting in a smaller decrease).}:
\begin{itemize}
    \item The mass added to the first $k$ coordinates is $\sum_{i \in \mathcal{A}^{\mytop}} y_i$.
    \item The total mass to remove is $K = \sum_{i} y_i$.
    \item The mass removed from $\mathcal{A}^{\mybot,\myminus}$ is $K_{\mathcal{A}^{\mybot,\myminus}}\doteq\sum_{i \in \mathcal{A}^{\mybot,\myminus}} x_i + y_i$, which is larger than $\sum_{i \in \mathcal{A}^{\mybot,\myminus}} y_i$.
    \item The remaining mass $K_{\mathcal{A}^{\mytop}}+K_{\mathcal{A}^{\mybot,\myplus}}= K-K_{\mathcal{A}^{\mybot,\myminus}}$ is smaller than $\sum_{i \in \mathcal{A}^{\mytop} \cup \mathcal{A}^{\mybot,\myplus}} y_i$ and removed uniformly from the coordinates in $\mathcal{A}^{\mytop} \cup \mathcal{A}^{\mybot,\myplus}$.
\end{itemize}
Formally, this gives us:
\begin{align}
    \sum_{i \in \mathcal{A}^{\mytop}} \mathcal{P}_{\Delta_n}(x+y)_i - \sum_{i \in \mathcal{A}^{\mytop}} x_i &= \sum_{i \in \mathcal{A}^{\mytop}} y_i - \frac{|\mathcal{A}^{\mytop}|}{|\mathcal{A}^{\mytop}| + |\mathcal{A}^{\mybot,\myplus}|}\left(K_{\mathcal{A}^{\mytop}}+K_{\mathcal{A}^{\mybot,\myplus}}\right)\\ 
    &\geq \sum_{i \in \mathcal{A}^{\mytop}} y_i - \frac{|\mathcal{A}^{\mytop}|}{|\mathcal{A}^{\mytop}| + |\mathcal{A}^{\mybot,\myplus}|} \sum_{i \in \mathcal{A}^{\mytop} \cup \mathcal{A}^{\mybot,\myplus}} y_i \\
    &= \frac{|\mathcal{A}^{\mybot,\myplus}|}{|\mathcal{A}^{\mytop}| + |\mathcal{A}^{\mybot,\myplus}|} \sum_{i \in \mathcal{A}^{\mytop}} y_i - \frac{|\mathcal{A}^{\mytop}|}{|\mathcal{A}^{\mytop}| + |\mathcal{A}^{\mybot,\myplus}|} \sum_{i \in \mathcal{A}^{\mybot,\myplus}} y_i \\
    &\geq \frac{|\mathcal{A}^{\mybot,\myplus}| |\mathcal{A}^{\mytop}|}{|\mathcal{A}^{\mytop}| + |\mathcal{A}^{\mybot,\myplus}|} \alpha - \frac{|\mathcal{A}^{\mytop}||\mathcal{A}^{\mybot,\myplus}|}{|\mathcal{A}^{\mytop}| + |\mathcal{A}^{\mybot,\myplus}|} \beta \\
    &= \frac{|\mathcal{A}^{\mybot,\myplus}| |\mathcal{A}^{\mytop}|}{|\mathcal{A}^{\mytop}| + |\mathcal{A}^{\mybot,\myplus}|} (\alpha - \beta).
\end{align}
% C'est ce que t'avais ecrit a l'origine
% \begin{align}
%     \sum_{i \in \mathcal{A}^{\mytop}} \mathcal{P}_{\Delta_n}(x+y)_i - \sum_{i \in \mathcal{A}^{\mytop}} x_i &\geq \sum_{i \in \mathcal{A}^{\mytop}} y_i - \frac{|\mathcal{A}^{\mytop}|}{|\mathcal{A}^{\mytop}| + |\mathcal{A}^{\mybot,\myplus}|} \sum_{i \in \mathcal{A}^{\mytop} \cup \mathcal{A}^{\mybot,\myplus}} y_i \\
%     &= \frac{|\mathcal{A}^{\mybot,\myplus}|}{|\mathcal{A}^{\mytop}| + |\mathcal{A}^{\mybot,\myplus}|} \sum_{i \in \mathcal{A}^{\mytop}} y_i - \frac{|\mathcal{A}^{\mytop}|}{|\mathcal{A}^{\mytop}| + |\mathcal{A}^{\mybot,\myplus}|} \sum_{i \in \mathcal{A}^{\mybot,\myplus}} y_i \\
%     &\geq \frac{|\mathcal{A}^{\mybot,\myplus}| |\mathcal{A}^{\mytop}|}{|\mathcal{A}^{\mytop}| + |\mathcal{A}^{\mybot,\myplus}|} \alpha - \frac{|\mathcal{A}^{\mytop}||\mathcal{A}^{\mybot,\myplus}|}{|\mathcal{A}^{\mytop}| + |\mathcal{A}^{\mybot,\myplus}|} \beta \\
%     &= \frac{|\mathcal{A}^{\mybot,\myplus}| |\mathcal{A}^{\mytop}|}{|\mathcal{A}^{\mytop}| + |\mathcal{A}^{\mybot,\myplus}|} (\alpha - \beta).
% \end{align}
$\frac{|\mathcal{A}^{\mybot,\myplus}| |\mathcal{A}^{\mytop}|}{|\mathcal{A}^{\mytop}| + |\mathcal{A}^{\mybot,\myplus}|}$ is an increasing function of $|\mathcal{A}^{\mybot,\myplus}|$ and $|\mathcal{A}^{\mytop}|$ and both those quantities are larger than $1$. Hence:
\begin{align}
    \sum_{i \in \mathcal{A}^{\mytop}} \mathcal{P}_{\Delta_n}(x+y)_i - x_i &\geq \frac{\alpha - \beta}{2},
\end{align}
which concludes the proof.
\end{proof}

\subsection{Non-asymptotic convergence rates under \ass{2}}

We now define an extra assumption that allows us to get non-asymptotic convergence rates under \ass{2}.

\begin{enumerate} 
    \item[] \con{7}. There exists $d_{\mytop} \geq d_{\mybot} > 0$ such that $\forall s,t, d_{\mytop} \geq d_t(s) \geq d_{\mybot}$.
\end{enumerate}

We will be needing $\Big\lVert \frac{1}{p_0} \Big\rVert_\infty$ and note that under \ass{2}, it is a well-defined quantity.

Finally, we let $M_t \in \mathbb{R}^{|\mathcal{S}||\mathcal{A}| \times |\mathcal{S}||\mathcal{A}|}$ be the diagonal matrix defined by $M_t(s,a;s,a) = \frac{d_t(s)}{d_{\pi_t,\gamma}(s)}$ and $0$ elsewhere. We note that:
\begin{align}
    U(\theta_t,d_t) = M_t \nabla_{\theta|\theta_t} \mathcal{J}(\pi_t).
\end{align}
We also denote by $\lambda_t = \min_s \frac{d_t(s)}{d_{\pi_t,\gamma}(s)}$ and $\Lambda_t = \max_s \frac{d_t(s)}{d_{\pi_t,\gamma}(s)}$ the smallest and largest eigenvalues of $M_t$, and by $\kappa_t = \frac{\Lambda_t}{\lambda_t}$ its condition number. We have the following inequalities:
\begin{align}
    \lambda_t \geq (1 - \gamma) d_{\mybot}, \hspace{2cm} \Lambda_t \leq d_{\mytop}\Big\lVert \frac{1}{p_0} \Big\rVert_\infty, \hspace{2cm} \kappa_t \leq \Big\lVert \frac{1}{p_0} \Big\rVert_\infty \frac{d_{\mytop}}{(1 - \gamma) d_{\mybot}}.\label{eq:kappa}
\end{align}

\begin{theorem}[\textbf{Convergence rate for the softmax parametrization under \ass{2}}]\label{thm:ass3_softmax}
With a softmax parametrization, under \ass{2}, \con{7} and \ass{8},
and with $\eta = \frac{(1-\gamma)^3}{8}\frac{1}{d_{\mytop}} \Big\lVert \frac{1}{p_0} \Big\rVert_\infty^{-1}$, we have the following convergence rate for any $s$:
\begin{align}
    v_\infty(s) - v_t(s) \leq \frac{16 |\mathcal{S}| C}{(1 - \gamma)^7 t} \cdot \Big\lVert \frac{d_{\pi_\star,\gamma}}{p_0} \Big\rVert_\infty^2 \, \Big\lVert \frac{1}{p_0} \Big\rVert_\infty^2 \frac{d_{\mytop}}{d_{\mybot}},
\end{align}
where $C = \frac{1}{\inf_{s\in\mathcal{S},t\geq 1} \pi_t(\pi_\star(s)|s)^2}$ and $\pi_\star$ is the unique (per \ass{8}) optimal policy .
\end{theorem}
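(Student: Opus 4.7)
The plan is to adapt the convergence-rate technique of~\cite{mei2020global} to our preconditioned update $U(\theta_t, d_t) = M_t \nabla_\theta \mathcal{J}(\pi_t)$, treating $M_t$ as a positive-definite diagonal preconditioner whose spectrum is uniformly controlled by $\lambda_t \geq (1-\gamma) d_{\mybot}$ and $\Lambda_t \leq d_{\mytop} \lVert 1/p_0\rVert_\infty$ (recall \eq{}~\eqref{eq:kappa}). The argument would proceed in four steps: (i) a preconditioned descent lemma, (ii) the non-uniform \L{}ojasiewicz inequality of~\cite{mei2020global}, (iii) solving a quadratic difference inequality, and (iv) converting the scalar objective gap into the pointwise value gap using \ass{2}.

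For step (i), I would start from the $L$-smoothness of $\theta \mapsto \mathcal{J}(\pi_\theta)$ with $L = 8/(1-\gamma)^3$ (\cite{agarwal2019optimality,mei2020global}) and apply it to $\theta_{t+1} - \theta_t = \eta M_t \nabla \mathcal{J}(\pi_t)$. The elementary inequality $\lVert M_t x \rVert^2 \leq \Lambda_t \langle x, M_t x\rangle$, valid for any diagonal PSD $M_t$ and any vector $x$, gives
\begin{equation*}
    \mathcal{J}(\pi_{t+1}) - \mathcal{J}(\pi_t) \;\geq\; \eta\bigl(1 - \tfrac{L\eta\Lambda_t}{2}\bigr)\, \langle \nabla \mathcal{J}(\pi_t), M_t \nabla \mathcal{J}(\pi_t)\rangle \;\geq\; \tfrac{\eta\lambda_t}{2}\, \lVert \nabla \mathcal{J}(\pi_t)\rVert^2,
\end{equation*}
where the last inequality uses that the choice $\eta = (1-\gamma)^3 / (8 d_{\mytop} \lVert 1/p_0\rVert_\infty) = 1/(L\, \sup_t \Lambda_t)$ ensures $L\eta\Lambda_t \leq 1$ for every $t$. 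For step (ii), I would invoke Mei et al.'s non-uniform \L{}ojasiewicz inequality (which requires \ass{8}) and upper bound the appearing ratio $\lVert d_{\pi_\star,\gamma}/d_{\pi_t,\gamma}\rVert_\infty$ by $\lVert d_{\pi_\star,\gamma}/p_0\rVert_\infty$ using $d_{\pi_t,\gamma}(s) \geq p_0(s)$. Combining (i) and (ii) with $\delta_t \doteq \mathcal{J}(\pi_\star) - \mathcal{J}(\pi_t)$ yields
\begin{equation*}
    \delta_t - \delta_{t+1} \;\geq\; \tfrac{\eta\lambda_t}{2\, |\mathcal{S}|\, C}\, \lVert d_{\pi_\star,\gamma}/p_0 \rVert_\infty^{-2}\, \delta_t^2.
\end{equation*}

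Step (iii) is the standard quadratic-inequality trick: by monotonicity (\thm{}~\ref{thm:monotonicity}), $\delta_{t+1} \leq \delta_t$, so $1/\delta_{t+1} - 1/\delta_t \geq (\delta_t - \delta_{t+1})/\delta_t^2$, and telescoping this uniform lower bound gives $\delta_t = \mathcal{O}(|\mathcal{S}|\, C\, \lVert d_{\pi_\star,\gamma}/p_0\rVert_\infty^2 / (\eta\lambda_t\, t))$. For step (iv), the identity $\mathcal{J}(\pi) = \sum_s p_0(s) v_\pi(s)$ combined with the pointwise monotonicity from \thm{}~\ref{thm:monotonicity} (which guarantees $v_t \leq v_\star$) yields $v_\star(s) - v_t(s) \leq \lVert 1/p_0\rVert_\infty\, \delta_t$. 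Substituting the uniform lower bound $\eta\lambda_t \geq (1-\gamma)^4 d_{\mybot}/(8 d_{\mytop} \lVert 1/p_0\rVert_\infty)$ and collecting the various factors produces the claimed rate.

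The main obstacle is the preconditioned descent lemma in step (i): a naive application of smoothness with a varying preconditioner only yields a positive descent when the condition number $\kappa_t \leq 2$, which is far from our regime. The refinement $\lVert M_t \nabla \rVert^2 \leq \Lambda_t \langle \nabla, M_t \nabla\rangle$ is what allows the mild learning-rate condition $\eta \leq 1/(L \Lambda_t)$ (matching the specified $\eta$) rather than the much more stringent $\eta \leq \lambda_t/(L \Lambda_t^2)$, and it is the reason the bound scales linearly with $\Lambda_t$ rather than quadratically. A secondary delicate point is that the constant $C = 1/\inf_{s,t} \pi_t(\pi_\star(s)|s)^2$ must actually be finite along the trajectory; this is the same structural assumption as in~\cite{mei2020global} and is baked into the statement.
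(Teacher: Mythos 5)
Your proposal is correct and follows essentially the same route as the paper's proof: smoothness of $\mathcal{J}$ with $L=8/(1-\gamma)^3$, the per-coordinate bound $\lVert M_t x\rVert^2 \leq \Lambda_t \langle x, M_t x\rangle$ (which is exactly the paper's state-by-state estimate), the learning rate $\eta = 1/(L\,\sup_t \Lambda_t)$, Mei et al.'s non-uniform \L{}ojasiewicz inequality under \ass{8}, and the final conversion via $\lVert 1/p_0\rVert_\infty$. The only cosmetic differences are that you telescope $1/\delta_t$ where the paper runs an induction on $\delta_t \leq 1/(\min_{i\leq t}K_i\, t)$, and your bookkeeping of the $(1-\gamma)$ powers in the last step is if anything slightly tighter than the paper's.
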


\begin{proof}
We note that this result is a generalization of the bound from~\cite{mei2020global} to our more general update rule. The extra $\Big\lVert \frac{1}{p_0} \Big\rVert_\infty \frac{d_{\mytop}}{(1 - \gamma) d_{\mybot}}$ term, coming from the upper bound on $\kappa_t$, is the price we pay for the generalization.

Our proof follows the logic from~\cite{mei2020global} and relies on the fact that $M_t$ is diagonal and positive-definite. Hence it performs some form of conditioning on the gradient, which will affect the convergence rate, but does not change the overall evolution of the policies. More formally, we start by using the strong convexity of $\mathcal{J}$ to get:

\begin{align}
    |\mathcal{J}(\pi_{t+1}) - \mathcal{J}(\pi_t) - \langle\nabla_\theta \mathcal{J}(\pi_t), \theta_{t+1} - \theta_{t}\rangle| \leq \frac{4}{(1 - \gamma)^3} \|\theta_{t+1} - \theta_{t}\|^2.
\end{align}

Now, reusing notations from~\cite{mei2020global}, we let $\delta_t = \mathcal{J}(\pi_*) - \mathcal{J}(\pi_t)$. Let us also remember that $U(\theta_t,d_t) = M_t \nabla_\theta \mathcal{J}(\pi_t)$, and that $\lambda_t$ and $\Lambda_t$ are respectively the smallest and largest eigenvalues of $M_t$. Then
\begin{align}
    \delta_{t+1} - \delta_t &= \mathcal{J}(\pi_t) - \mathcal{J}(\pi_{t+1}) \\
    &\leq - \langle \nabla_\theta \mathcal{J}(\pi_t), \theta_{t+1} - \theta_{t} \rangle + \frac{4}{(1 - \gamma)^3} \|\theta_{t+1} - \theta_{t}\|^2 \\
    &= - \eta_t \langle \nabla_\theta \mathcal{J}(\pi_t), M_t \nabla_\theta \mathcal{J}(\pi_t) \rangle + \frac{4 \eta_t^2}{(1 - \gamma)^3} \|M_t \nabla_\theta \mathcal{J}(\pi_t)\|^2 \\
    &= \sum_s \frac{d_t(s)}{d_{\pi_t,\gamma}(s)} \left(- \eta_t + \frac{4 \eta_t^2}{(1 - \gamma)^3} \frac{d_t(s)}{d_{\pi_t,\gamma}(s)}\right) \langle \nabla_{\theta_s} \mathcal{J}(\pi_t), \nabla_{\theta_s} \mathcal{J}(\pi_t) \rangle \\
    &\leq \sum_s \frac{d_t(s)}{d_{\pi_t,\gamma}(s)} \left(- \eta_t + \frac{4 \eta_t^2}{(1 - \gamma)^3} \Lambda_t\right) \langle \nabla_{\theta_s} \mathcal{J}(\pi_t), \nabla_{\theta_s} \mathcal{J}(\pi_t) \rangle \\
    &\leq - \sum_s \frac{d_t(s)}{d_{\pi_t,\gamma}(s)} \frac{(1-\gamma)^3 }{16} \frac{1}{\Lambda_t} \langle \nabla_{\theta_s} \mathcal{J}(\pi_t), \nabla_{\theta_s} \mathcal{J}(\pi_t) \rangle \hspace{1cm} (\eta_t = \frac{(1-\gamma)^3 }{8} \frac{1}{\Lambda_t}) \\
    &\leq - \sum_s \frac{(1-\gamma)^3 }{16} \frac{\lambda_t}{\Lambda_t} \langle \nabla_{\theta_s} \mathcal{J}(\pi_t), \nabla_{\theta_s} \mathcal{J}(\pi_t) \rangle \\
    &= -\frac{(1-\gamma)^3 }{16} \frac{\lambda_t}{\Lambda_t} \| \nabla_\theta \mathcal{J}(\pi_t)\|^2. \label{eq:upper_bound_grad}
\end{align}

We now apply Lemma~8 from~\cite{mei2020global}, which holds under \ass{2} and \ass{8}, to get
\begin{align}
    \delta_{t+1} - \delta_t &\leq -\frac{(1-\gamma)^3}{16 |\mathcal{S}|} \frac{1}{\kappa_t} \Big\lVert \frac{d_{\pi_\star,\gamma}}{d_{\pi_t,\gamma}} \Big\rVert_\infty^{-2} \cdot \big[ \min_s \pi_t(a_\star(s)|s) \big]^2 \cdot |\mathcal{J}(\pi_*) - \mathcal{J}(\pi_t)|^2 \\
    &\leq -\frac{(1-\gamma)^3}{16 |\mathcal{S}|} \frac{1}{\kappa_t} \Big\lVert \frac{d_{\pi_\star,\gamma}}{p_0} \Big\rVert_\infty^{-2} \cdot \big[ \inf_{t,s} \pi_t(a_\star(s)|s) \big]^2  \cdot \delta_t^2,
\end{align}
where $a_\star(s)$ is the unique optimal action in $s$ (per \ass{8}). We know from \thm{} \ref{thm:optimality-softmax} that for all $s$, $\pi_t(a_\star(s)|s) \to 1$. Thus $\inf_{t,s} \pi_t(a_\star(s)|s) > 0$ and we can define $1 / C \doteq \big[ \inf_{t,s} \pi_t(a_\star(s)|s) \big]^2 > 0$.

Let us write $K_t \doteq \frac{(1-\gamma)^3}{16 |\mathcal{S}| C } \frac{1}{\kappa_t} \Big\lVert \frac{d_{\pi_\star,\gamma}}{p_0} \Big\rVert_\infty^{-2}$. 
We have $\delta_{t+1} - \delta_t \leq - K_t \delta_t^2$. Exactly as in~\cite{mei2020global}, we now proceed by induction to show that $\delta_t \leq \frac{1}{\min_{i \leq t} K_i t}$.

For $t = 2$, we clearly have $\delta_1 \leq \frac{1}{1-\gamma} \leq \frac{1}{2 K_1}$.
Now, $\delta_{t+1} \leq \delta_t - \min_{i \leq t} K_i \delta_t^2$. On $[0, \frac{1}{2 \min_{i \leq t} K_i}]$, the function $x - \min_{i \leq t} K_i x^2$ is increasing, hence:
\begin{align}
    \delta_{t+1} \leq \frac{1}{\min_{i \leq t} K_i t} - \frac{\min_{i \leq t} K_i}{\min_{i \leq t} K_i^2 t^2} \leq \frac{1}{\min_{i \leq t} K_i (t+1)} \leq \frac{1}{\min_{i \leq t+1} K_i (t+1)},
\end{align}
which concludes the induction.

In the end, we get:
\begin{align}
    \mathcal{J}(\pi_*) - \mathcal{J}(\pi_t) &\leq \frac{16 |\mathcal{S}| C}{(1-\gamma)^3} \max_{i \leq t} \kappa_i \Big\lVert \frac{d_{\pi_\star,\gamma}}{p_0} \Big\rVert_\infty^{2} \frac{1}{t} \\
    &\leq \frac{16 |\mathcal{S}| C}{(1-\gamma)^4} \Big\lVert \frac{1}{p_0} \Big\rVert_\infty \frac{d_{\mytop}}{d_{\mybot}} \Big\lVert \frac{d_{\pi_\star,\gamma}}{p_0} \Big\rVert_\infty^{2} \frac{1}{t},
\end{align}
where in the last line we used the bound on $\kappa$ from Eq~\eqref{eq:kappa}.
Note that to match the definitions in the bound from~\cite{mei2020global}, we can write is as:
\begin{equation*}
    \mathcal{J}(\pi_*) - \mathcal{J}(\pi_t) \leq \frac{16 |\mathcal{S}| C}{(1-\gamma)^6 t}  \Big\lVert \frac{(1-\gamma) d_{\pi_\star,\gamma}}{p_0} \Big\rVert_\infty^{2} \, \Big\lVert \frac{1}{p_0} \Big\rVert_\infty \frac{ d_{\mytop}}{d_{\mybot}}.
\end{equation*}
Finally, we move from $\mathcal{J}(\pi_*)$ and $\mathcal{J}(\pi_t)$ to $v_\infty(s)$ and $v_t(s)$ which introduces an extra $\frac{1}{1 - \gamma} \Big\lVert \frac{1}{p_0} \Big\rVert_\infty$ and concludes the proof.
\end{proof}

% Let us now prove the equivalent of Lemma~9 from~\cite{mei2020global}.
% \begin{lemma}
% \label{lem:9}
% $\inf_{s \in \mathcal{S}, t\geq 1}, \pi_t(a_\star(s)|s) > 0$
% \end{lemma}
% \begin{proof}
% We can reuse most of the proof from~\cite{mei2020global}. Two things need proper verification. 
% \begin{itemize}
%     \item The definition of $\mathcal{R}_1(s)$ which involves the true gradient.
%     \item The use of an existing asymptotic convergence result from~\cite{agarwal2019optimality}.
% \end{itemize}

% First, we see that
% \begin{align}
%     \mathcal{R}_1(s) = \lbrace \theta : (U(\pi_\theta, d_t))_{s,a_\star(s)} \geq (U(\pi_\theta, d_t))_{s,a}, \forall t, a \neq a_\star\rbrace.
% \end{align}
% This is trivial from the fact that $\forall s, t, a$:
% \begin{align}
%     (U(\pi_\theta, d_t))_{s,a} = \frac{d_t(s)}{d_{\pi_\theta,\gamma}(s)} \frac{\partial \mathcal{J}(\pi_\theta)}{\partial \theta_{s,a}},
% \end{align}
% which implies that for a given state $s$ the ordering with respect to actions is the same when using the true gradients than when using $U$. In other words, the properties of $\mathcal{R}_1(s)$ hold in our setting as well.

% The second important fact that~\cite{mei2020global} relies on is $\pi_t(a_\star(s)|s) \to 1$, which corresponds to the asymptotic convergence result that we have proved in \thm{}~\ref{thm:optimality-softmax}. The rest of the proof is identical, we omit it for conciseness.
% \end{proof}

We can use the above result to compute the time at which the rates from \thm{} \ref{thm:softmax_rates} become valid.
\begin{corollary}
$t_0$ from \thm{} \ref{thm:softmax_rates} can be chosen as follows:
\begin{align}
    t_0 = \frac{32 |\mathcal{S}| C}{(1 - \gamma)^7} \cdot \Big\lVert \frac{d_{\pi_\star,\gamma}}{p_0} \Big\rVert_\infty^2 \cdot \Big\lVert \frac{1}{p_0} \Big\rVert_\infty^2 \frac{d_{\mytop}}{d_{\mybot}} \frac{1}{\inf_s \delta(s)}.
\end{align}
\end{corollary}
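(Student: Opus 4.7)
The plan is to read off $t_0$ by matching the non-asymptotic bound of \thm{}~\ref{thm:ass3_softmax} against the threshold condition that is implicitly used inside the proof of \thm{}~\ref{thm:softmax_rates}. Recall that in that proof, a state $s$ is fixed and $t_0$ is chosen as the first index from which $\lVert q_\star - q_t \rVert_\infty \leq \delta(s)/2$. Since the corollary asks for a single $t_0$ valid in every state, the sufficient condition becomes $\lVert q_\star - q_t \rVert_\infty \leq \tfrac{1}{2}\inf_s \delta(s)$ for all $t \geq t_0$, and the whole task reduces to inverting a non-asymptotic value-error bound against this threshold.

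First, I note that \ass{2} and \con{7} imply \con{4}, so \thm{}~\ref{thm:optimality} gives $v_\infty = v_\star$, and \thm{}~\ref{thm:ass3_softmax} applies directly. For every state $s$ it yields
\begin{align*}
v_\star(s) - v_t(s) \leq \frac{16 |\mathcal{S}| C}{(1 - \gamma)^7 \, t} \Big\lVert \frac{d_{\pi_\star,\gamma}}{p_0} \Big\rVert_\infty^2 \Big\lVert \frac{1}{p_0} \Big\rVert_\infty^2 \frac{d_{\mytop}}{d_{\mybot}}.
\end{align*}
Second, the Bellman identity $q_\star(s,a) - q_t(s,a) = \gamma\,\mathbb{E}_{s'\sim p(\cdot|s,a)}[v_\star(s') - v_t(s')]$ propagates this bound to $q$: up to the harmless factor $\gamma \leq 1$ that I drop, the same right-hand side upper-bounds $\lVert q_\star - q_t \rVert_\infty$.

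Third, I require this upper bound to be at most $\tfrac{1}{2}\inf_s \delta(s)$ and solve for $t$. A direct rearrangement gives
\begin{align*}
t \geq \frac{32 |\mathcal{S}| C}{(1-\gamma)^7 \inf_s \delta(s)} \Big\lVert \frac{d_{\pi_\star,\gamma}}{p_0} \Big\rVert_\infty^2 \Big\lVert \frac{1}{p_0} \Big\rVert_\infty^2 \frac{d_{\mytop}}{d_{\mybot}},
\end{align*}
which is precisely the announced value of $t_0$.

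There is no substantive obstacle: the steps are bookkeeping. The only points worth verifying are that \ass{8} plus the finiteness of $\mathcal{S}$ ensure $\inf_s \delta(s) > 0$ so the formula is well defined, and that the hypotheses of \thm{}~\ref{thm:ass3_softmax} (namely \ass{2}, \con{7}, and \ass{8}) are the ones already in force when we invoke the corollary. Both are immediate.
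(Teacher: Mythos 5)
Your proposal is correct and follows essentially the same route as the paper: invert the non-asymptotic bound of \thm{}~\ref{thm:ass3_softmax} against the threshold $\lVert q_\star - q_t\rVert_\infty \leq \tfrac{1}{2}\inf_s\delta(s)$ under which the argument of \thm{}~\ref{thm:softmax_rates} kicks in. The Bellman-identity step propagating the value bound to $q$ is a detail the paper leaves implicit ("it is straightforward to show"), and your version makes it explicit correctly.
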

\begin{proof}
We know that the rates from \thm{} \ref{thm:softmax_rates} kick in for $t_0$ such that for any $s,a$ and $t \geq t_0$:
    \begin{align}
        v_\infty(s) - \frac{\delta(s)}{2} &\leq v_t(s) \leq v_\infty(s), \nonumber \\
        q_\infty(s,a) - \frac{\delta(s)}{2} &\leq q_t(s,a) \leq q_\infty(s,a). \nonumber
    \end{align}
    
From \thm{}~\ref{thm:ass3_softmax}, it is straightforward to show that it is verified for:
\begin{equation*}
    t_0 \geq \frac{32 |\mathcal{S}| C}{(1 - \gamma)^7 t} \cdot \Big\lVert \frac{d_{\pi_\star,\gamma}}{p_0} \Big\rVert_\infty^2 \, \Big\lVert \frac{1}{p_0} \Big\rVert_\infty^2 \frac{d_{\mytop}}{d_{\mybot}} \frac{1}{\inf_s \delta(s)}. \qedhere
\end{equation*}
\end{proof}

\section{Domains}
\label{app:domains}
\subsection{Chain domain}
\label{app:chain}
    \begin{figure}[h]
        \begin{center}
            \scalebox{0.7}{
                \begin{tikzpicture}[->, >=stealth', scale=0.6 , semithick, node distance=2cm]
                    \tikzstyle{every state}=[fill=white,draw=black,thick,text=black,scale=1]
                    \node[state]    (x0)                {$s_0$};
                    \node[state]    (x1)[right of=x0]   {$s_1$};
                    \node[state]    (x2)[right of=x1]   {$s_2$};
                    \node[state]    (x3)[right of=x2]   {$s_3$};
                    \node[state]    (x4)[right of=x3]   {$s_4$};
                    \node[state]    (x5)[right of=x4]   {$s_5$};
                    \node[state]    (x6)[right of=x5]   {$s_6$};
                    \node[state]    (x7)[right of=x6]   {$s_7$};
                    \node[state]    (x8)[right of=x7]   {$s_8$};
                    \node[state,accepting]    (x9)[right of=x8]   {$s_9$};
                    \node[state,accepting]    (x-1)[above of=x4]   {$s_{-1}$};
                    \node[] (y)[above of=x5] {$r=\beta\gamma^8$};
                    \path
                    (x0) edge[above]    node{$a_1$}     (x-1)
                    (x1) edge[above]    node{$a_1$}     (x-1)
                    (x2) edge[above]    node{$a_1$}     (x-1)
                    (x3) edge[above]    node{$a_1$}     (x-1)
                    (x4) edge[above]    node{$a_1$}     (x-1)
                    (x5) edge[above]    node{$a_1$}     (x-1)
                    (x6) edge[above]    node{$a_1$}     (x-1)
                    (x7) edge[above]    node{$a_1$}     (x-1)
                    (x8) edge[above]    node{$a_1$}     (x-1)
                    (x0) edge[above]    node{$a_2$}     (x1)
                    (x1) edge[above]    node{$a_2$}     (x2)
                    (x2) edge[above]    node{$a_2$}     (x3)
                    (x3) edge[above]    node{$a_2$}     (x4)
                    (x4) edge[above]    node{$a_2$}     (x5)
                    (x5) edge[above]    node{$a_2$}     (x6)
                    (x6) edge[above]    node{$a_2$}     (x7)
                    (x7) edge[above]    node{$a_2$}     (x8)
                    (x8) edge[above]    node{$a_2$}     (x9)
                    (x8) edge[below]    node{$r=1$}   (x9);
                \end{tikzpicture}
            }
            \caption{Deterministic chain MDP. Initial state is $s_0$. Reward is 0 everywhere except when accessing final states $s_{-1}$ and $s_9$. Reward in $s_{-1}$ is set such that $q(s_0,a_1)=\beta q_\star(s_0,a_2)$, with $\beta\in[0,1)$.}
            \label{appfig:chain}
        \end{center}
    \end{figure}
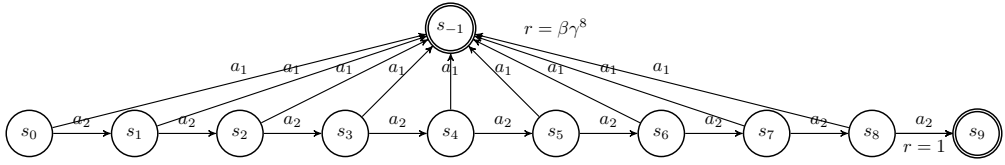
    
    The chain domain is designed to measure the ability of algorithms to overcome misleading rewards, \textit{i.e.} immediate rewards that would guide the updates towards a suboptimal solution. In every state $s_k$, the agent has the opportunity to play action $a_1$ and receive an immediate reward of $\beta\gamma^{|\mathcal{S}|-2}$, or to play $a_2$ and progress to next state $s_{k+1}$ without any immediate reward. A reward of 1 is eventually obtained when reaching state $s_{|\mathcal{S}|-1}$. \fig{} \ref{appfig:chain} represents a chain of size 10 (both terminal states $s_{-1}$ and $s_{\mathcal{S}-1}$ count as a single state). Formally, the deterministic MDP $\langle \mathcal{S}, \mathcal{A}, p, p_0, r, \gamma\rangle$ is constructed from two hyperparameters: the size $|\mathcal{S}|$ and the ratio $\beta$:
    \renewcommand{\arraystretch}{0.1}
    \begin{align}
        \mathcal{S} &\doteq \{s_i\}_{i\in\llbracket 0,|\mathcal{S}|-1\rrbracket} \\
        \mathcal{A} &\doteq \{a_1, a_2\} \\
        p(\cdot|s,a) &\doteq \left\{
        \begin{array}{l}
             \forall s, \,\, p(s_{|\mathcal{S}|-1}|s,a_1)= 1 \\
             \forall i<|\mathcal{S}|-1, \,\, p(s_{i+1}|s_i,a_2)= 1 \\
             \text{the episode terminates when } s_{|\mathcal{S}|-1} \text{ is reached} 
        \end{array} \right. \\
        p_0(s_0) &\doteq 1 \\
        r(s,a) &\doteq \left\{
        \begin{array}{l}
             \forall s, \,\, p(s,a_1)= \beta\gamma^{|\mathcal{S}|-2} \\
             \forall i<|\mathcal{S}|-2, \,\, r(s_{i},a_2)= 0 \\
             r(s_{|\mathcal{S}|-1},a_2)= 1
        \end{array} \right. \\
        \gamma &\doteq 0.99.
    \end{align}
    
    There are two remarkable policies:
    \begin{itemize}
        \item Low-hanging-fruit policy: $\left\{
        \begin{array}{l}
             \pi_{\mybot}(a_1|s) = 1 \\
            v_{\mybot}(s) = \beta\gamma^{|\mathcal{S}|-2} 
        \end{array} \right. $
        \item Optimal policy: $\left\{
        \begin{array}{l}
            \pi_\star(a_2|s) = 1 \\
            v_\star(s_i) = \gamma^{|\mathcal{S}|-2-i} 
        \end{array} \right. $
    \end{itemize}
    
    The policy gradient is attracted by the low-hanging-fruit policy because of its shorter horizon, even though it is sub-optimal.
       
    In order to account for different experimental settings where the size of the chain $|\mathcal{S}|$ or the value ratio $\beta$ differ, we report the experimental results in terms of normalized expected return, formally computed as:
    \begin{align}
        \overline{\mathcal{J}}_\pi = \frac{\mathcal{J}_\pi-\mathcal{J}_{\mybot}}{\mathcal{J}_\star - \mathcal{J}_{\mybot}}, \label{eq:normalized_chain}
    \end{align}
    where $\mathcal{J}_\star = v_\star(s_0)$ is the performance of the optimal policy and $\mathcal{J}_{\mybot} = v_{\mybot}(s_0)$ the performance of the low-hanging-fruit policy. Note in particular that $\overline{\mathcal{J}}_\star = 1$ and $\overline{\mathcal{J}}_{\mybot} = 0$.
    
\subsection{Random MDPs domain}
\label{app:garnets}
    The Random MDPs domain is designed to test the algorithms in situations where the exploration is not an issue. Indeed, by its design of highly stochastic transition functions, Random MDPs will have a non-null chance to visit every state whatever the behavioural policy is. This is therefore a domain where we would expect the policy gradient update to perform well, if not the best, and hope our modified updates still perform comparably. We reproduce the Random MDPs environment published in \sect{} B.1.3 of \cite{Laroche2019}, for which we recall the specifications hereafter.
    
	The Random MDPs domain uses four parameters for the MDP generation: the number of states, the number of actions in each state, the connectivity of the transition function stating how many states are reachable after performing a given action in a given state, and the discount factor $\gamma$. In our experiments, we fix the number of actions to 4, the connectivity to 2, and the discount factor $\gamma$ to 0.99.
    	
	The initial state is arbitrarily set to be $s_0$, we then search with dynamic programming the performance of the optimal policy for all potential terminal state $s_f \in \mathcal{S}/{s_0}$. We select the terminal state for which the optimal policy yields the smaller value function and set it as terminal: $r(s,a,s_f)=1$ and $P(s|s_f,a)=0$ for all $s\in\mathcal{S}$ and $a\in\mathcal{A}$. The reward function is set to 0 everywhere else. We found that the optimal value-function is on average $0.6$, amounting to an average horizon of 10, and has surprising low variance. Below, we write this environmental MDP $\langle \mathcal{S}, \mathcal{A}, p, p_0, r, \gamma \rangle$, its optimal state-action value function $q_\star$, its optimal performance $\mathcal{J}_\star$, and its uniform policy performance  $\mathcal{J}_u$, where $\widetilde{\pi}$ denotes the uniform random policy: $\pi_u(a|s) = \frac{1}{|\mathcal{A}|}$ for all $s\in\mathcal{S}$ and all $a\in\mathcal{A}$.
	
    In order to account for different random MDPs, we report the experimental results in terms of normalized expected return, formally computed as:
    \begin{align}
        \overline{\mathcal{J}}_\pi = \frac{\mathcal{J}_\pi-\mathcal{J}_u}{\mathcal{J}_\star - \mathcal{J}_{u}}. \label{eq:normalized_garnets}
    \end{align}
    Note in particular that $\overline{\mathcal{J}}_\star = 1$ and $\overline{\mathcal{J}}_{u} = 0$.

% Choses a faire sur les figures:
%  grossir les legendes (courbes et axes)
% normalized expected return
\section{Full report of finite MDPs planning experiments (under \ass{1})}
\label{app:expes-exact}
\subsection{Chain domain}
\label{sec:chain-exact}
In classic algorithms, the actor is conflicted between two objectives: explore and visit the unknown states, or exploit and follow the best policy under the current incomplete knowledge. The chain experiment depicted on \fig{} \ref{appfig:chain} is intended to test actor-critic algorithms' ability to explore and find the optimal policy under misleading rewards, and then to commit to this optimal policy. In every state, the agent has the opportunity to choose action $a_1$ and receive a fair reward, or to go further without any reward signal and get a chance to eventually yield a higher return.

Our analysis underlines the importance of exploration and states the requirements for guaranteeing sufficient exploration. Our experiment validates these findings. We consider:
\begin{itemize}
    \item the on-policy updates:
    \begin{itemize}
        \item policy gradient (PG) distribution: $\frac{d_{\pi_t,\gamma}}{\Vert d_{\pi_t,\gamma} \rVert_1}$, 
        \item undiscounted distribution: $\frac{d_{\pi_t,1}}{\Vert d_{\pi_t,1} \rVert_1}$, 
    \end{itemize}
    \item uniform distribution: $d_u=\frac{1}{|\mathcal{S}|}$, 
    \item and off-policy trade-offs between the uniform and policy gradient distributions: $o_t d_u + (1-o_t) \frac{d_{\pi_t,\gamma}}{\Vert d_{\pi_t,\gamma} \rVert_1}$
    \begin{itemize}
        \item with $o_t$ constant: 0.1 and 0.48,
        \item with $o_t \in \Theta(t^{-0.48})$: $o_t = 10t^{-0.48}$,
        \item with $o_t \in \Theta(t^{-1})$: $o_t = 10t^{-1}$,
    \end{itemize}
\end{itemize}
\myuline{For a fair comparison, all update densities $d_t$ are normalized: $\sum_s d_t(s) = 1$.}

We consider four experiments:
\begin{itemize}
    \item \textbf{Exp1:} We observe instantaneous performance of each density schedule across iterations.
    \item We look at the speed of discovering the optimal policy. We measure it by tracking the number of updates required to reach a normalised return of 0.48 and its dependency on:
    \begin{itemize}
        \item \textbf{Exp2:} the number of states in the chain,
        \item \textbf{Exp3:} the value rate $\beta$ between the low-hanging-fruit policy and the optimal one,
        \item \textbf{Exp4:} the learning rate.
    \end{itemize}
\end{itemize}

For the softmax parametrization, we also add a policy entropy regularizing term $-\lambda \log \pi(\cdot|s)$ and observe its impact in \textbf{Exp5} (in replacement to \textbf{Exp4} which delivers the same results as in the direct parametrization), as is often done in practice.

\begin{figure*}[t]
	\centering
	\subfloat[vs $t$, with $\beta=0.95$, $|\mathcal{S}|=10$, $\eta=1$]{
		\includegraphics[trim = 5pt 5pt 5pt 5pt, clip, width=0.48\columnwidth]{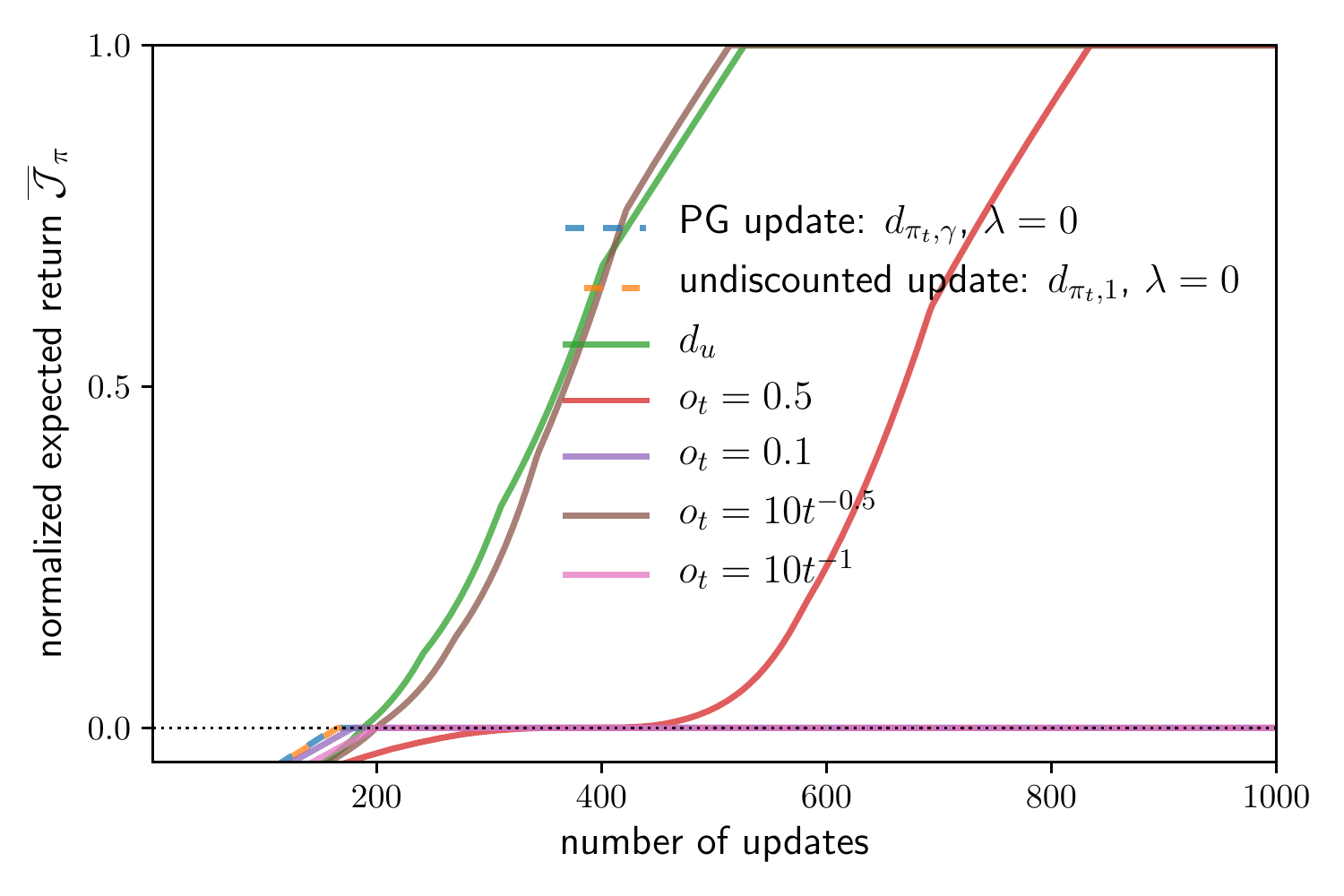}
		\label{fig:exact-direct-vs-t}
	}
	\subfloat[vs $|\mathcal{S}|$, with $\beta=0.95$, $\eta=1$]{
		\includegraphics[trim = 5pt 5pt 5pt 5pt, clip, width=0.48\columnwidth]{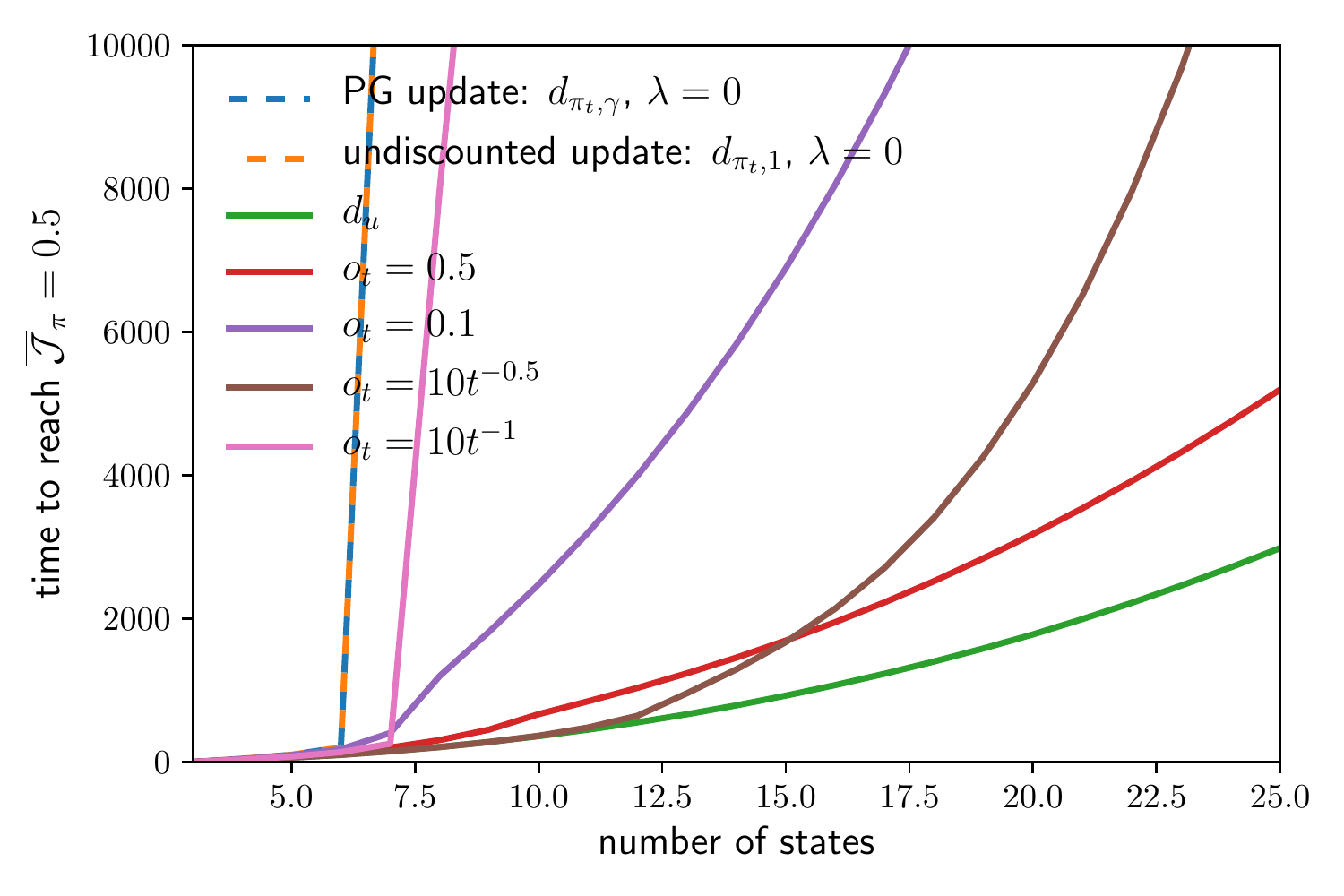}
		\label{fig:exact-direct-vs-S}
	}\\
	\subfloat[vs $\beta$,  with $|\mathcal{S}|=15$, $\eta=1$]{
		\includegraphics[trim = 5pt 5pt 5pt 5pt, clip, width=0.48\columnwidth]{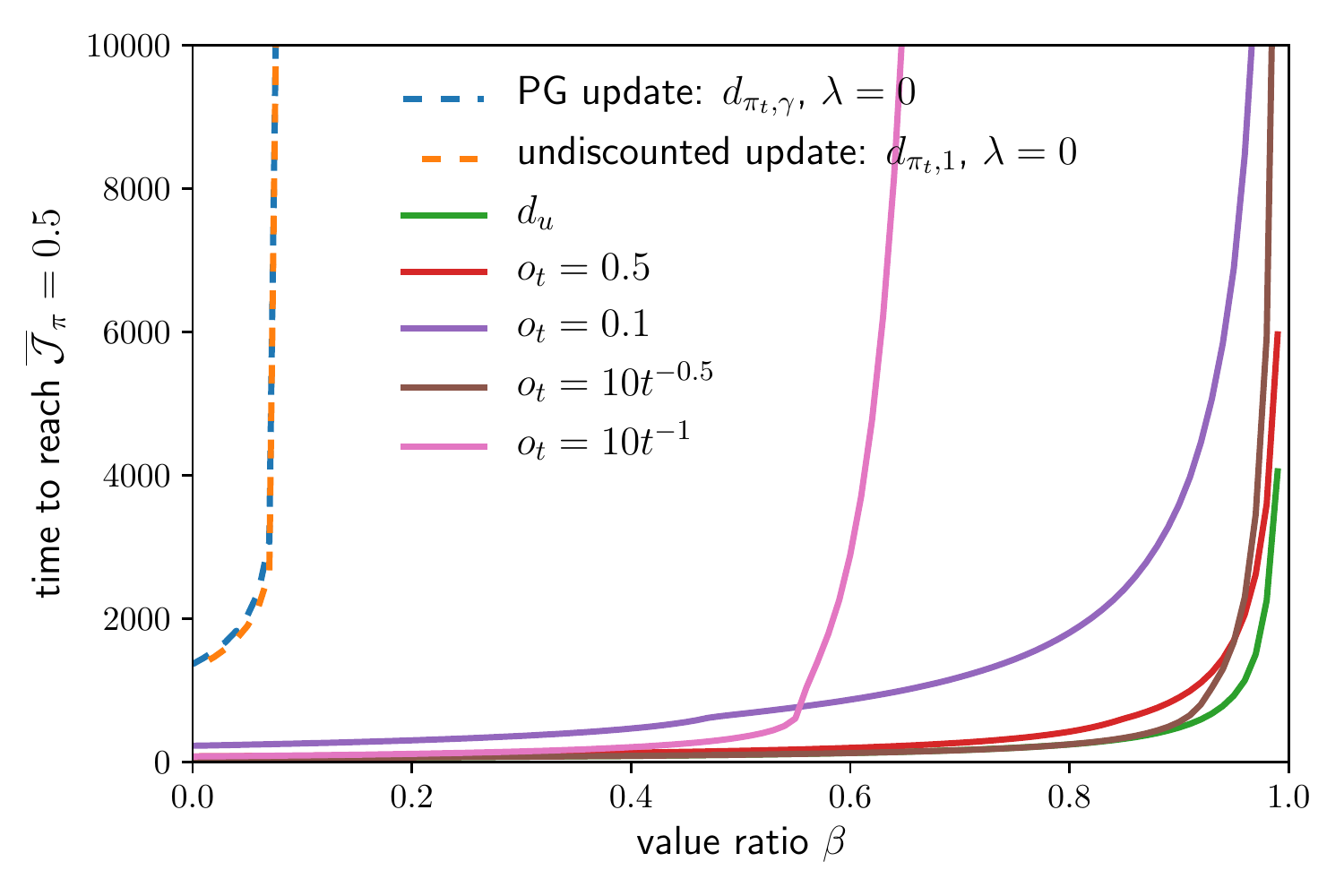}
		\label{fig:exact-direct-vs-beta}
	}
	\subfloat[vs $\eta$, with $\beta=0.48$, $|\mathcal{S}|=10$]{
		\includegraphics[trim = 5pt 5pt 5pt 5pt, clip, width=0.48\columnwidth]{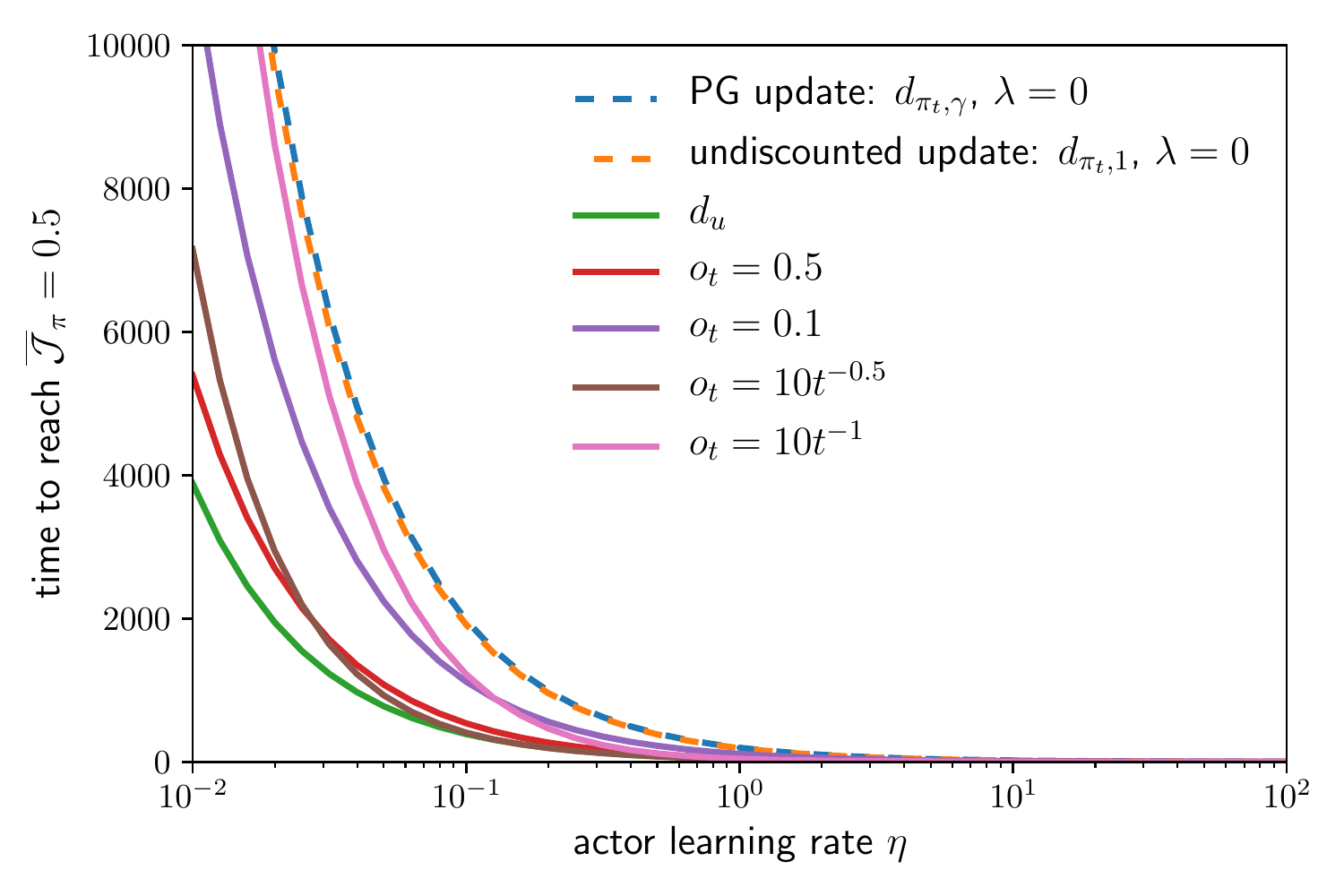}
		\label{fig:exact-direct-vs-eta}
	}
	\caption{Chain experiments with the direct parametrization under \ass{1}.}
	\label{fig:exact-direct}
	\vspace{-10pt}
\end{figure*}
\subsubsection{Direct parametrization}
\paragraph{Exp1:} Its results are displayed on \fig{} \ref{fig:exact-direct-vs-t}.
We observe that on-policy updates converge very fast to the low-hanging fruit policy. Fast convergence can be desirable in easy-planning tasks, but in harder tasks such as the chain it leads to suboptimal behavior. In contrast, relying on a uniform state distribution for the actor updates is efficient and eventually discovers the optimal policy as long as the off-policiness $o_t$ is strong enough. It is worth noting that even under the guarantees offered by having \con{4} satisfied (when $o_t\in\Omega(t^{-1})$, the policy can get stuck on the low-hanging-fruit policy for a long time (as for $o_t = 10t^{-1}$). However, once the optimal policy has been identified the convergence is fast and meets the optimal performance predicted by \thm{} \ref{thm:direct_rates}.

\paragraph{Exp2:} Its results are displayed on \fig{} \ref{fig:exact-direct-vs-S}.
We observe a failure mode of on-policy updates. When the chain is of size 7 or more, the policy converges to the suboptimal solution and will never get out of it. In contrast, \con{4} guarantees the policy will eventually be optimal. However, as illustrated by the $o_t=10t^{-1}$ curve, it might be very long (possibly exponentially long) if the condition is only satisfied tightly. The convergence gets faster as the weight granted to the uniform density increases.

\paragraph{Exp3:} Its results are displayed on \fig{} \ref{fig:exact-direct-vs-beta}.
We observe that the issues spotted in \textbf{Exp1} and \textbf{Exp2} is not due to the near-optimality of the low-hanging-fruit policy. Indeed, even with a reasonably short chain (15), and the low-hanging-fruit policy yielding an expected return 10 times smaller than optimal, the on-policy updates will still converge to it. In contrast, when the density requirement is satisfied, the updates reliably discover the optimal solution.

\paragraph{Exp4:} Its results are displayed on \fig{} \ref{fig:exact-direct-vs-eta}.
We observe that the actor learning rate does not have any impact on the ability of the update to lead to an optimal policy\footnote{We do not show the results, but on-policy updates never find the optimal policy when $\beta=0.7$ for any $\eta$.}. Sill, making it larger speeds up convergence, so in the idealized setting we considered in this section where \ass{1} is satisfied, the higher the actor learning rate, the better. And it has to be noted that setting it to infinity makes the update identical to a policy improvement step in the policy iteration algorithm.
% \remi{on pourrait calculer le learning rate qui rend pg et pi identiques si ça vaut le coup? ça dépendra de l'initialisation par contre} \romain{je vois pas l'interet. Tu pensais pour quoi ?}
    
\begin{figure*}[t]
	\centering
	\subfloat[$\beta=0.95$, $|\mathcal{S}|=10$, $\eta=10$, $\lambda=0.01$]{
		\includegraphics[trim = 5pt 5pt 5pt 5pt, clip, width=0.48\columnwidth]{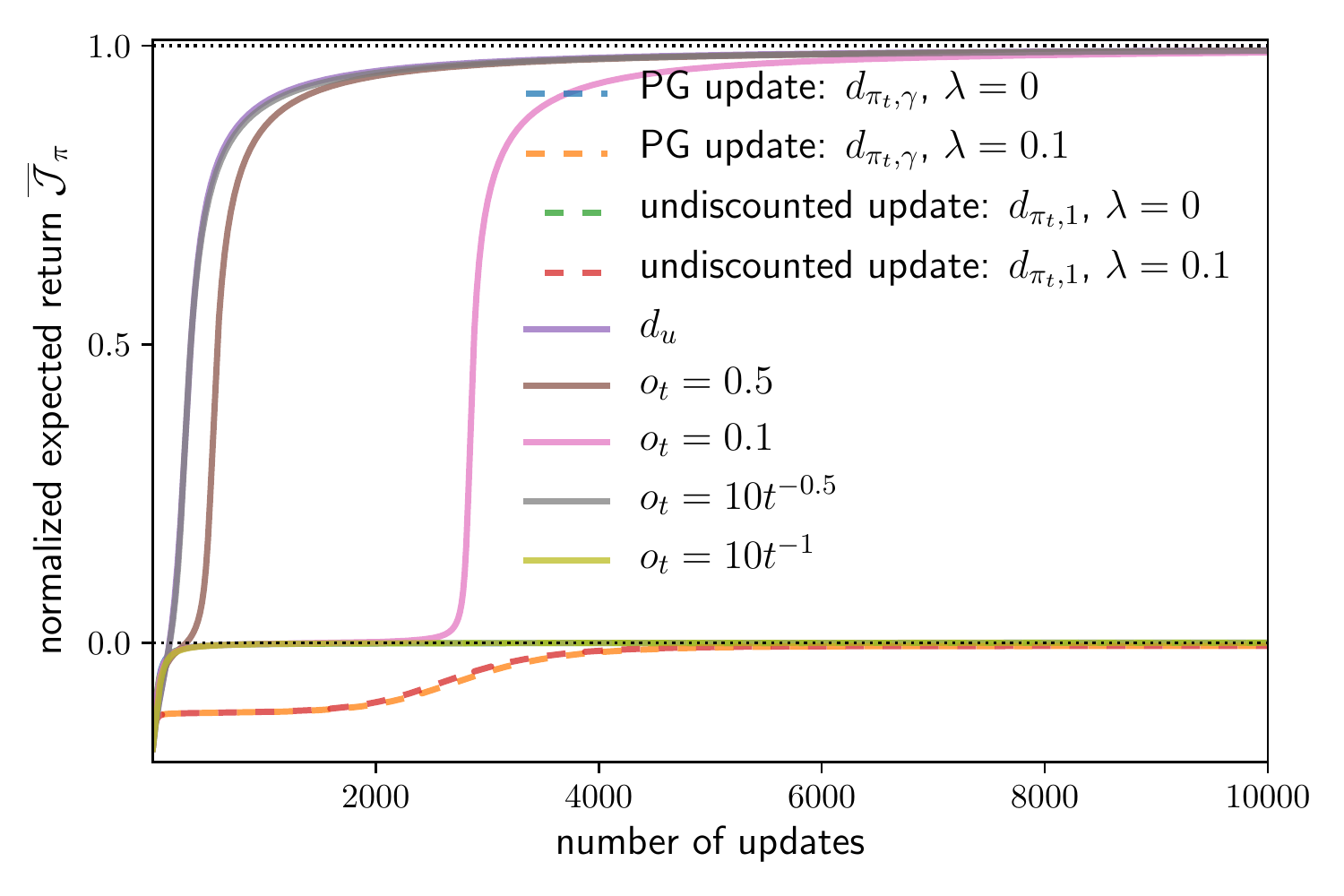}
		\label{fig:exact-softmax-vs-t}
	}
	\subfloat[vs $|\mathcal{S}|$, with $\beta=0.95$, $\eta=10$, $\lambda=0.01$]{
		\includegraphics[trim = 5pt 5pt 5pt 5pt, clip, width=0.48\columnwidth]{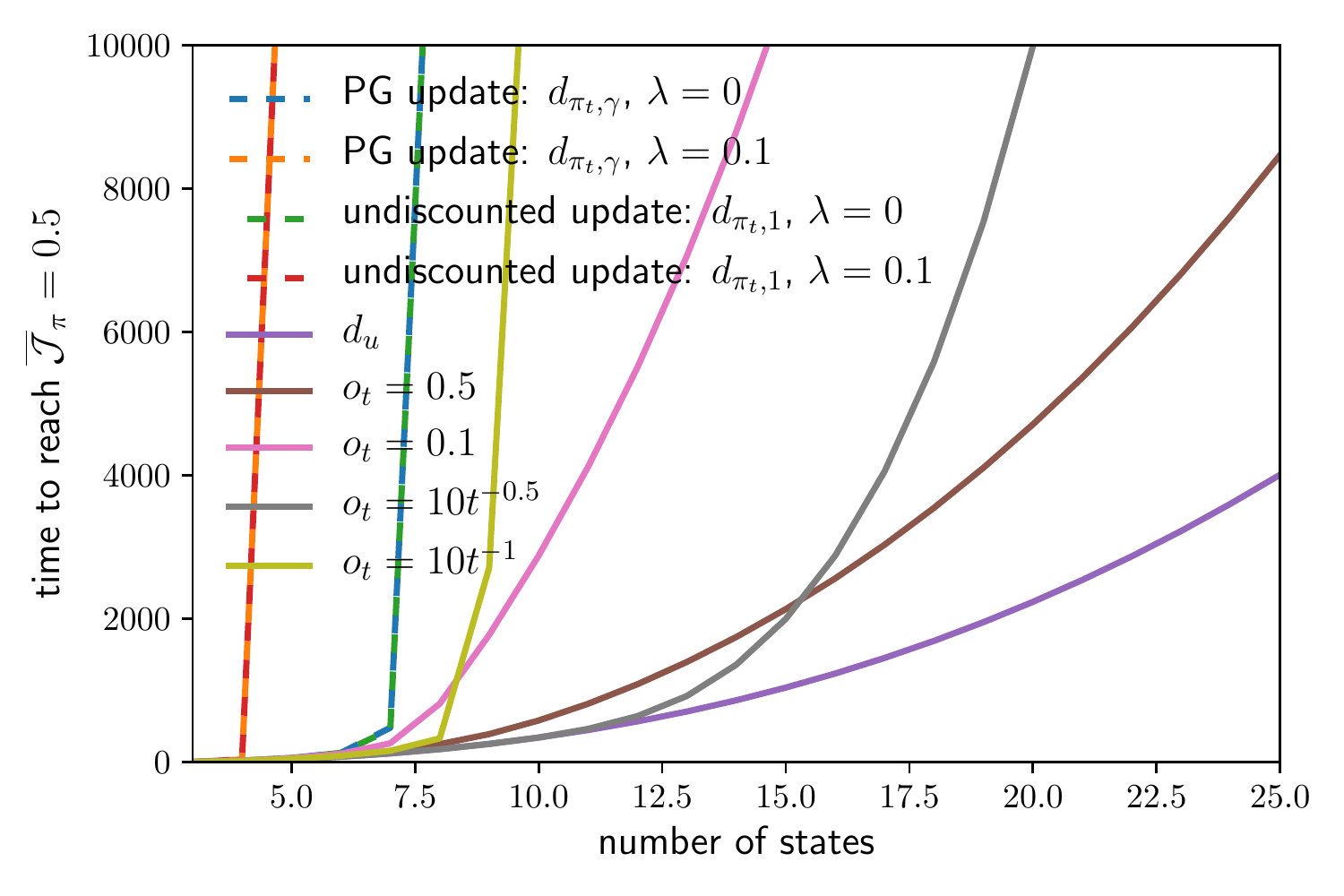}
		\label{fig:exact-softmax-vs-S}
	}\\
	\subfloat[vs $\beta$,  with $|\mathcal{S}|=15$, $\eta=10$, $\lambda=0.01$]{
		\includegraphics[trim = 5pt 5pt 5pt 5pt, clip, width=0.48\columnwidth]{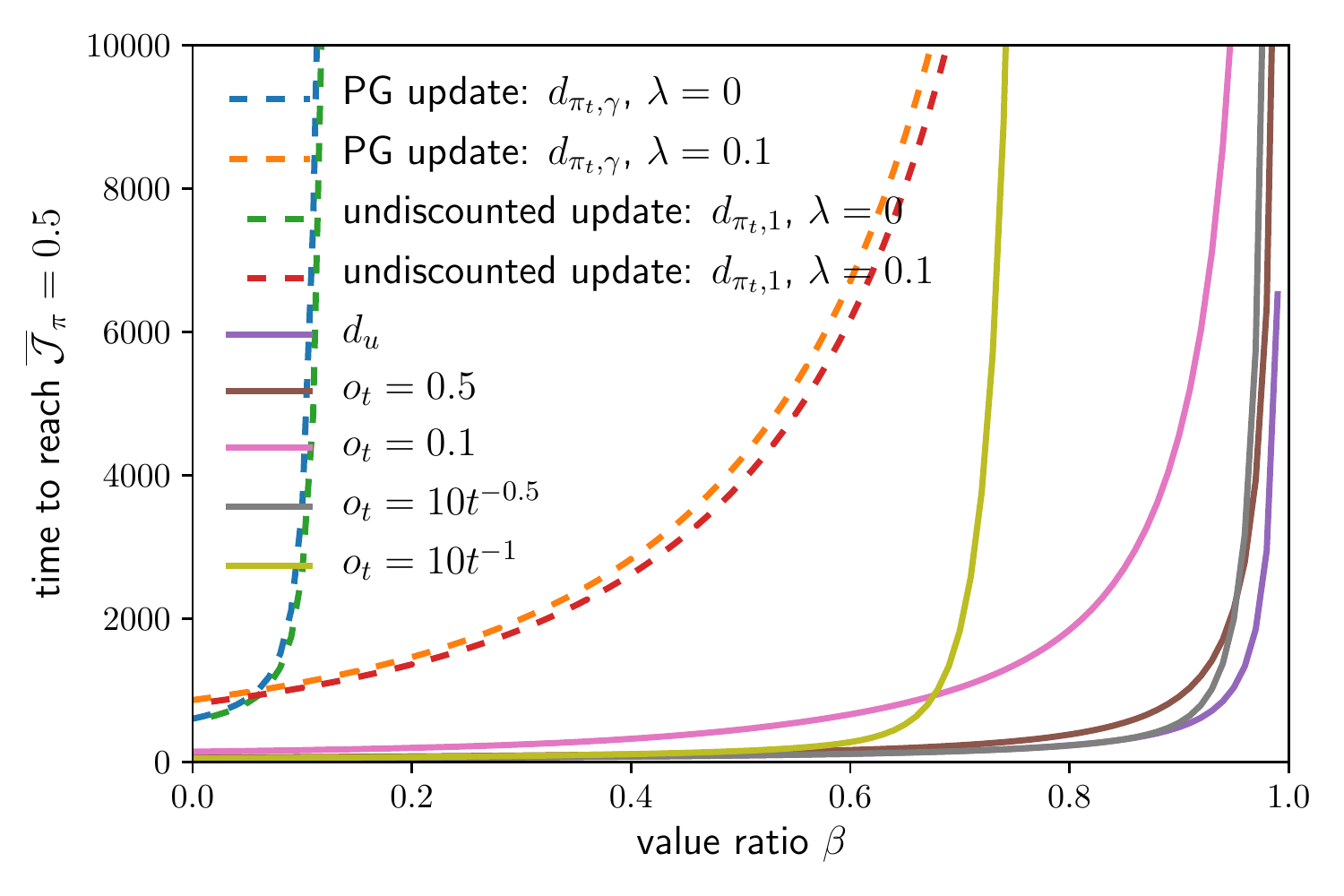}
		\label{fig:exact-softmax-vs-beta}
	}
	\subfloat[vs $\lambda$, with $\beta=0.48$, $|\mathcal{S}|=15$, $\eta=10$]{
		\includegraphics[trim = 5pt 5pt 5pt 5pt, clip, width=0.48\columnwidth]{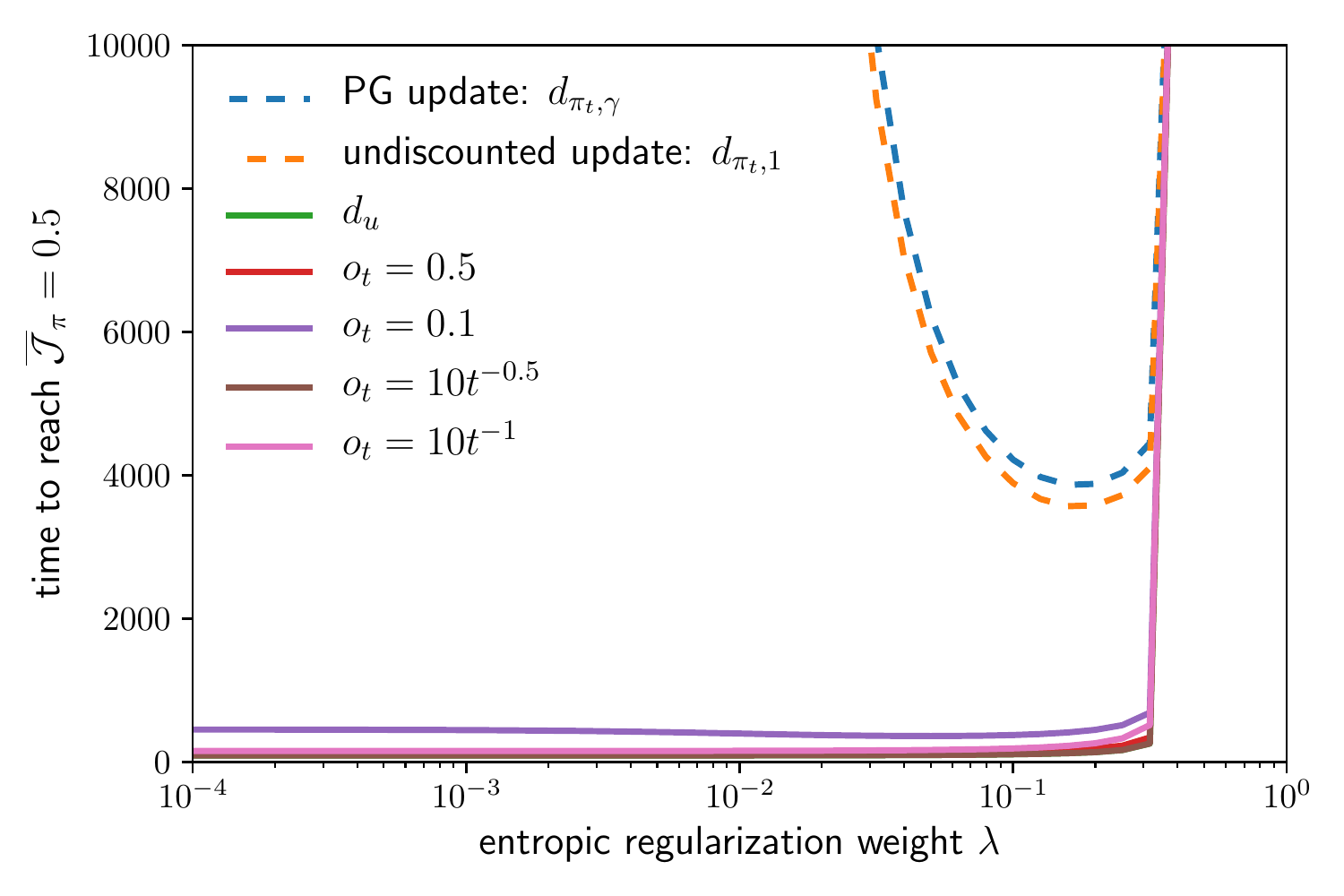}
		\label{fig:exact-softmax-vs-lambda}
	}
	\caption{Chain experiments with the softmax parametrization under \ass{1}.}
	\label{fig:exact-softmax}
	\vspace{-10pt}
\end{figure*}
\subsubsection{Softmax parametrization}
Note that $\eta$ has been multiplied by 10 in the softmax experiments as compared to the direct ones, in order to get a rate of convergence comparable to that of direct parametrization.

\paragraph{Exp1:} Its results are displayed on \fig{} \ref{fig:exact-softmax-vs-t}.
We observe similar results than for the direct parametrization, except that the algorithms never completely converge to the optimal (or suboptimal) policies. We also observe an interesting behaviour of the policy entropy regularized versions of the on-policy updates, at first converging to a policy worse than the low-hanging-fruit one. This stems from the bias induced by the policy entropy regularization: during a first stage it converged to an entropy regularized policy close to the low-hanging-fruit policy, then, after update 2000, it discovered the low-hanging-fruit policy and slowly converged to it.

\paragraph{Exp2:} Its results are displayed on \fig{} \ref{fig:exact-softmax-vs-S}.
We observe exactly the same results as in the direct parametrization.

\paragraph{Exp3:} Its results are displayed on \fig{} \ref{fig:exact-softmax-vs-beta}.
We get results very similar to those of the direct parametrization. The policy entropy regularized updates behaviour is however quite interesting. It seems quite efficient at helping the on-policy updates find the best policy, but the introduced bias prevents learning it when $\beta$ is too high. This last remark gets very clear by looking at \fig{} \ref{fig:exact-softmax-vs-S} where policy entropy regularized updates perform terribly because $\beta$ is high, even worse than the on-policy updates without the policy entropy regularization. It is also worth noting that this is the first experiment where there is a perceptible difference between on-policy updates, with the undiscounted update getting a little edge ($\gamma=0.99$ is rather high).

\paragraph{Exp5:} Its results are displayed on \fig{} \ref{fig:exact-softmax-vs-lambda}.
The first thing to notice is that, for all updates, when the policy entropy regularization gets large, $\lambda\approx 0.25$, the introduced bias is so high that the optimal regularized policy has a normalized performance lower than 0.48 (similar to what we observed on \fig{} \ref{fig:exact-softmax-vs-t}). We also notice, that, for $\lambda < 0.25$, its impact on updates that satisfy \con{4} is near to null. Finally, for the on-policy updates, there is a sweet spot where it allows them to converge in the vicinity of the optimal policy reasonably fast, but it is rather narrow in terms of $\lambda$ values, and slow in terms of time to find the optimal policy. And finally, let us recall that it will not converge to the optimal policy but to a stochastic policy that is optimal under the policy entropy regularized objective. We elaborate this last point further in Appendix \ref{app:policy_regularization}. 
  
\begin{figure*}[t]
	\centering
	\subfloat[Direct param, mean performance]{
		\includegraphics[trim = 5pt 5pt 5pt 5pt, clip, width=0.48\columnwidth]{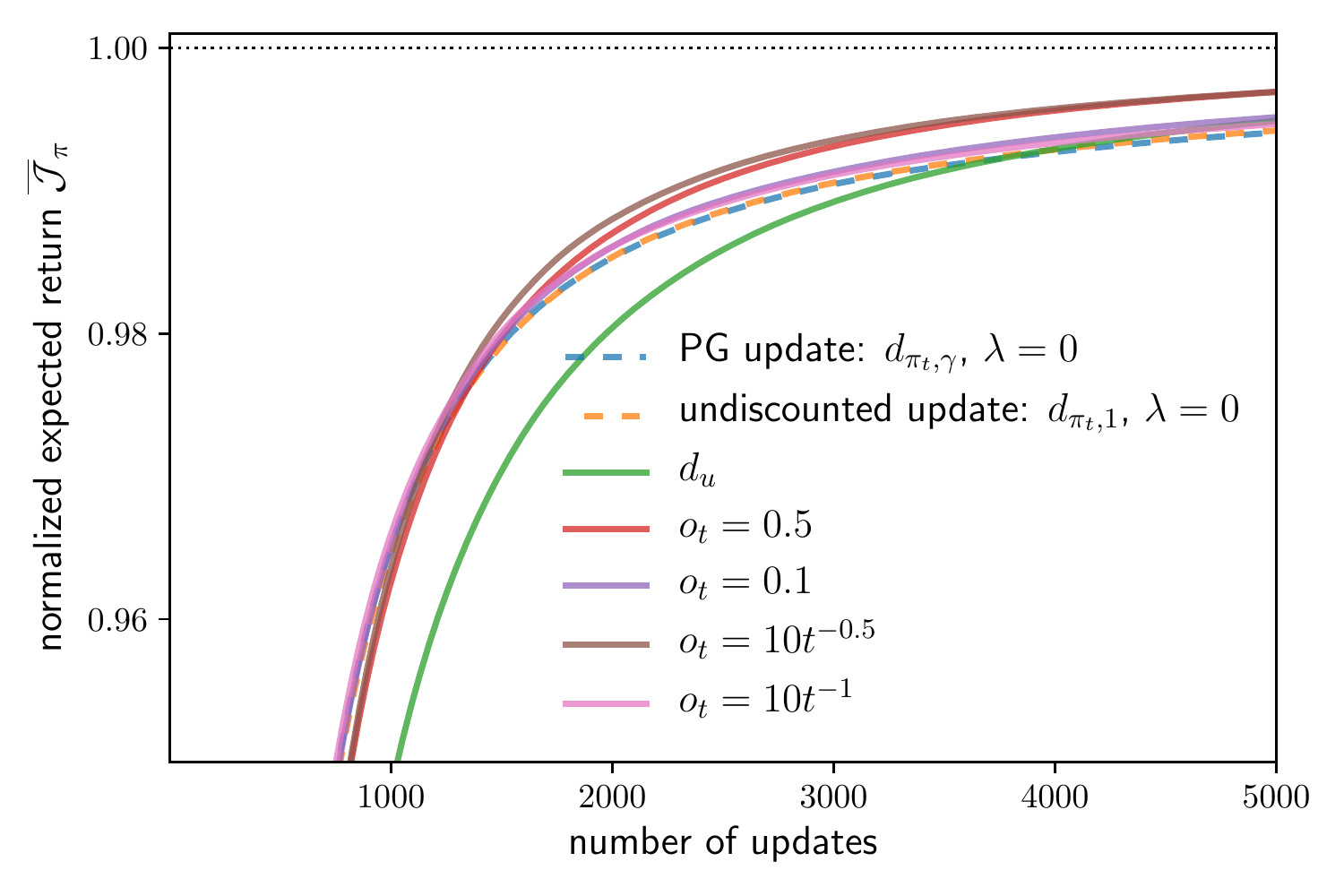}
		\label{fig:exact-garnets-direct-mean}
	}
	\subfloat[Direct param, decile performance]{
		\includegraphics[trim = 5pt 5pt 5pt 5pt, clip, width=0.48\columnwidth]{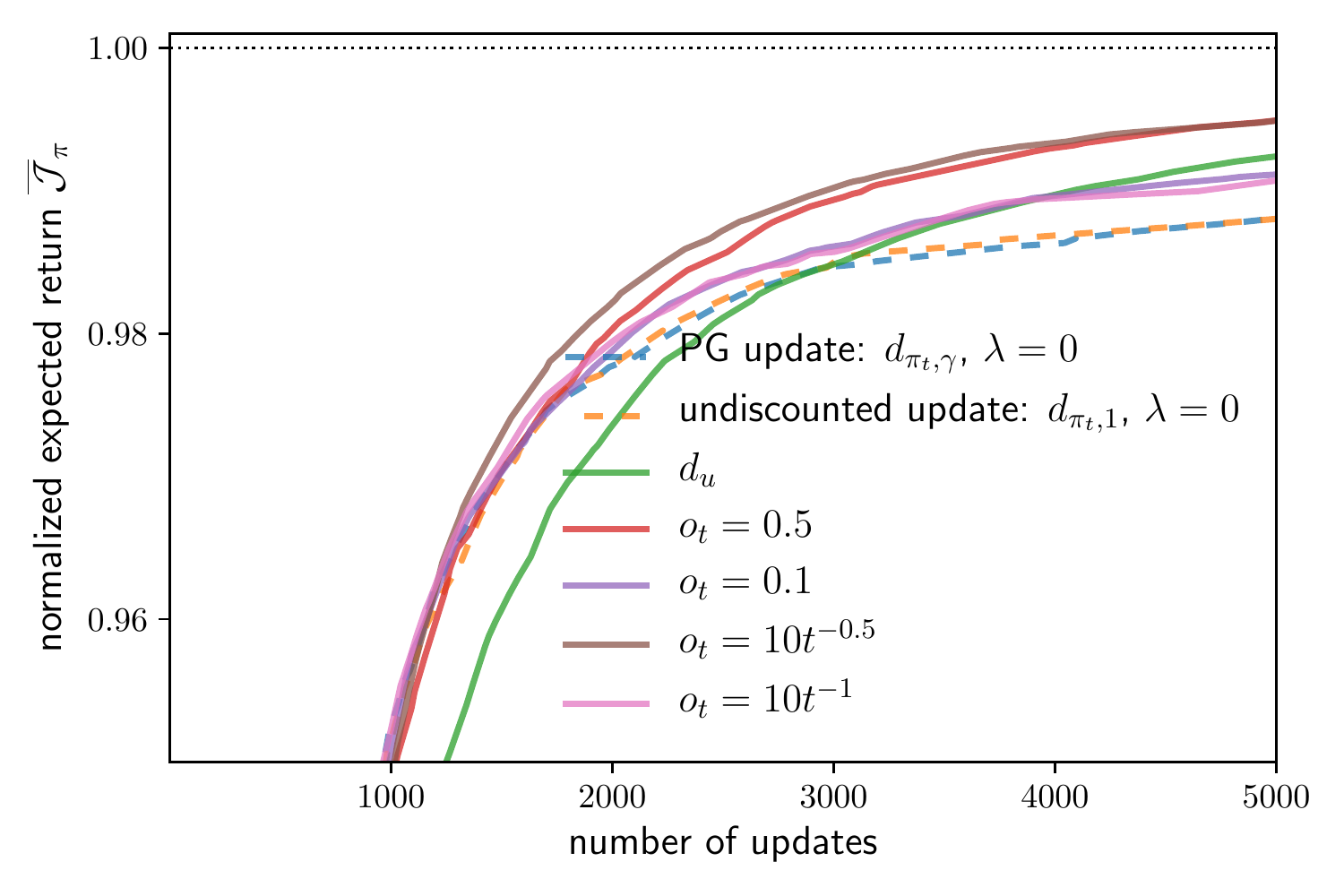}
		\label{fig:exact-garnets-direct-decile}
	}\\
	\subfloat[Softmax param, mean performance]{
		\includegraphics[trim = 5pt 5pt 5pt 5pt, clip, width=0.48\columnwidth]{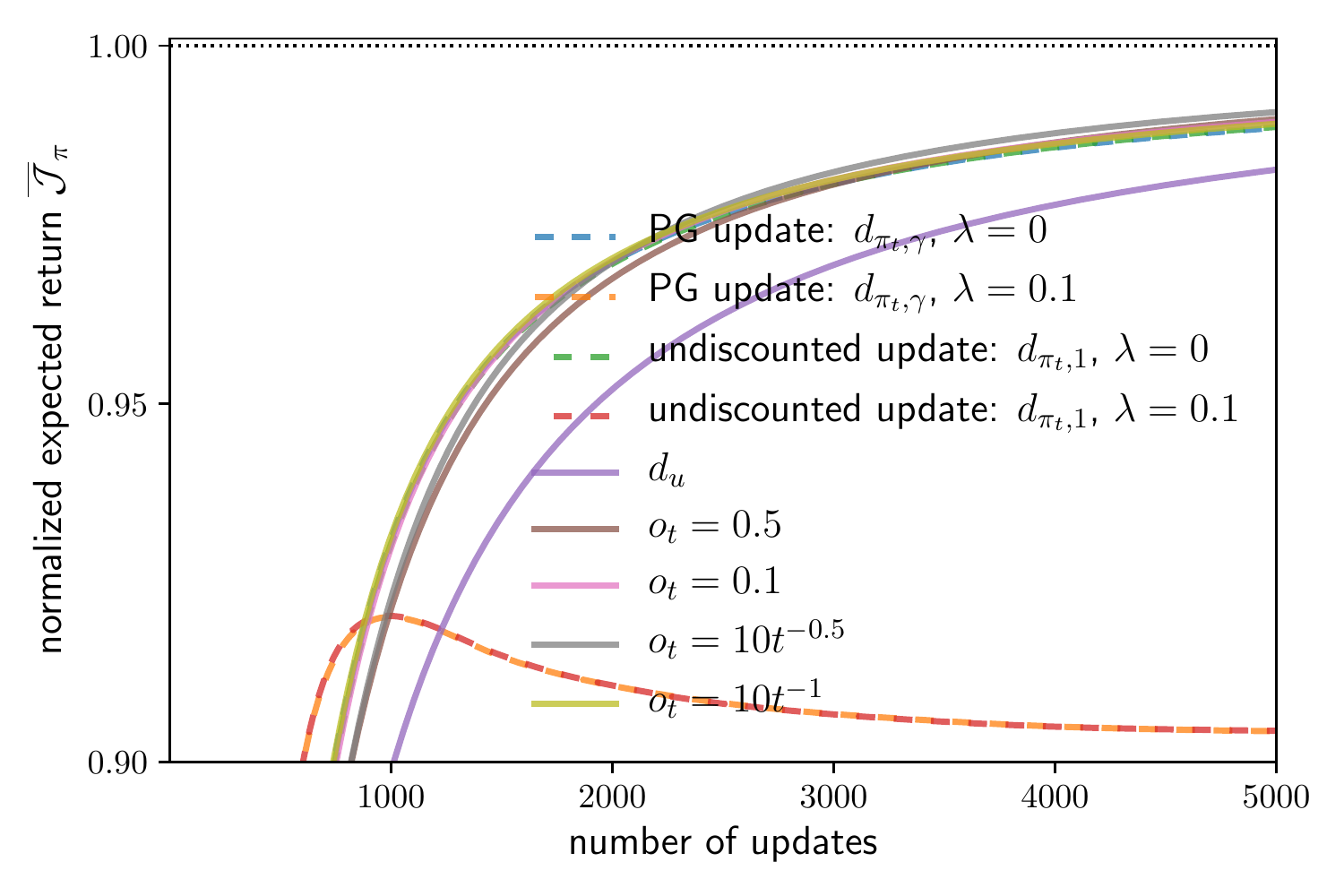}
		\label{fig:exact-garnets-softmax-mean}
	}
	\subfloat[Softmax param, decile performance]{
		\includegraphics[trim = 5pt 5pt 5pt 5pt, clip, width=0.48\columnwidth]{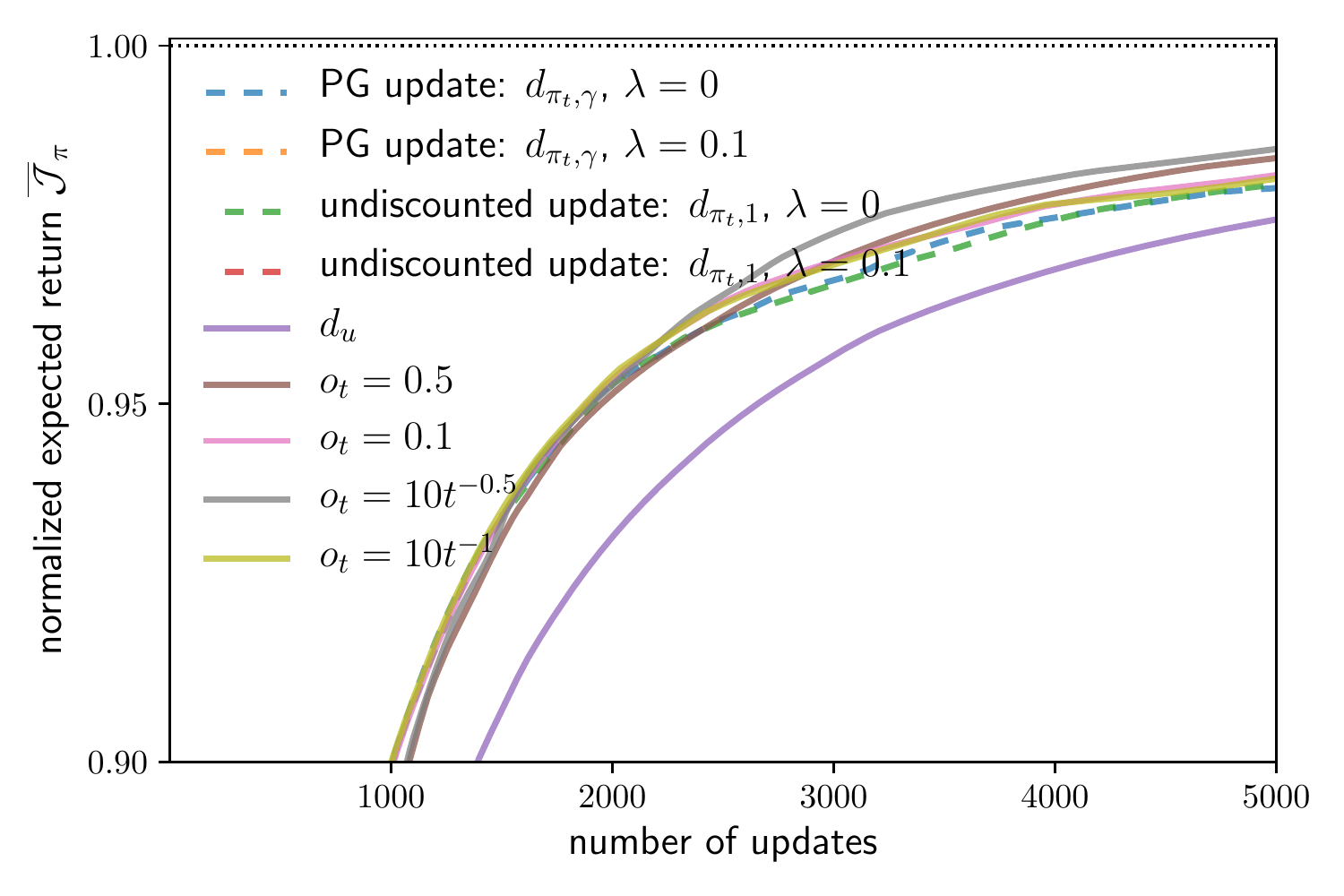}
		\label{fig:exact-garnets-softmax-decile}
	}
	\caption{Random MDPs experiments (50 states, 4 actions) under \ass{1}.}
		\label{fig:exact-garnets}
	\vspace{-10pt}
\end{figure*}
\subsection{Random MDPs domain}
The Random MDPs experiment described in \sect{} \ref{app:garnets} is intended to test the updates in random MDPs, that have not been designed for the on-policy updates to fail, and where they are known to be efficient. We will consider a single experiment consisting in repeating the following process for more than 100 runs:
\begin{enumerate}
    \item Generate a Random MDP.
    \item Run the policy optimization in this MDP with the same densities as those described in \sect{} \ref{sec:chain-exact}.
\end{enumerate}

We then report the mean (resp. decile) of the normalized expected return over the runs, as the mean (resp. decile) performance. We run the same experiment with both the direct and softmax parametrizations.

\subsubsection{Direct parametrization}
The mean performance is displayed on \fig{} \ref{fig:exact-garnets-direct-mean}. All algorithms perform almost equally good. Indeed, in absence of purposely designed deceptive rewards, the policy gradient and the undiscounted update are rather reliable at finding strong policies. Still, we start observing from update 2,000 on that the updates with low off-policy $o_t$ updates (\textit{i.e.} small weight on the uniform policy) start losing ground to the ones with high off-policiness $o_t$. The notable exception is $d_u$ that suffered some lateness at the start of curve, but is catching up and overtaking the low off-policy $o_t$ curves around the 4,000th update. This suggests that a proper scheduling of $o_t$ might be worth studying in the future: in early stages, follow the true gradient and increase exploration when the value improvement slows down.

The decile performance is displayed on \fig{} \ref{fig:exact-garnets-direct-decile}. We observe that, in addition to yielding a higher mean performance, high off-policy $o_t$ updates are also very stable: their decile performance is very close to their mean performance. In contrast, the decile performance of the on-policy updates seem to stagnate around 0.99 at the end of the curve. 

\subsubsection{Softmax parametrization}
The mean performance is displayed on \fig{} \ref{fig:exact-garnets-softmax-mean}. The results are similar to that with the direct parametrization except that:
\begin{itemize}
    \item the mean performance remains significantly further from the optimal (around 0.99 here vs 0.997 with direct parametrization). This is due to the fact that softmax parametrization takes longer to converge.
    \item the gap between high and low off-policy $o_t$ updates is much thinner. We believe this might also be due to the slowness of the softmax parametrization convergence. As a consequence, $d_u$ remains far from the other updates. % we are less assertive about this explanation, but 
    \item the policy entropy regularized versions of the on-policy updates perform much worse. Indeed, the bias introduced by the policy entropy regularization impairs the convergence to a policy that is near optimal with respect to the normalized expected return (which is our objective function). Nevertheless, it has to be noted that it produced the faster improvement in the early stages (until update 1,000). We do not have an explanation for that.
\end{itemize}

The decile performance is displayed on \fig{} \ref{fig:exact-garnets-softmax-decile}. We observe the same behavior as with the direct parametrization, with the exception of the narrower gaps already discussed above.

\subsection{General conclusion for this set of experiments}
We test the performance across time, actor learning rate $\eta$, MDP parameters $|\mathcal{S}|$ and $\beta$, off-policiness $o_t$, and policy entropy regularization $\lambda$, with both direct and softmax parametrizations, on both the chain and random MDPs. The full report is available in Appendix \ref{app:expes-exact}. 

On the chain domain, we empirically confirm that enforcing updates with $d_t$ including an off-policy component helps path discovery and policy planning, while on-policy updates fail at converging to the optimal policy, even with policy entropy regularization, in a reasonable amount of time. By playing with the chain domain hyperparameters $\beta$ and $|\mathcal{S}|$, we observe that on-policy updates are sensitive to both: even with a small $\beta=0.1$, a reasonable sized chain $|\mathcal{S}|=15$ cannot be solved. In contrast, updates that are partly off-policy allow to converge fast to optimality even with $\beta=0.95$ and $|\mathcal{S}|=25$.

On the random MDP domain, we empirically observe that discounted and undiscounted updates perform well but still oftentimes stagnate at 99\% of optimal performance, while $d_t$ including an off-policy component achieves even closer to optimality. Finally, we also note that a purely uniform $d_t$ slows down training in the random MDPs experiment. We conclude that it is best to include both on-policy and off-policy components in $d_t$. Experiments also allowed us to observe the biased convergence implied by the policy entropy regularization.

% Choses a faire sur les figures:
%  grossir les legendes (courbes et axes)
% normalized expected return
\section{Full report of finite MDPs reinforcement learning experiments}
\label{app:expes-sample}
\subsection{Algorithms design}
\label{app:algos}
This section provides the full implementation of the algorithms on the testbed.

\begin{algorithm}[ht]
\caption{On-policy algorithms for experiments in finite MDPs (by default $\nu=0$ and $\lambda=0$).}
\multiline{\textbf{Input:} UCB reward bonus weight $\nu$ and policy entropy regularization weight $\lambda$.}
\multiline{\textbf{Input:} Initialization of the value $q_0$, set actor learning rate $\eta$ and critics learning rate $\eta_c$.}
\begin{algorithmic}[1]
    \State Initialize parameters $\theta=0$ of the actor $\pi\doteq \pi_{\theta}$ and the critic $q \doteq q_0$.
    \For{$t=0$ to $\infty$}
        % Online behaviour 
        \State Sample a transition $\tau_t = \langle s_t,a_t\sim\pi_{b}(\cdot|s_t),s_{t+1}\sim p(\cdot|s_t,a_t),r_t\sim r(\cdot|s_t,a_t)\rangle$.
        \renewcommand\algorithmicdo{}
        \For{once, do a single on-policy update:}
            \State Update critic $q$ with a SARSA update on $\tau_t$:
                \begin{align}
                    q(s_t,a_t)\leftarrow q(s_t,a_t)+\eta_c\left(r+\gamma\sum_{a'\in\mathcal{A}}q(s',a')\pi(s',a')-q(s,a)\right)
                \end{align}
            \State Perform a stochastic update step in state $s$ on Dr Jekyll's actor $\theta$:
            \begin{align}
                \theta_{s_t,a_t} \leftarrow \theta_{s_t,a_t} + \eta d \left(r + \nu\sqrt{\frac{\log t}{n_{s_t,a_t}}} + \lambda \log \pi(a_t|s_t) + \sum_{a'\in\mathcal{A}}\pi(s'_y,a')q(s'_t,a')\right) \nabla_{\theta} \pi(a_t|s_t)
            \end{align}
        \EndFor
        \renewcommand\algorithmicdo{do}
        % \EndFor
    \EndFor
\end{algorithmic}
% \vspace{-10pt}
\label{alg:classic}
\end{algorithm}

\begin{algorithm}[ht]
\caption{Dr Jekyll \& Mr Hyde algorithm for experiments in finite MDPs.}
\multiline{\textbf{Input:} Scheduling of exploration $(\epsilon_t)=\frac{\epsilon_1}{t^{\alpha_\epsilon}}$, and of off-policiness $(o_t)=\frac{o_1}{t^{\alpha_o}}$.}
\multiline{\textbf{Input:} Initialization of the value $q_0$, set actor learning rate $\eta$ and critics learning rate $\eta_c$.}
\begin{algorithmic}[1]
    \State Initialize $\mathring{D}=\emptyset$, parameters $\mathring{\theta}=0$ of Dr Jekyll's actor $\mathring{\pi}\doteq \pi_{\mathring{\theta}}$, and critic $\mathring{q} \doteq q_0$.
    \State Initialize $\tilde{D}=\emptyset$ and Mr Hyde's value to $\tilde{q}=q_0$, and policy $\tilde{\pi}$ to uniform.
    % \State Initialize $\tilde{D}=\emptyset$ and parameters $\tilde{\theta}$ and $\tilde{\omega}$ of Mr Hyde's actor $\tilde{\pi}\doteq \pi_{\tilde{\theta}}$ and critic $\tilde{q} \doteq q_{\tilde{\omega}}$.
    \State Set the behavioural policy to Dr Jekyll: $\pi_{b} \leftarrow \mathring{\pi}$ and the working replay buffer $D_{b} \leftarrow \mathring{D}$.
    % \State Sample initial state: $s_0\sim p_0$.
    \For{$t=0$ to $\infty$}
        % Online behaviour 
        \State Sample a transition $\tau_t = \langle s_t,a_t\sim\pi_{b}(\cdot|s_t),s_{t+1}\sim p(\cdot|s_t,a_t),r_t\sim r(\cdot|s_t,a_t)\rangle$
        \State Add it to the working replay buffer $D_b\leftarrow D_b\cup\{\tau_t\}$.
        \InlineIfThen{$\tau$ was terminal,}{$(\pi_{b}, D_b) \leftarrow (\tilde{\pi}, \tilde{D})$ w.p. $\epsilon_t$, $(\pi_{b}, D_b) \leftarrow (\mathring{\pi}, \mathring{D})$ otherwise.}
        % Models updates
        % \For{$K$ update steps}
        \renewcommand\algorithmicdo{}
        \For{once, do a single update:}
            \State $\tau \doteq \langle s,a,s',r \rangle \sim \tilde{D}$ w.p. $o_t$, $\tau \doteq \langle s,a,s',r \rangle \sim \mathring{D}$ otherwise.
            \State Update Mr Hyde's critic $\tilde{q}$ with a $q$-learning update on $\tau$, then $\tilde{\pi}$ is greedy on $\tilde{q}$:
                \begin{align}
                    \tilde{q}(s,a)\leftarrow \tilde{q}(s,a)+\eta_c\left(\frac{1}{\sqrt{n_{s,a}}}+\gamma\max_{a'\in\mathcal{A}}\tilde{q}(s',a')-\tilde{q}(s,a)\right)\quad\text{then}\quad\tilde{\pi}\leftarrow \argmax_{a\in\mathcal{A}} \tilde{q}
                \end{align}
            \State Update Dr Jekyll's critic $\mathring{q}$ with a SARSA update on $\tau$:
                \begin{align}
                    \mathring{q}(s,a)\leftarrow \mathring{q}(s,a)+\eta_c\left(r+\gamma\sum_{a'\in\mathcal{A}}\mathring{q}(s',a')\mathring{\pi}(s',a')-\mathring{q}(s,a)\right)
                \end{align}
            \State Perform an expected update step in state $s$ on Dr Jekyll's actor $\mathring{\theta}$:
            \begin{align}
                \forall b\in\mathcal{A},\quad\quad\mathring{\theta} \leftarrow \mathring{\theta} + \eta \mathring{q}(s,b) \nabla_{\mathring{\theta}} \mathring{\pi}(b|s)
            \end{align}
        \EndFor
        \renewcommand\algorithmicdo{do}
        % \EndFor
    \EndFor
\end{algorithmic}
% \vspace{-10pt}
\label{alg:finiteMDP-JH}
\end{algorithm}

We first formally describe the on-policy algorithms in Algorithm \ref{alg:classic}. PG update has $d$ equal to $\gamma$ at the power of the episode's timestep, while $d=1$ for undiscounted updates. Unless specified otherwise UCB exploration bonus parameter $\nu$ and policy entropy regularization parameter $\lambda$ are set to 0.

We then formally describe the implementation of \jh{} in Algorithm \ref{alg:finiteMDP-JH}.

Not shown in the algorithms: in order to compare equally each update (for instance, discounted updates would observe less update amplitude than undiscounted ones), we normalize the empirical update steps $d$ by the average of their values in history. This normalizing step is not useful for the algorithms themselves, its only use is to control potential side effects of the update choice.

In order to prepare for its deep RL implementation, we extensively analyse the \jh{} algorithm, but only on the softmax parametrization. Indeed, there is no true equivalent to direct parametrization with neural networks. We analysed the performance of the algorithms against time (number of updates), the off-policiness $o_t$, the actor learning rate $\eta$, and the parameters of the chain domain: its size and the value ratio $\beta$. We will look at the same hyperparameters in the Reinforcement Learning setting, but also at the newly introduced hyperparameters: the critic learning rate $\eta_c$, the $q$-function initialization $q_0$, and the exploration schedule $\epsilon_t$.

But first let us specify the default hyperparameter settings:
\begin{itemize}
    \item Environment hyperparameters:
        \begin{itemize}
            \item Chain domain: $|\mathcal{S}|=10$, $|\mathcal{A}|=2$, $\gamma=0.99$, $\beta=0.8$.
            \item Random MDPs domain: $|\mathcal{S}|=100$, $|\mathcal{A}|=4$, $\gamma=0.99$.
        \end{itemize}
    \item Algorithms hyperparameters:
        \begin{itemize}
            \item All algorithms: $\eta=1$, $\eta_c=0.1$, $q_0=0$, $\lambda=0$, $\nu=0$.
            \item \jh{} specific: $\epsilon_t$ and $o_{t}$ are always shown.
            \item In all the experiments, Mr Hyde is trained with $Q$-learning on a $\gamma$ discounted objective from UCB rewards: $\tilde{r}(s,a)\doteq\frac{1}{\sqrt{n_{s,a}}}$.
        \end{itemize}
\end{itemize}

For \jh{}, we generally report the \textit{global} performance, \textit{i.e.} the expected performance of the mixture of Dr Jekyll and Mr Hyde, but also sometimes, when mentioned, we also look at the performance of the Dr Jekyll policy, because it is the best policy known to the algorithm at a given time.

\subsection{Vs. time (number of trajectories):}
\begin{figure*}[t]
	\centering
	\subfloat[Mean global performance]{
		\includegraphics[trim = 5pt 5pt 5pt 5pt, clip, width=0.48\columnwidth]{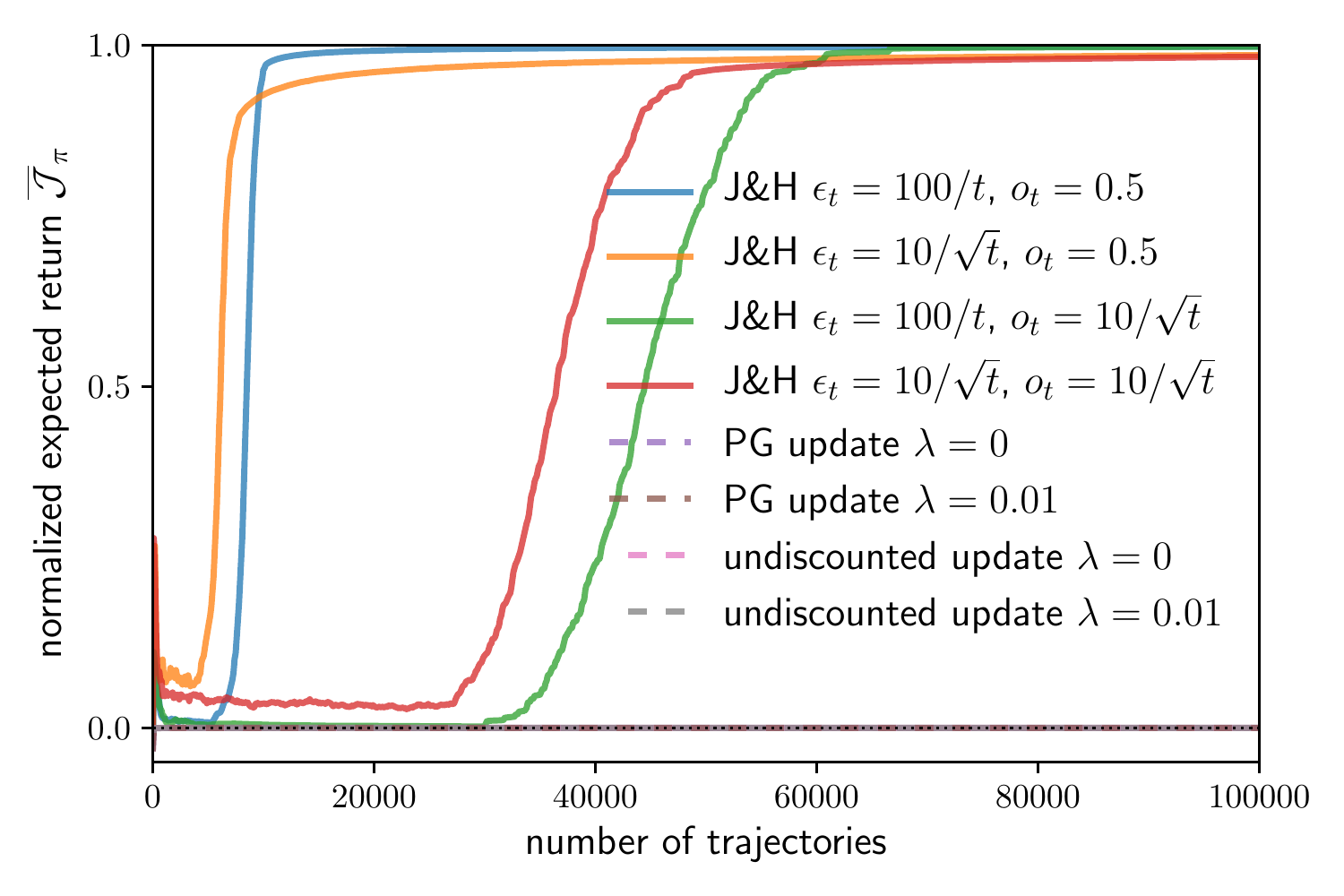}
		\label{fig:sample-chain-time-mean}
	}
	\subfloat[Mean Jekyll performance]{
		\includegraphics[trim = 5pt 5pt 5pt 5pt, clip, width=0.48\columnwidth]{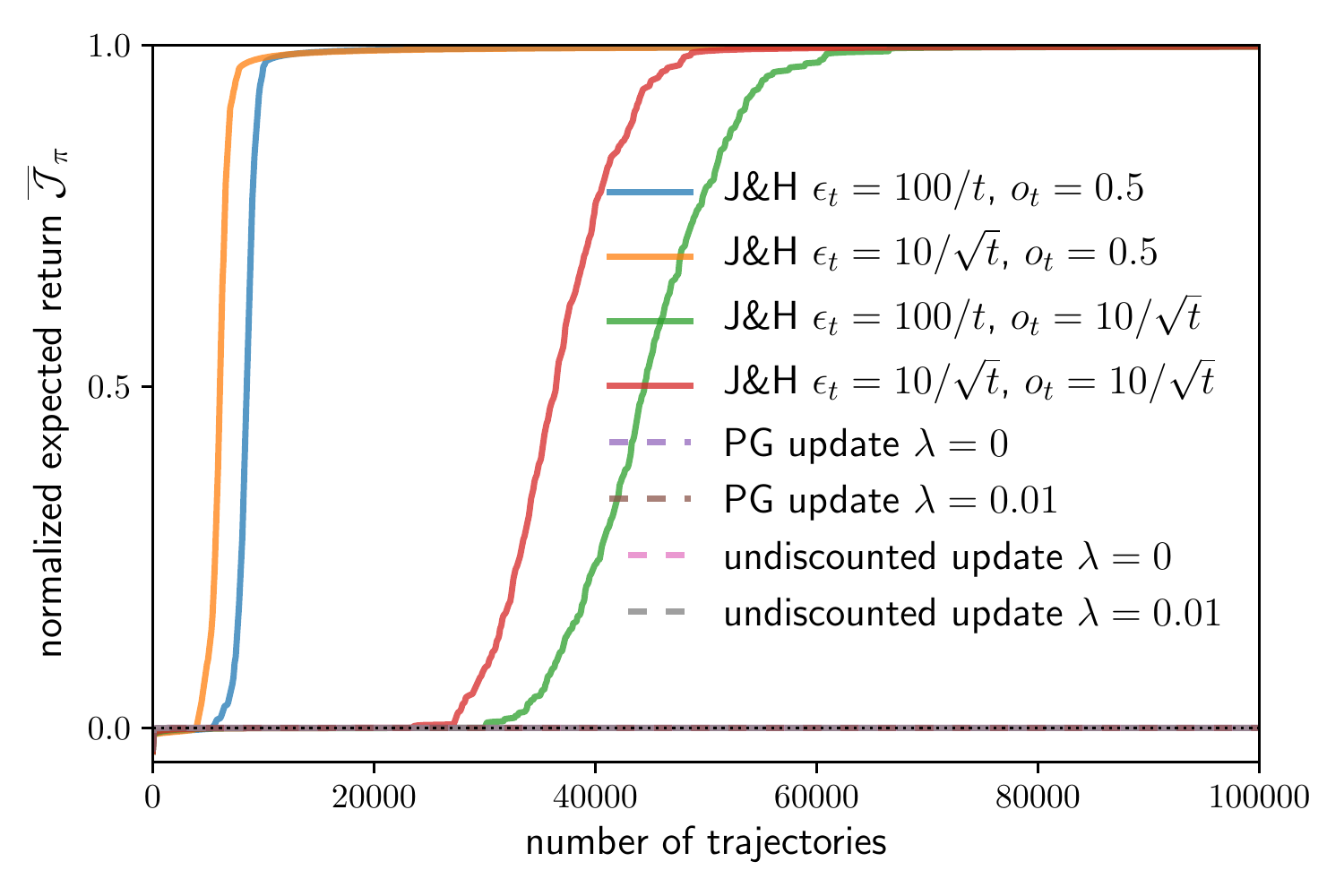}
		\label{fig:sample-chain-time-mean-jek}
	}
	\caption{Chain MDPs experiments vs. number of trajectories (200+ runs).}
		\label{fig:sample-chain-time}
	\vspace{-10pt}
\end{figure*}
In the chain experiment, we expect \jh{} to perform much better than on-policy updates (policy gradient and undiscounted), since they are known to easily converge to suboptimal solutions. This is indeed what we observe on \fig{} \ref{fig:sample-chain-time-mean}. All versions of \jh{} eventually converge, in all runs, to the optimal policy, while on-policy updates, with or without entropic regularization, never converged to the optimal solution. Looking more closely at the different \jh{} variants, we notice that the off-policiness is instrumental to make the convergence faster, more than the exploration ratio $\epsilon_t$, because this latter cannot be set too large in order not to compromise the global performance. Indeed, we already observe with $\epsilon_t = \frac{10}{\sqrt{t}}$ that exploration prevents from converging closely to optimal in global performance. We may confirm that Dr Jekyll converges well to optimal on \fig{} \ref{fig:sample-chain-time-mean-jek}.

\begin{figure*}[t]
	\centering
	\subfloat[Mean global performance]{
		\includegraphics[trim = 5pt 5pt 5pt 5pt, clip, width=0.48\columnwidth]{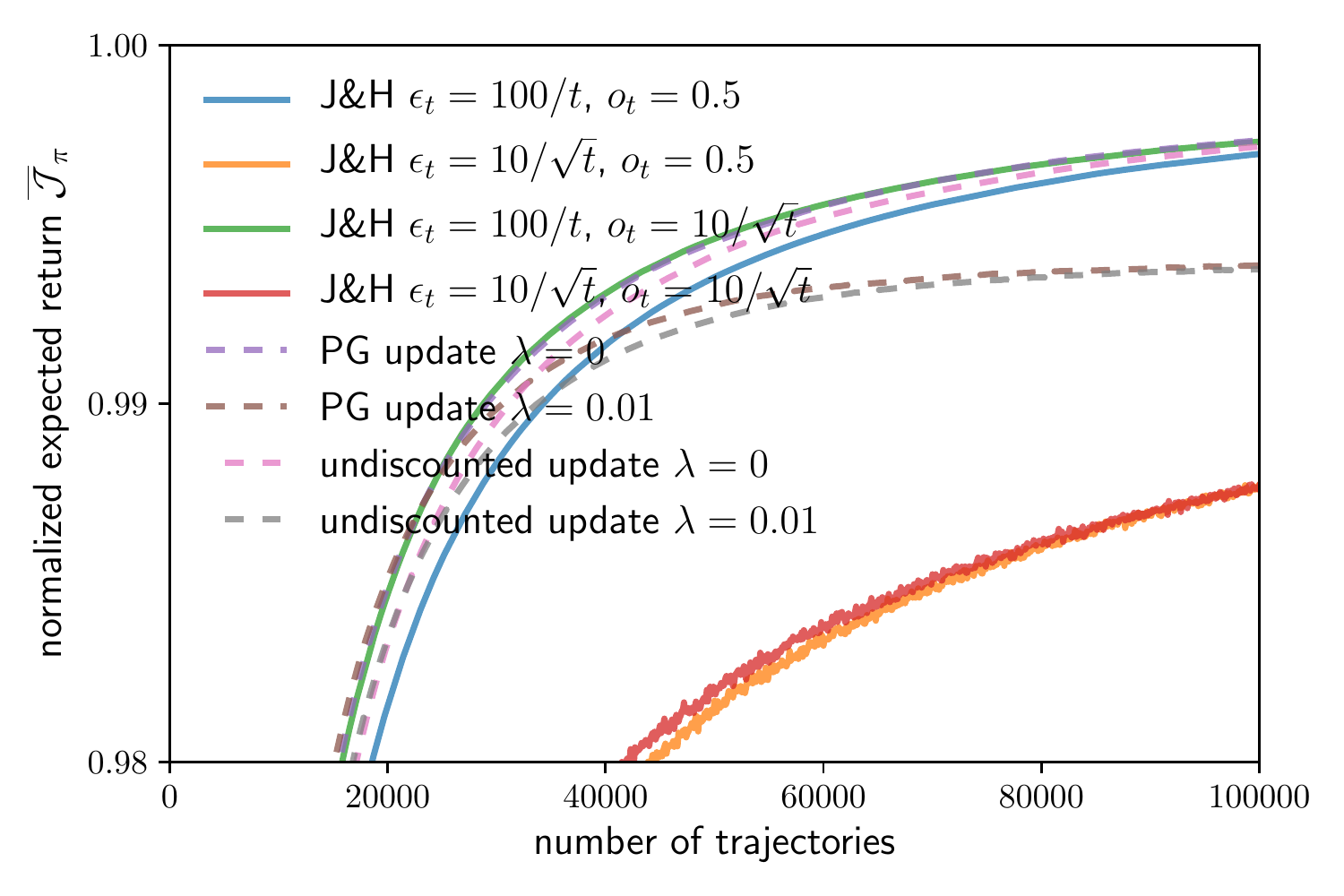}
		\label{fig:sample-garnets-time-mean}
	}
	\subfloat[Mean Jekyll performance]{
		\includegraphics[trim = 5pt 5pt 5pt 5pt, clip, width=0.48\columnwidth]{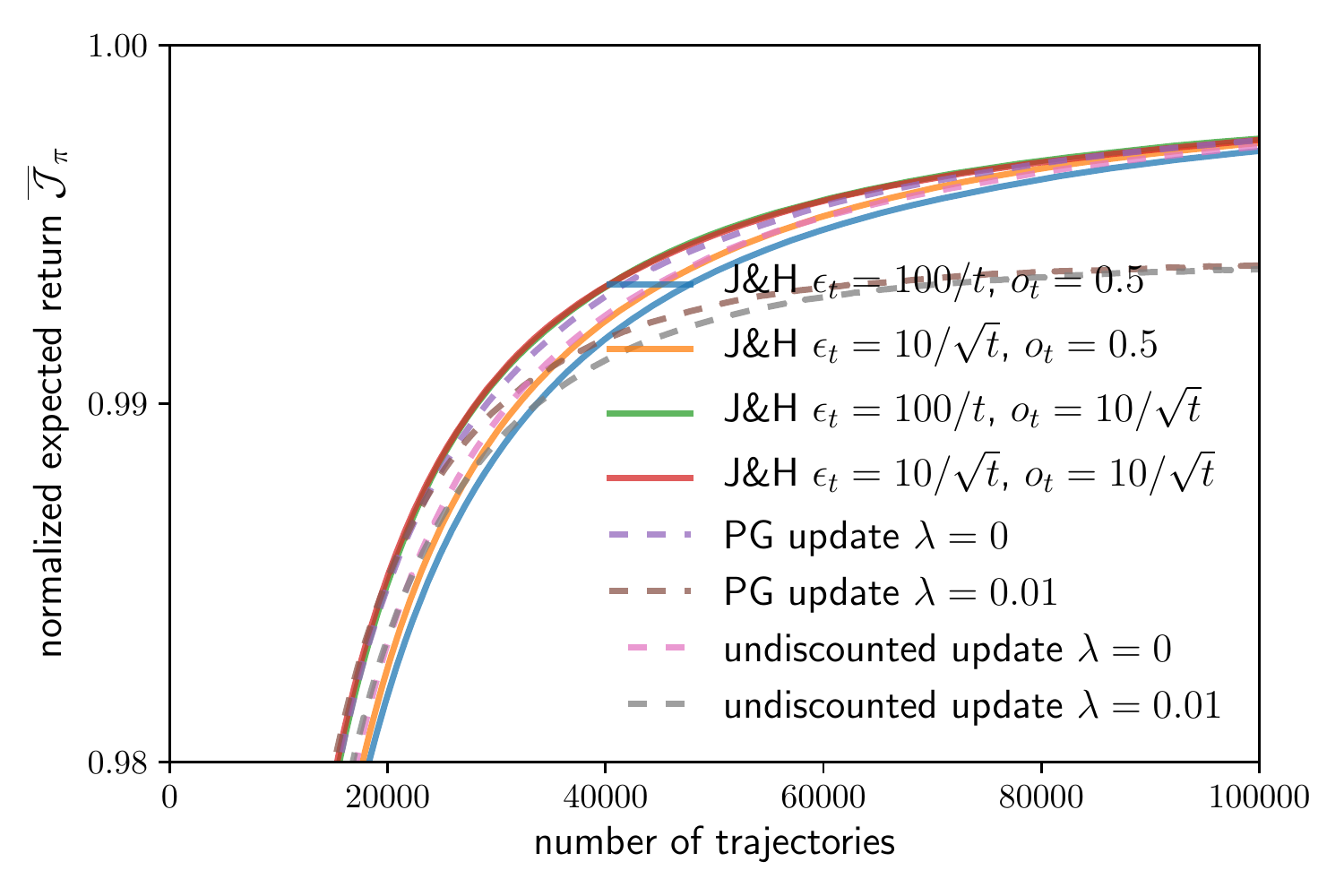}
		\label{fig:sample-garnets-time-mean-jek}
	}\\
	\subfloat[Mean global performance (long)]{
		\includegraphics[trim = 5pt 5pt 5pt 5pt, clip, width=0.48\columnwidth]{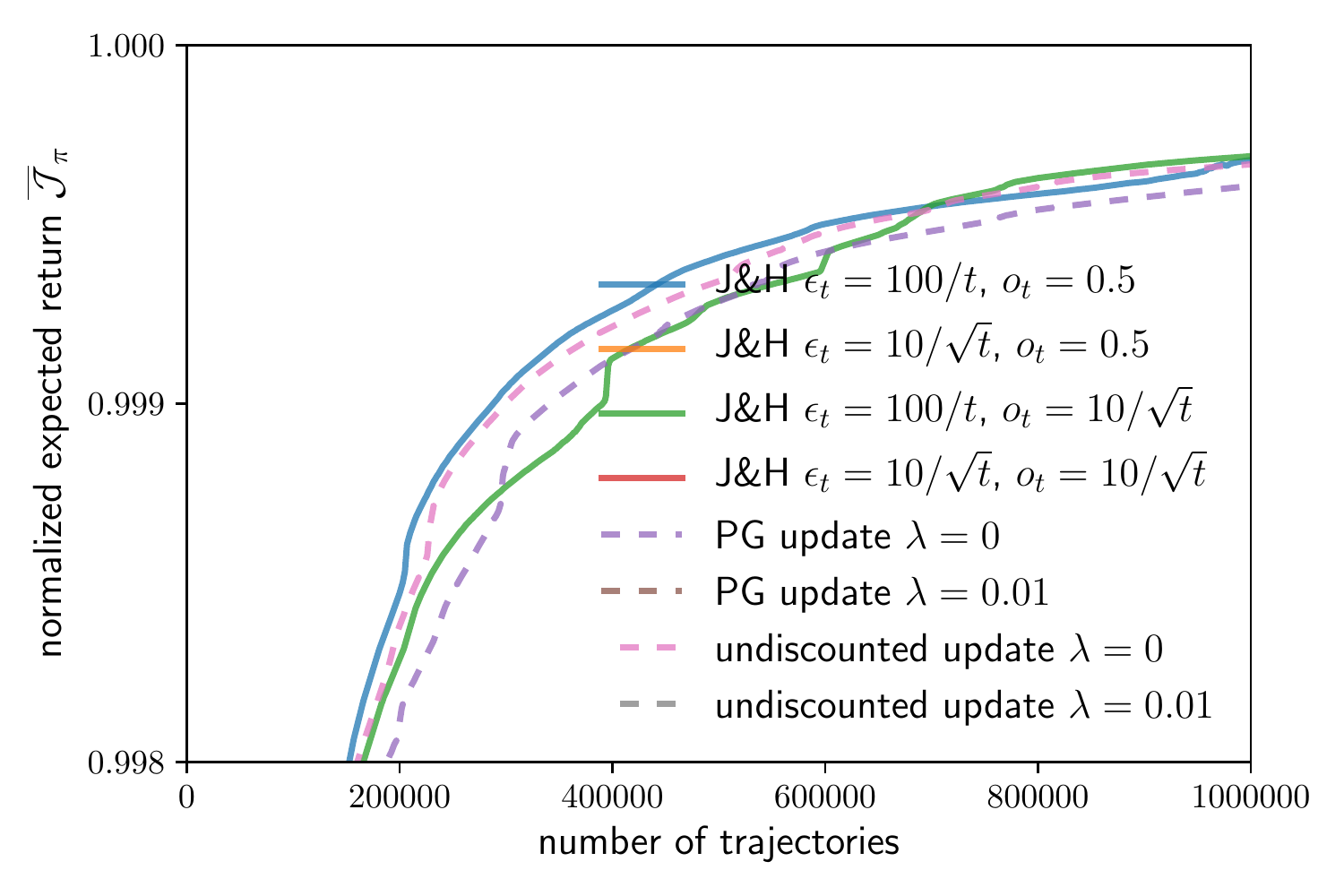}
		\label{fig:sample-garnets-long_time-mean}
	}
	\subfloat[Decile global performance (long)]{
		\includegraphics[trim = 5pt 5pt 5pt 5pt, clip, width=0.48\columnwidth]{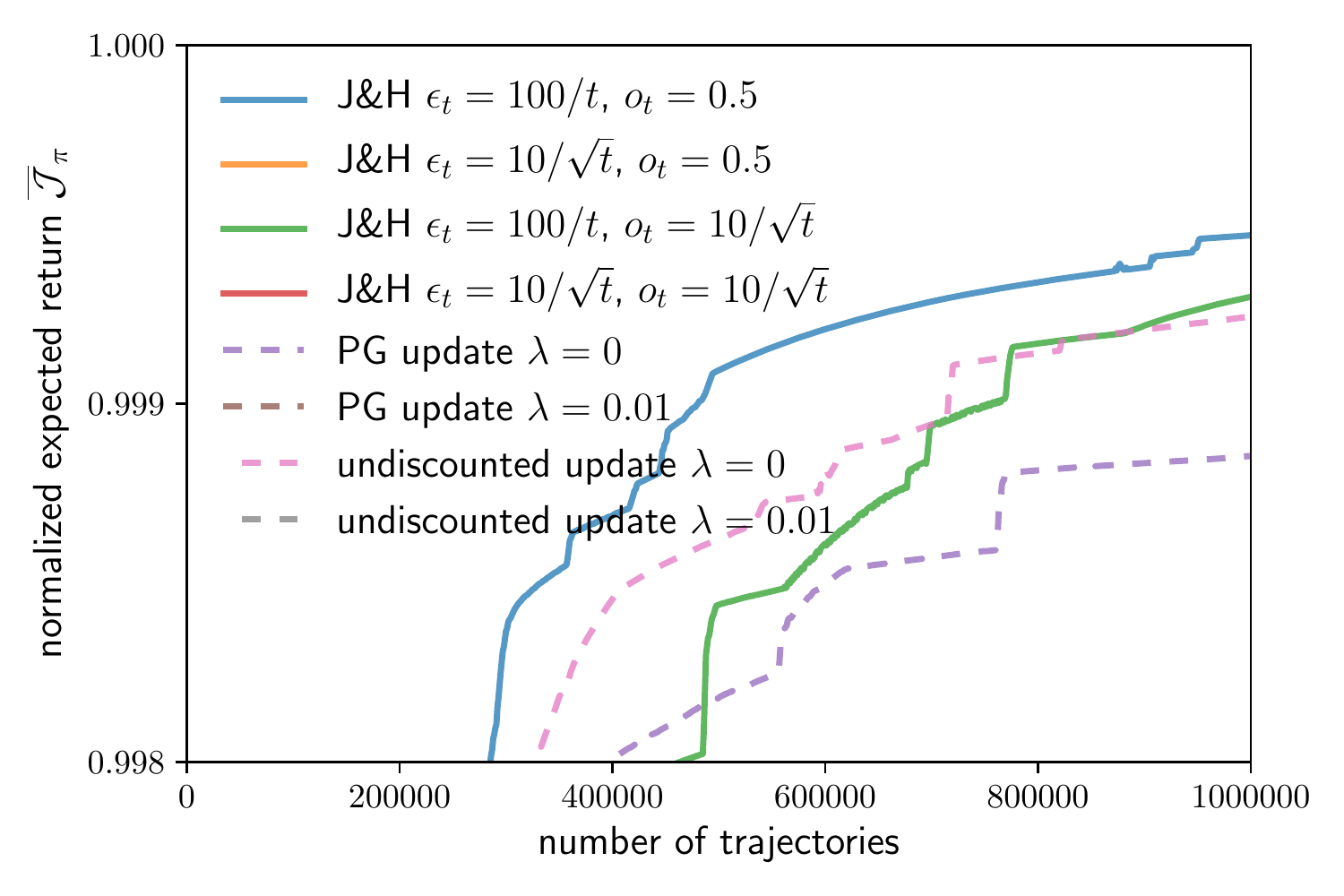}
		\label{fig:sample-garnets-long_time-decile}
	}
	\caption{Random MDPs experiments vs. number of trajectories.}
		\label{fig:sample-garnets-time}
	\vspace{-10pt}
\end{figure*}
In the random MDPs domain, we do not expect to see any improvement, the on-policy updates (policy gradient and undiscounted) are not \textit{tricked} to converge to a suboptimal solution. Moreover, the high stochasticity of the domain, makes it so that every state is visited a lot of times even when following a deterministic policy. \fig{} \ref{fig:sample-garnets-time-mean} confirms that \jh{} can keep up with the speed of the true gradient. Its versions with $\epsilon_t=\frac{10}{\sqrt{t}}$ suffer a low global performance because of its exploration that remains high, as \fig{} \ref{fig:sample-garnets-time-mean-jek} confirms. We notice that entropic regularization does not help training faster and also biases the training objective which results into a plateaued performance. Finally, we may observe that the choice of $o_t$ has a slight impact: using half Dr Jekyll, half Mr Hyde samples to train offline slows a bit the convergence in the first 100k trajectories, however, looking at a longer horizon on \fig{} \ref{fig:sample-garnets-long_time-mean} shows that it gets better later\footnote{Note that despite using more than 100 runs to compute these curves, there remains some instability and the visible results are probably not significant enough to elect a winner, however they are enough to see that they perform comparably.}. It is also interesting to notice that the gap to optimal performance has decreased by a factor 10 with 10 times more data, which corroborates our theoretical findings that the regret should be in $\mathcal{O}(t^{-1})$. Finally, the decile performance across time shown on \fig{} \ref{fig:sample-garnets-long_time-decile} suggests that the use of strong off-policiness mitigates the risk and improves the worst case performance.

\subsection{Vs. off-policiness $o_t$:}
\begin{figure*}[t]
	\centering
	\subfloat[Chain experiment]{
		\includegraphics[trim = 5pt 5pt 5pt 5pt, clip, width=0.48\columnwidth]{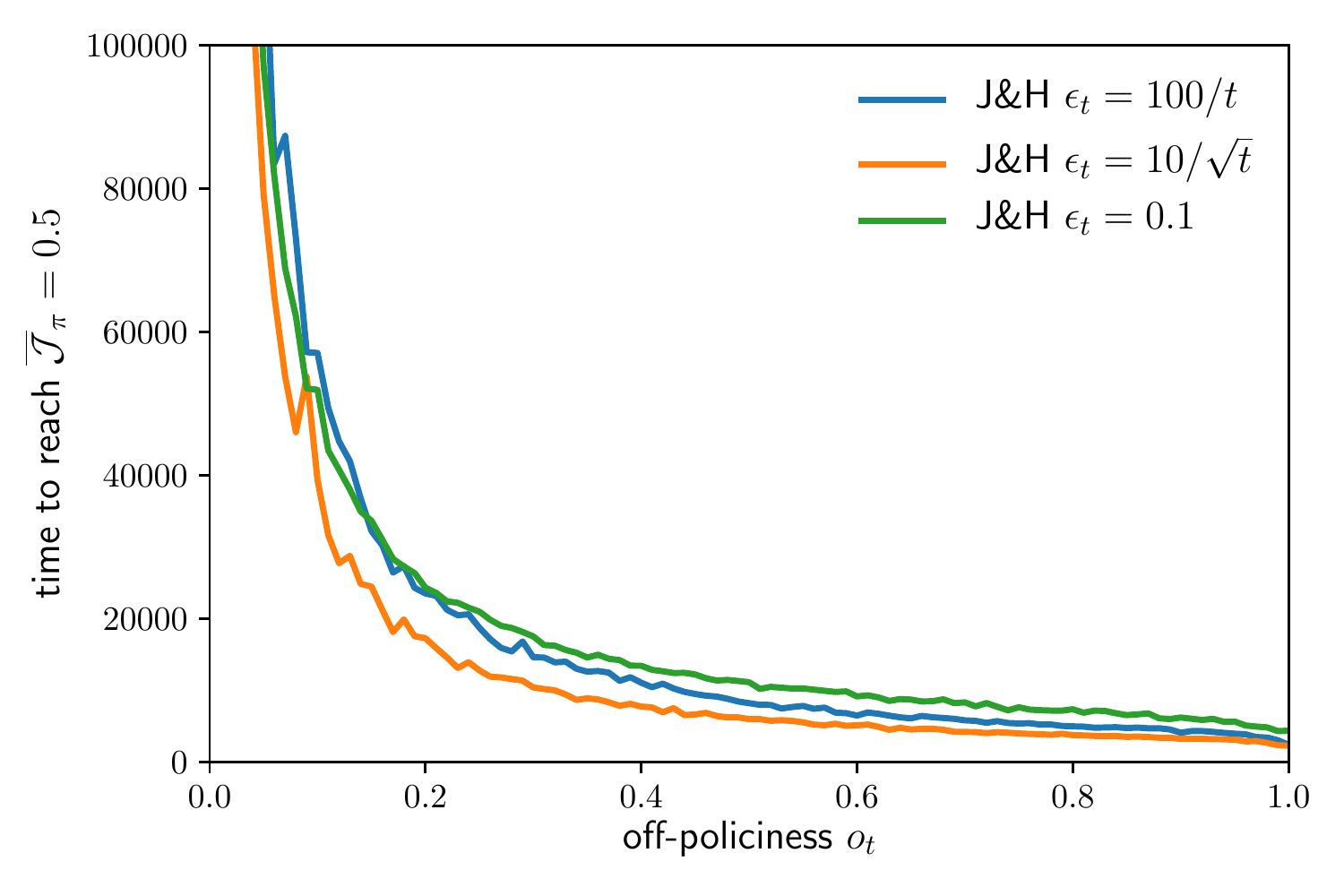}
		\label{fig:sample-chain-offpol}
	}
	\subfloat[Random MDPs experiment]{
		\includegraphics[trim = 5pt 5pt 5pt 5pt, clip, width=0.48\columnwidth]{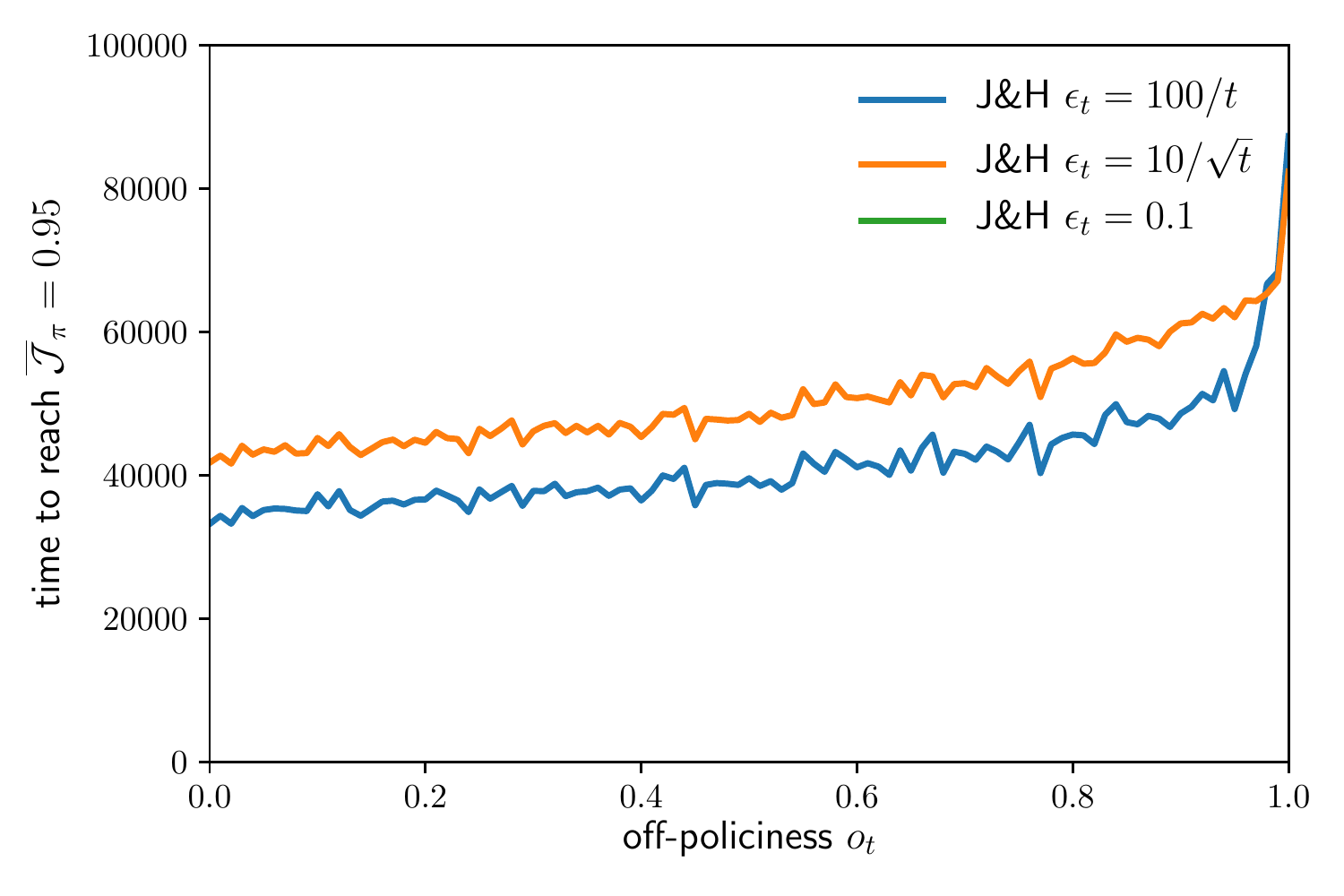}
		\label{fig:sample-garnets-offpol}
	}
	\caption{Experiments vs. off-policiness $o_t$: average time to reach a global performance objective.}
		\label{fig:sample-offpol}
	\vspace{-10pt}
\end{figure*}
Off-policiness being a hyperparameter of \jh{} alone, we report the results for \jh{} only. On \fig{} \ref{fig:sample-chain-offpol}, we observe that the higher the off-policiness, the best better it is in the chain experiment. However, it is quite the opposite in the random MDPs experiments results shown on \fig{} \ref{fig:sample-garnets-offpol}, even though it is less critical. The scheduling of $o_t$ would be an interesting focus for further research. At this point of our knowledge we conjecture that it is better to train on-policy when the expected value improvement are high, and train more off-policy when it is small.

\subsection{Vs. learning rates $\eta$ and $\eta_c$:}
\begin{figure*}[t]
	\centering
	\subfloat[Chain experiment]{
		\includegraphics[trim = 5pt 5pt 5pt 5pt, clip, width=0.48\columnwidth]{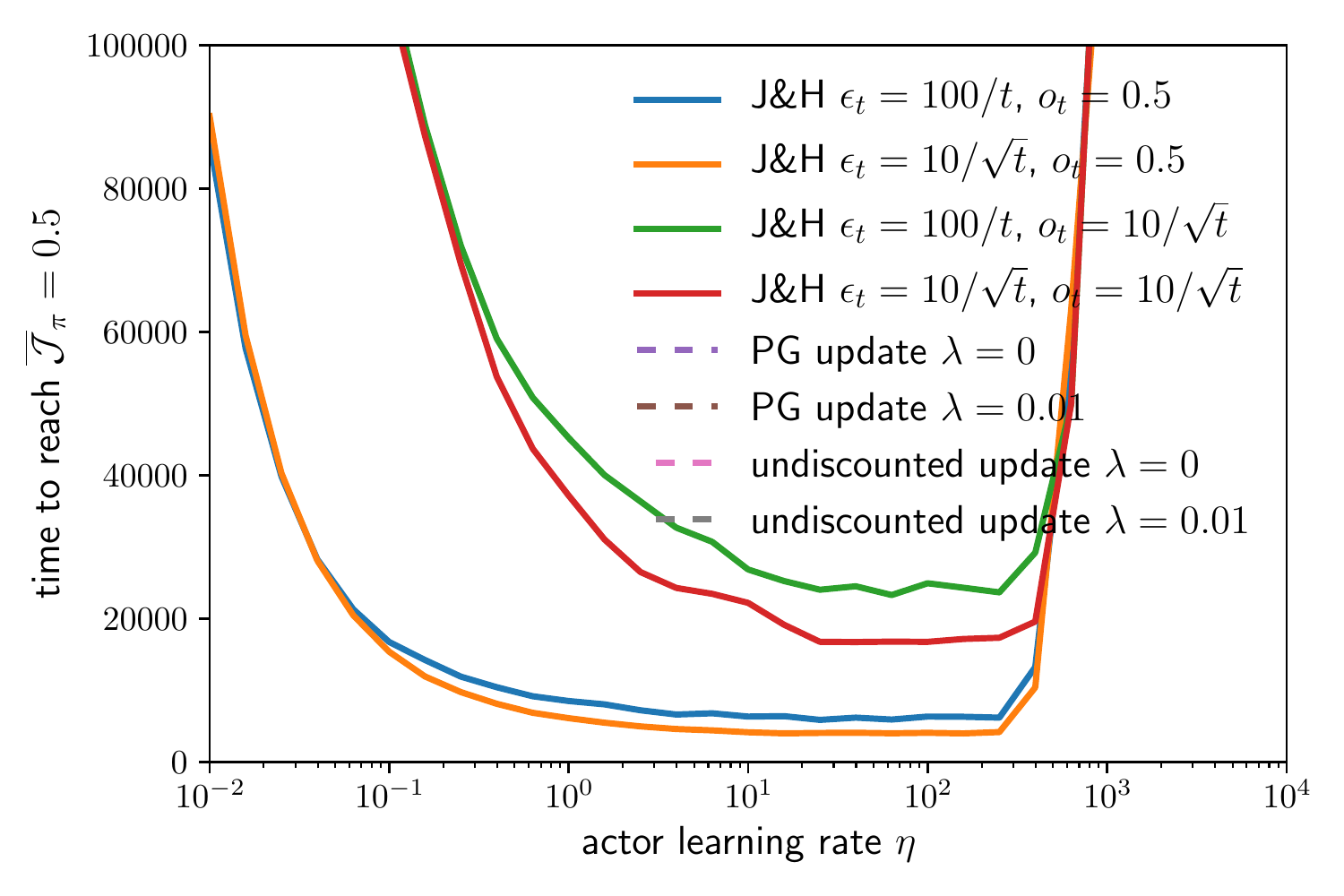}
		\label{fig:sample-chain-learningrate}
	}
	\subfloat[Random MDPs experiment]{
		\includegraphics[trim = 5pt 5pt 5pt 5pt, clip, width=0.48\columnwidth]{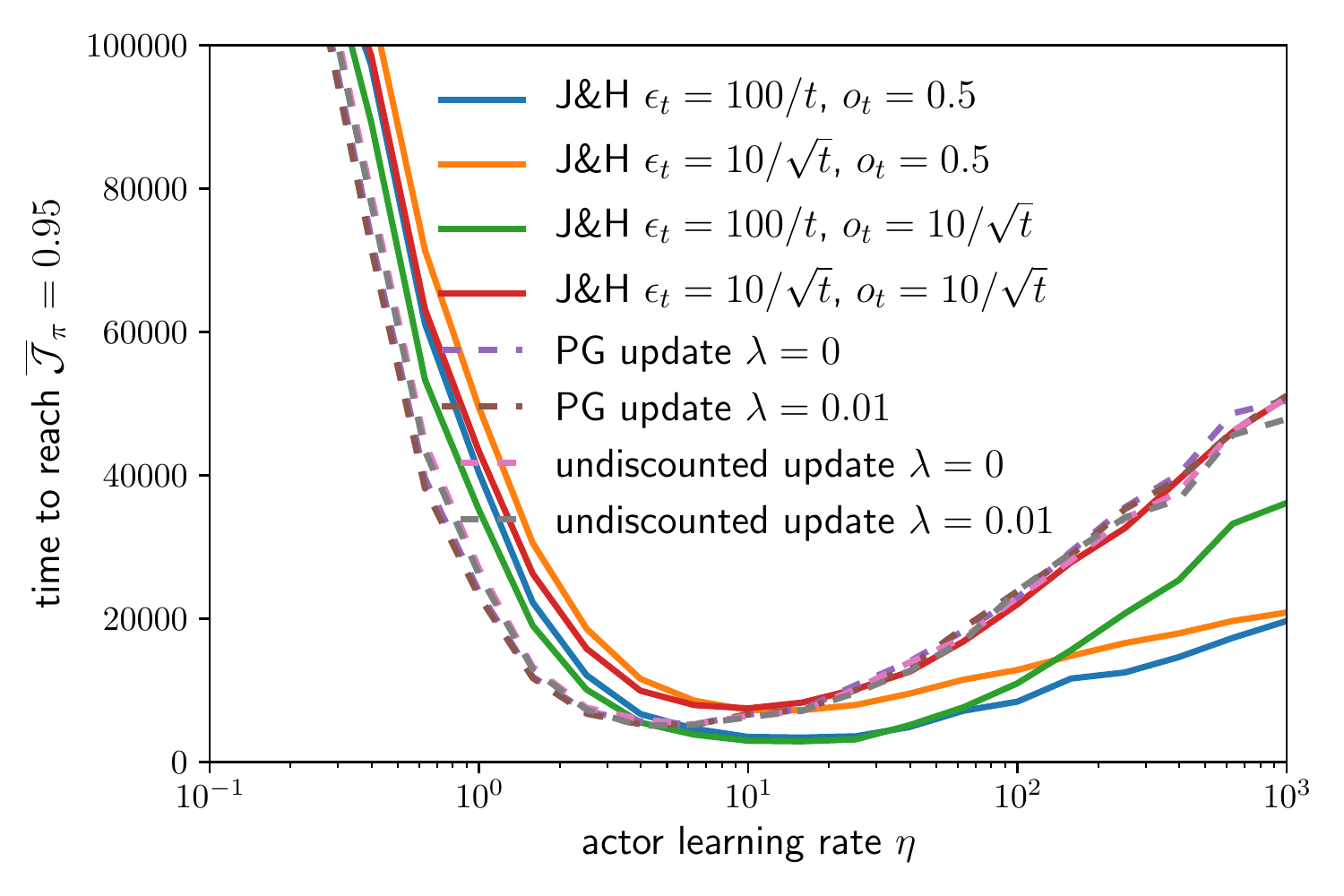}
		\label{fig:sample-garnets-learningrate}
	}
	\caption{Experiments vs. actor learning rate $\eta$: average time to reach a global performance objective.}
		\label{fig:sample-learningrate}
	\vspace{-10pt}
\end{figure*}
 The study of $\eta$ on the chain domain (results on \fig{} \ref{fig:sample-chain-learningrate} reveals that the performance of \jh{} improves until some value where it breaks. It looks like a numerical stability issue but after checking it does not seem like anything of the sort, rather that the softmax parameters go faster away than they can go back. Indeed, as our theoretical analysis shows, there is a factor in $\pi(1-\pi)$ the updates that slows down the recovery from a run that converged too far into a suboptimal policy. Actually, this phenomenon has been observed before on policy gradients, \textit{e.g.} on Fig. 1.d of \cite{mei2020global}. Note that the value of this hyperparameter does not allow to help the on-policy algorithms to find the optimal solution. We made the similar experiment on easier settings (lower $\beta$ and size), and this did not improve (not shown). Note also that the default value for the actor learning rate has been set to 1 which is not its optimal value, but it was a better value to make the critic/actor learning rates consistent with each other.
 
 The experiment on the Random MDPs shown on \fig{} \ref{fig:sample-garnets-learningrate} is one of the most interesting empirical results for many reasons\footnote{We notice that the default value we used for this hyperparameter is not favorable for \jh{}.}. Firstly, we observe that all updates admit an optimal value for the actor learning rate in the range [2,30]: rather high, but not too high. This shows that there is a value for not updating too fast the policy. Secondly, off-policiness is more tolerant for higher $\eta$ values: its optimal value is obtained for $\eta\approx 20$, while on-policy updates are best with $\eta\approx 5$. Thirdly, when $\eta$ is optimally set for each update, \jh{} significantly outperforms on-policy updates ($\approx$ 2,500 vs. 3,500). Finally, it is satisfying to observe that $\eta\approx 20$ is a also optimal in the chain experiment, suggesting that it may not be too much domain dependent. 
 
\begin{figure*}[t]
	\centering
	\subfloat[Chain experiment]{
		\includegraphics[trim = 5pt 5pt 5pt 5pt, clip, width=0.48\columnwidth]{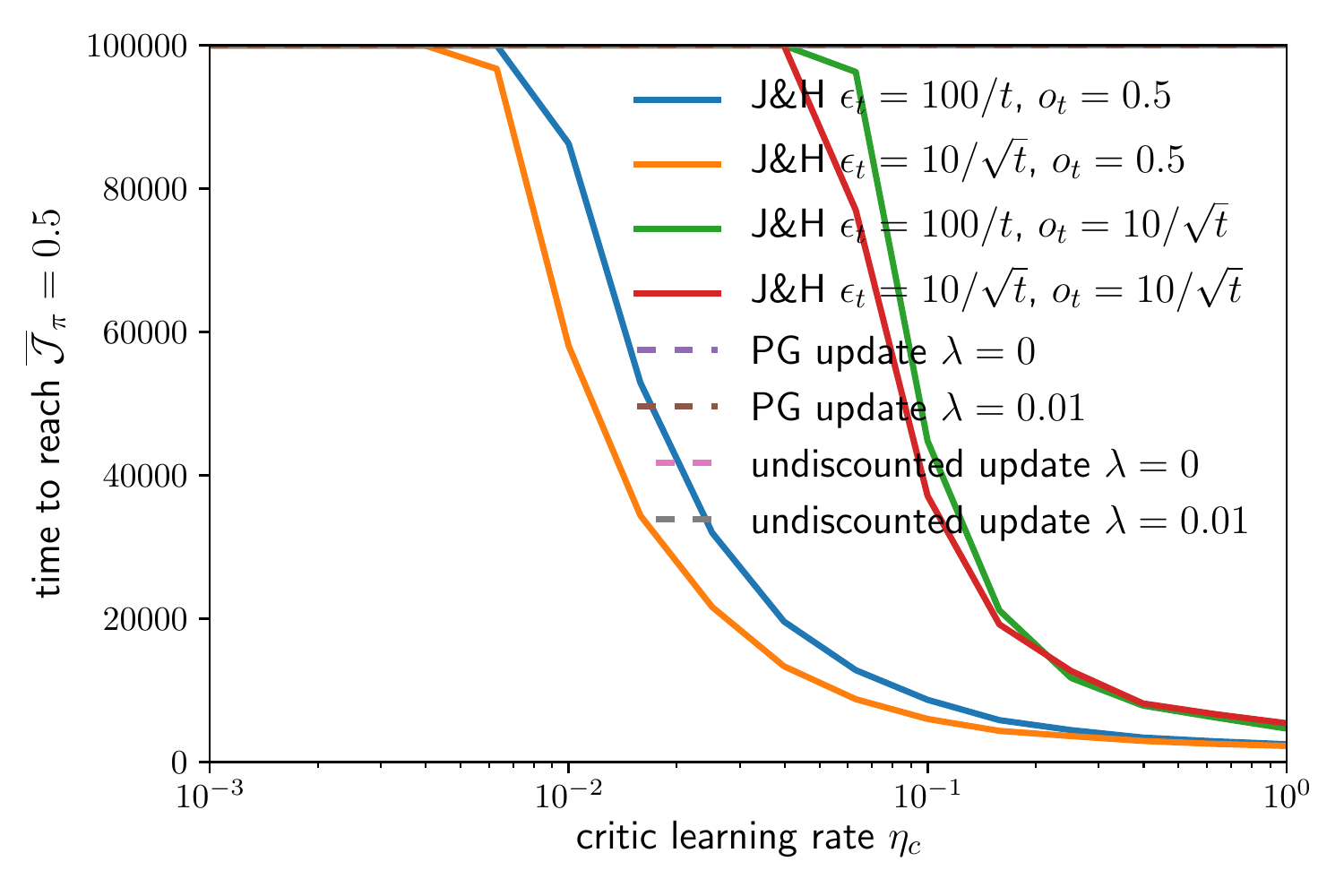}
		\label{fig:sample-chain-criticlr}
	}
	\subfloat[Random MDPs experiment]{
		\includegraphics[trim = 5pt 5pt 5pt 5pt, clip, width=0.48\columnwidth]{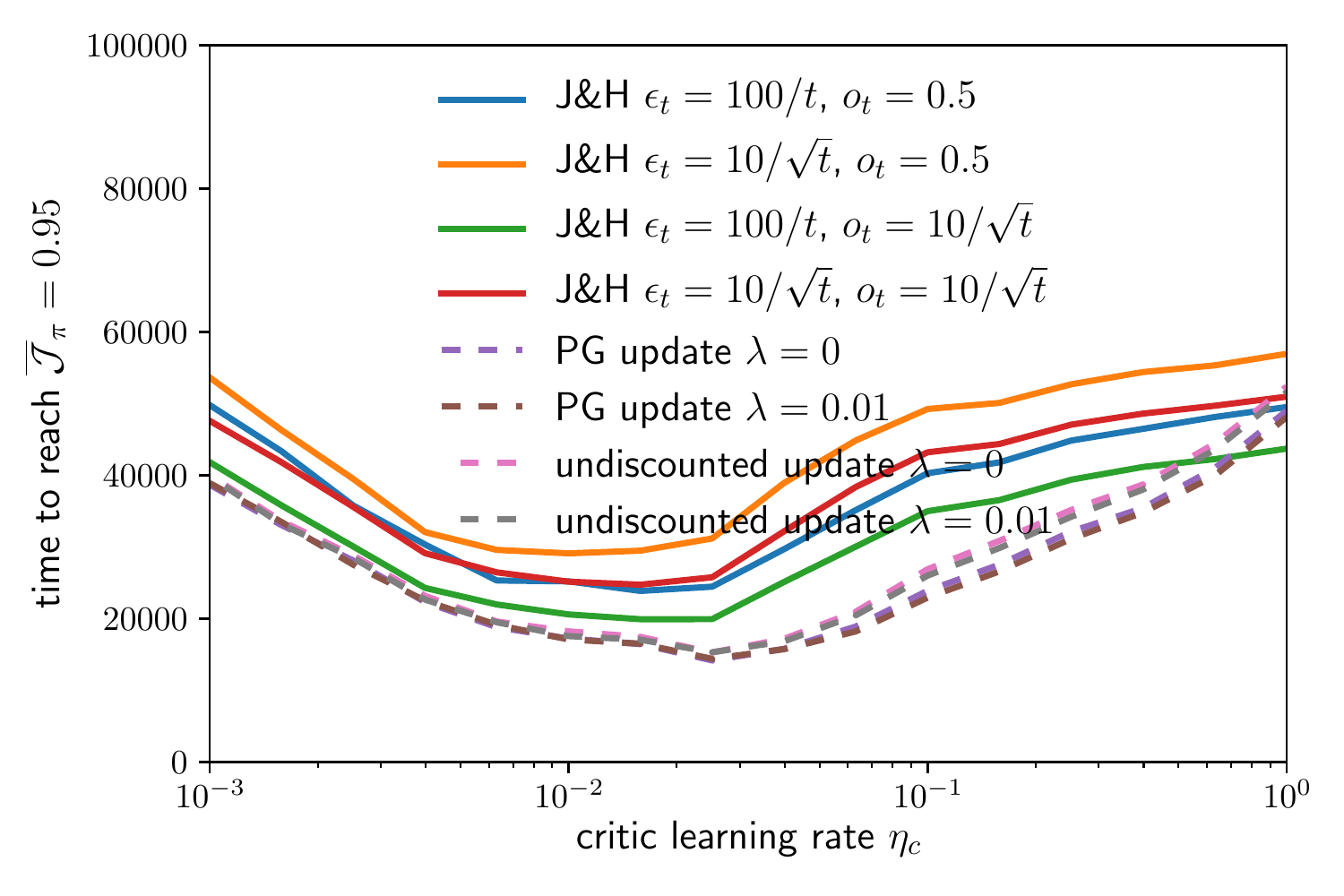}
		\label{fig:sample-garnets-criticlr}
	}
	\caption{Experiments vs. critic learning rate $\eta_c$: average time to reach a global performance objective.}
		\label{fig:sample-criticlr}
	\vspace{-10pt}
\end{figure*}
 Since the chain domain is deterministic, we would expect a high critic learning rate $\eta_c$ to be better and this is what we observe, but even a maximal critic learning rate of 1 is not enough for the on-policy updates with or without policy entropy regularization to solve the chain is the default settings (size of 10 and $\beta=0.8$). We also not that \jh{} is much more tolerant to a small critic learning rate when the off-policiness is kept high. The random MDP experiment (\fig{} \ref{fig:sample-garnets-criticlr}) illustrates the risk of setting a too high critic learning rate. A small learning rate implies a longer but steadier time to converge, while a high learning rate might compromise the convergence at all.
 
\begin{figure*}[t]
	\centering
	\subfloat[Mean performance]{
		\includegraphics[trim = 5pt 5pt 5pt 5pt, clip, width=0.48\columnwidth]{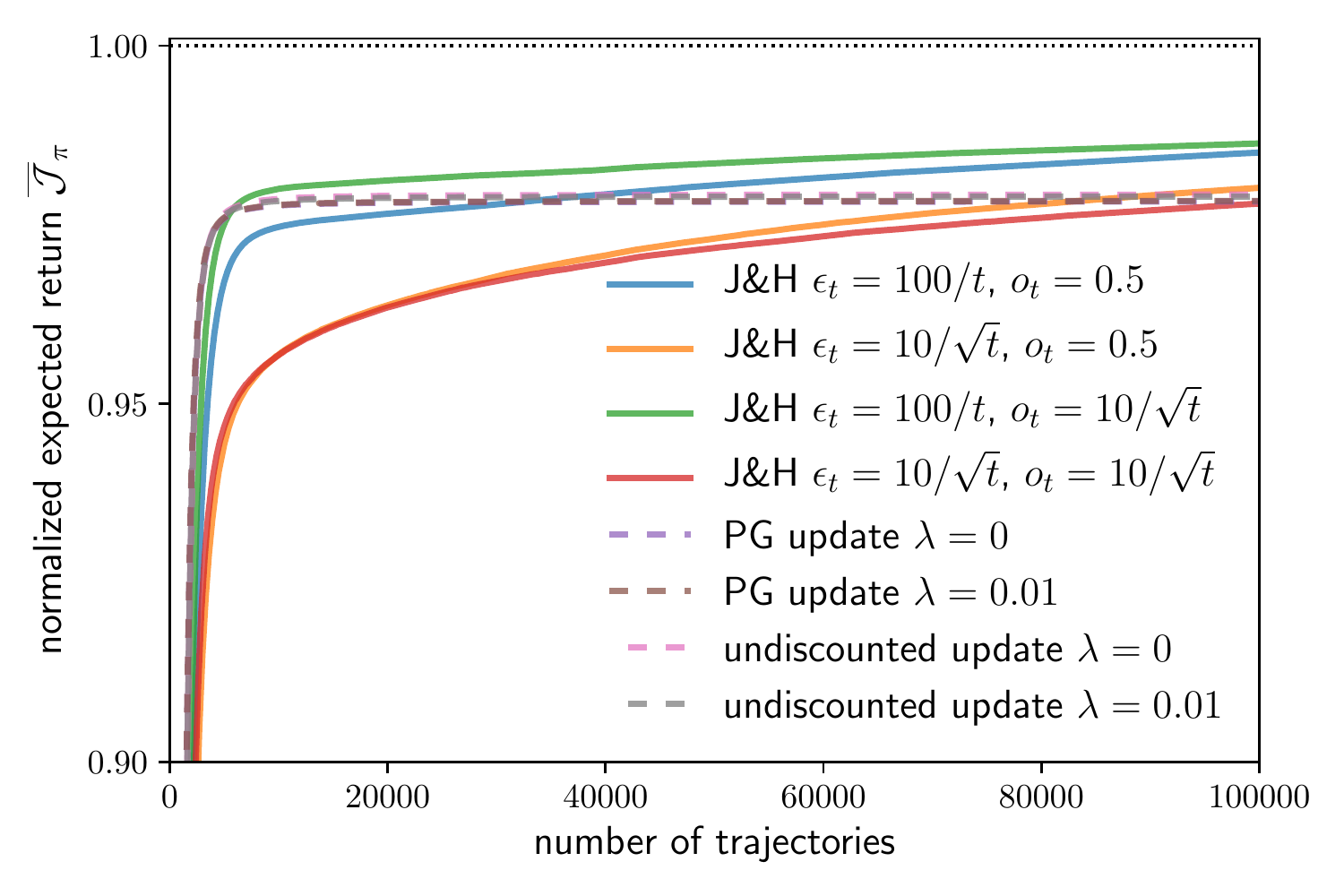}
		\label{fig:opt_params_m}
	}
	\subfloat[Decile performance]{
		\includegraphics[trim = 5pt 5pt 5pt 5pt, clip, width=0.48\columnwidth]{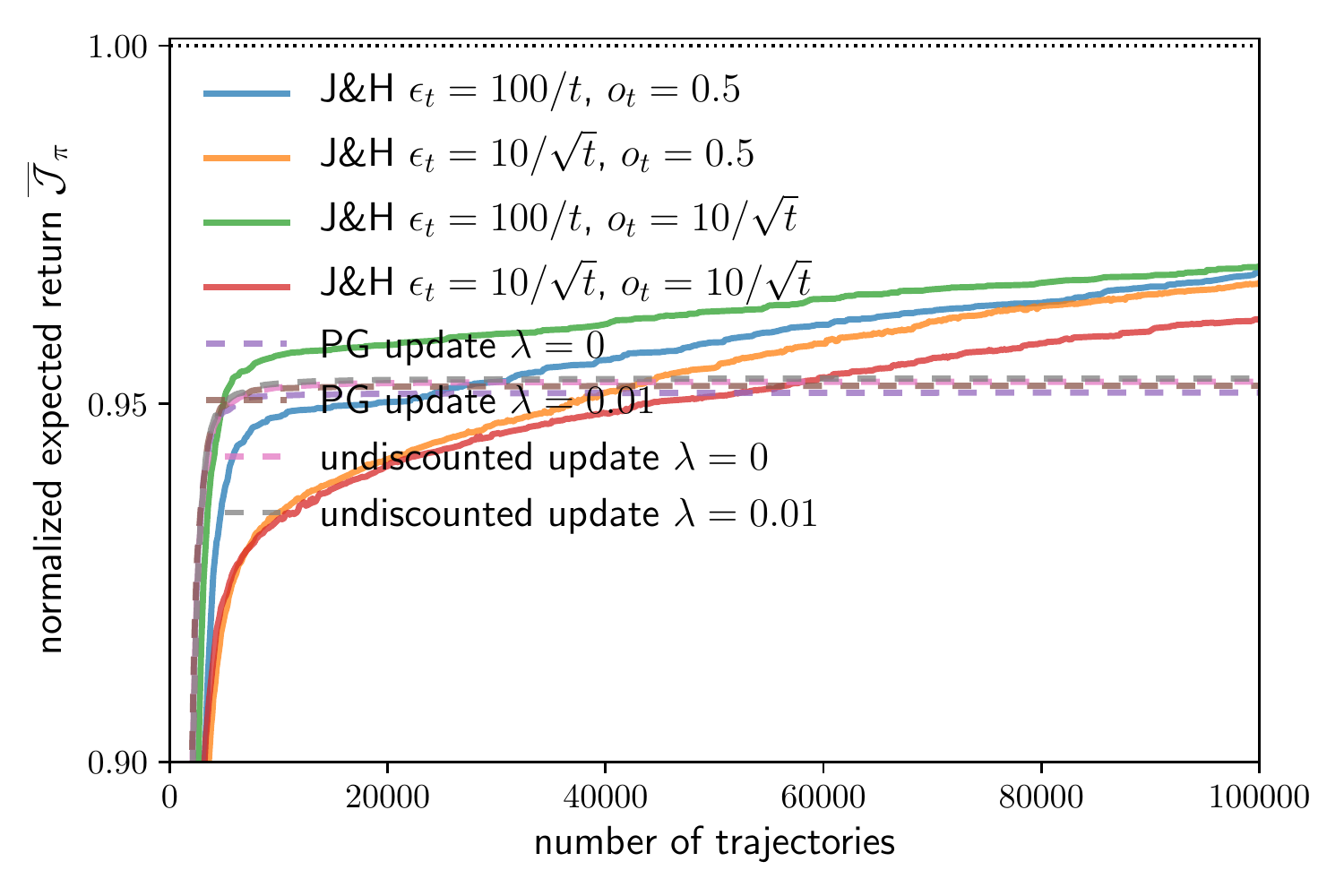}
		\label{fig:opt_params_d}
	}
	\caption{Random MDPs experiments with $\eta=10$ and $\eta_c=0.025$.}
		\label{fig:opt_params}
	\vspace{-10pt}
\end{figure*}
 One must keep in mind that these insights are for reaching a normalized performance of 0.95 in the random MDPs experiments. We tried a run with $\eta=10$ and $\eta_c=0.025$ for all algorithms, since they seem to all perform well with this setting in our hyperparameter search we have just presented. \fig{} \ref{fig:opt_params} displays the results. It clearly appears that reaching $\overline{\mathcal{J}}_\pi=0.99$ (let alone 0.999) would take much longer than with our default setting. We judged better to stick with our default settings for this reason.

\subsection{Vs. parameters of the chain domain:}
On \fig{} \ref{fig:sample-chain-states}, we vary the size of the chain, and observe that \jh{} algorithm scales well to it as long as the off-policiness is maintained long enough for it to unlearn to suboptimal policy and move on to the optimal policy. It is worth noting that the on-policy updates fail at finding even with its easiest hyperparameters, indeed the $q$-function being initialized to 0 and the critic learning rate being of 0.1 makes it that the algorithms already converged to the suboptimal policy when the later states values are properly estimated. In fact, the policy entropy regularized versions of the on-policy updates reach $\overline{\mathcal{J}}_\pi=0.48$ around the 150,000\textsuperscript{th} episode.

On \fig{} \ref{fig:sample-chain-states}, we vary the performance ratio $\beta$ and observe that the on-policy updates manage to make it work even for a chain of size 10 when $\beta$ is sufficiently small, and that the policy entropy regularization helps a bit to push further the limit, but not by much. In general, the delay occurred by the time necessary for the critic to be trained is fatal to these algorithms, hence the importance of prioritizing the replay of exploratory experiences. \fig{} \ref{fig:sample-chain-states} also validates the reliability of the \jh{} algorithms, in particular with a strong off-policiness.
\begin{figure*}[t]
	\centering
	\subfloat[vs. chain size]{
		\includegraphics[trim = 5pt 5pt 5pt 5pt, clip, width=0.48\columnwidth]{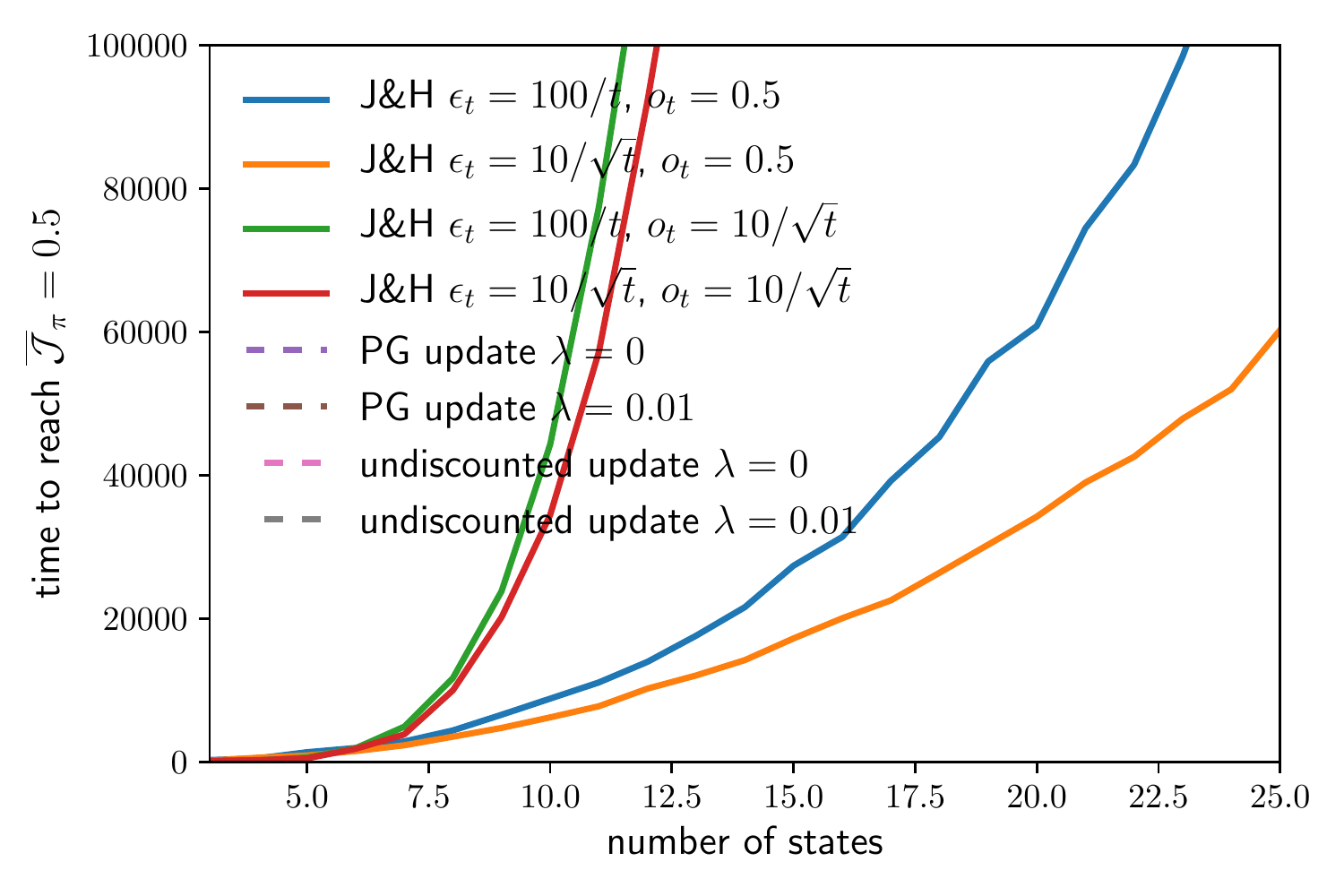}
		\label{fig:sample-chain-states}
	}
	\subfloat[vs. performance ratio $\beta$]{
		\includegraphics[trim = 5pt 5pt 5pt 5pt, clip, width=0.48\columnwidth]{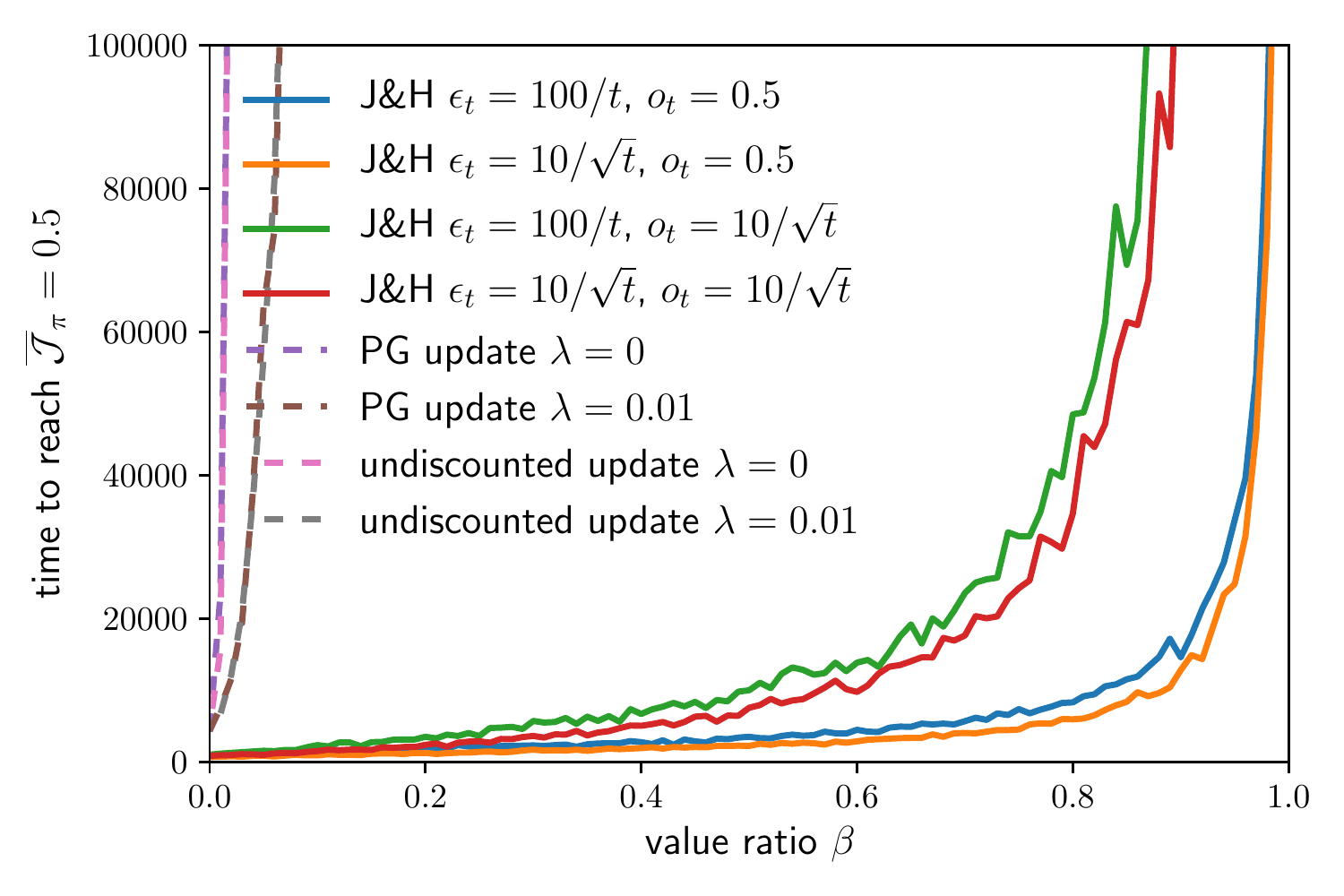}
		\label{fig:sample-chain-ratio}
	}
	\caption{Experiments vs. chain hyperparameters: average time to reach a global performance objective.}
		\label{fig:sample-chain-hyper}
	\vspace{-10pt}
\end{figure*}

\subsection{Vs. $q$-function initialization $q_0$:}
Initializing the $q$-function to high values has been a historic exploration scheme for $q$-learning, and we observe on \fig{} \ref{fig:sample-chain-criticlr} that is also works in the actor-critic world. As soon as $q_0$ gets larger than the value of the low-hanging-fruit policy, all updates push to explore until discovering the end of the chain. In contrast, \jh{} is not as much sensitive to the critic initialization, in particular when the off-policiness is high enough to fight against a hostile one.
\begin{figure*}[t]
	\centering
	\subfloat[Chain experiment]{
		\includegraphics[trim = 5pt 5pt 5pt 5pt, clip, width=0.48\columnwidth]{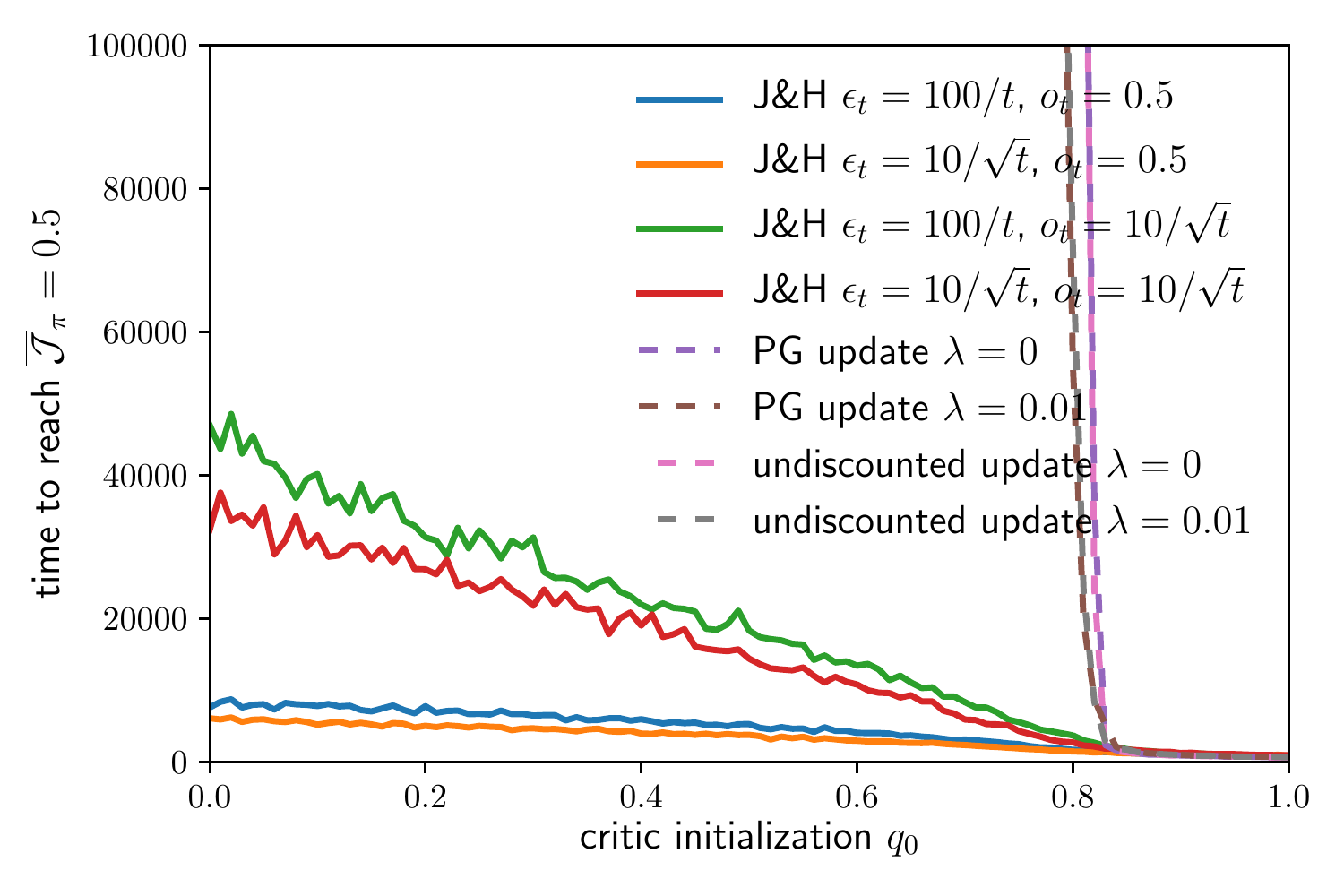}
		\label{fig:sample-chain-qinit}
	}
	\subfloat[Random MDPs experiment]{
		\includegraphics[trim = 5pt 5pt 5pt 5pt, clip, width=0.48\columnwidth]{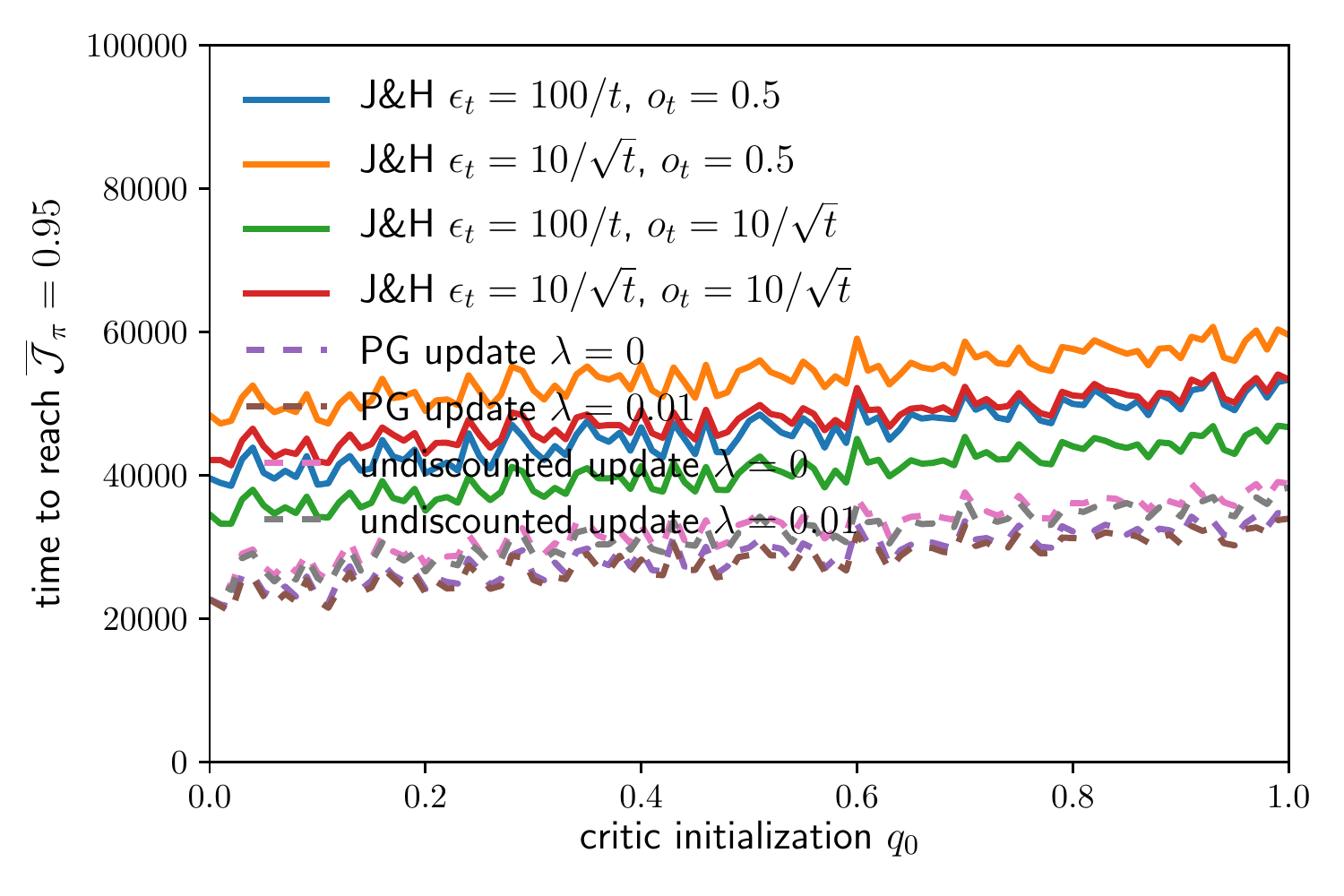}
		\label{fig:sample-garnets-qinit}
	}
	\caption{Experiments vs. $q$-function initialization $q_0$: average time to reach a global performance objective.}
		\label{fig:sample-qinit}
	\vspace{-10pt}
\end{figure*}

\subsection{Vs. exploration schedule $\epsilon_t$:}
Exploration schedule being a hyperparameter of \jh{} alone, we report the results for \jh{} only. In order to remove the dependence of the performance on its exploratory behaviour, contrarily to all the other experiments where we report the performance of \jh{}'s dyad, we report here the time for Jekyll's policy to reach the normalized value.

On \fig{} \ref{fig:sample-chain-epsilon}, we observe that having a strong off-policiness alleviates the lack of exploration to a certain extent. We however want to reiterate that the chain domain is deterministic, and as such, performing off-policy updates is equivalent to exploring. In a stochastic environment such as the random MDPs, exploration cannot be replaced with off-policy updates, because statistical significance is required to find the optimal policy. With $\epsilon_t=0$, unsurprisingly, the algorithm fails at finding the optimal reward and therefore finding the goal.
From $\epsilon_t=0.05$ to $\epsilon_t=0.6$, the speed improves from 15000 to 6500. From $\epsilon_t=0.6$ to $\epsilon_t=0.95$, the speed remains more of less constant in the range [6400, 6500]. With $\epsilon_t=1$, more surprisingly, the performance degrades quite significantly to 7300.

The random MDPs experiment (\fig{} \ref{fig:sample-garnets-epsilon}) displays similar behaviour at a different scale. From $\epsilon_t=0$ to $\epsilon_t=0.95$, the speed linearly improves from 35000 to 29000. With $\epsilon_t=1$, again surprisingly, the performance degrades catastrophically to more than 100000.

We analyse the combined results as follows: introducing Hyde's exploratory behavior generates a large performance improvement due to a need for exploration. This need is correlated with the planning difficulty of the task, hence the much stronger effect in the chain experiment. Performance then slowly improves as a function of $\epsilon_t$: this is where the statistical significance matters. Finally, when $\epsilon_t$ equals 1, the on-policy updates do not exist anymore, slowing down significantly the convergence speed.

\begin{figure*}[t]
	\centering
	\subfloat[Chain experiment]{
		\includegraphics[trim = 5pt 5pt 5pt 5pt, clip, width=0.48\columnwidth]{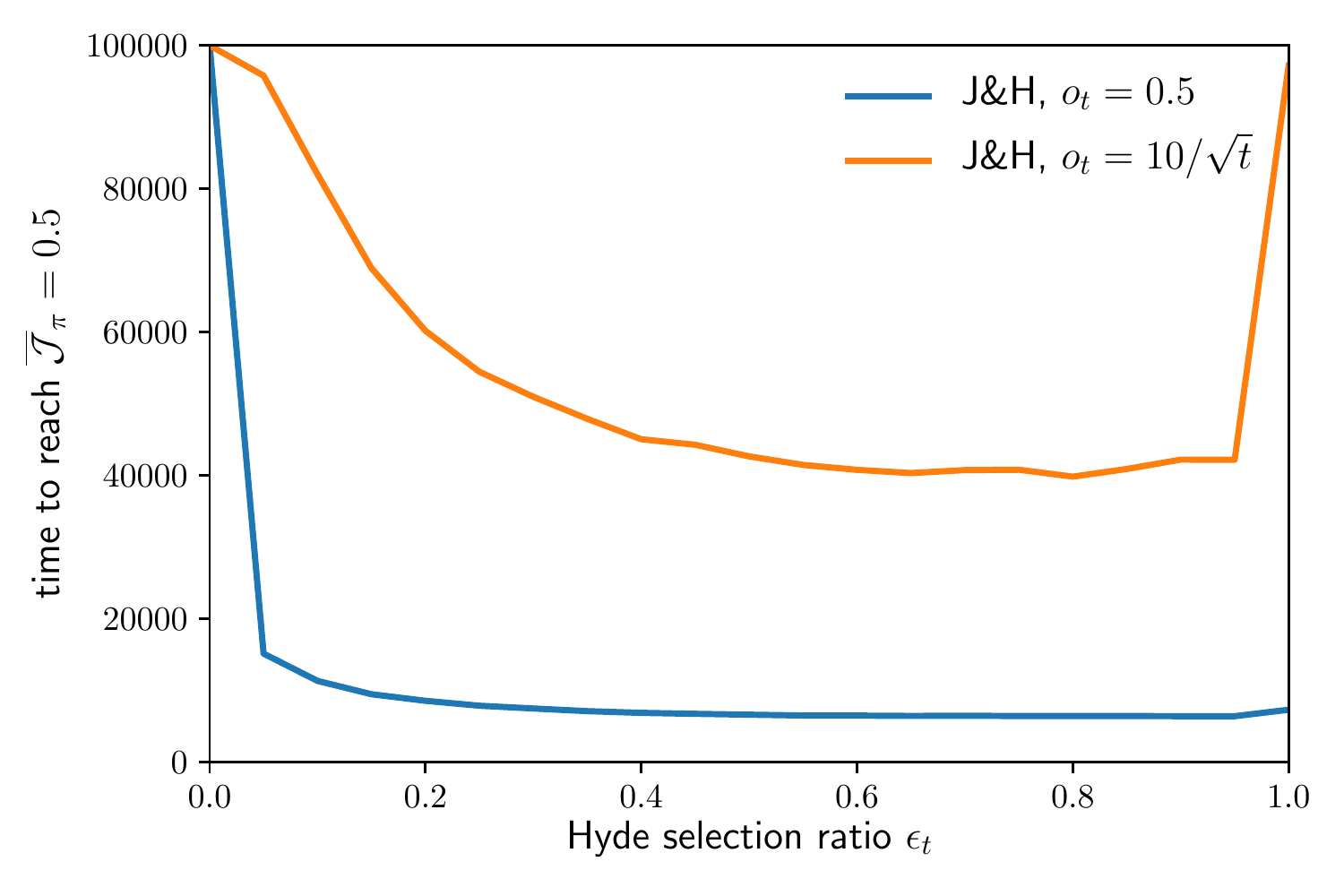}
		\label{fig:sample-chain-epsilon}
	}
	\subfloat[Random MDPs experiment]{
		\includegraphics[trim = 5pt 5pt 5pt 5pt, clip, width=0.48\columnwidth]{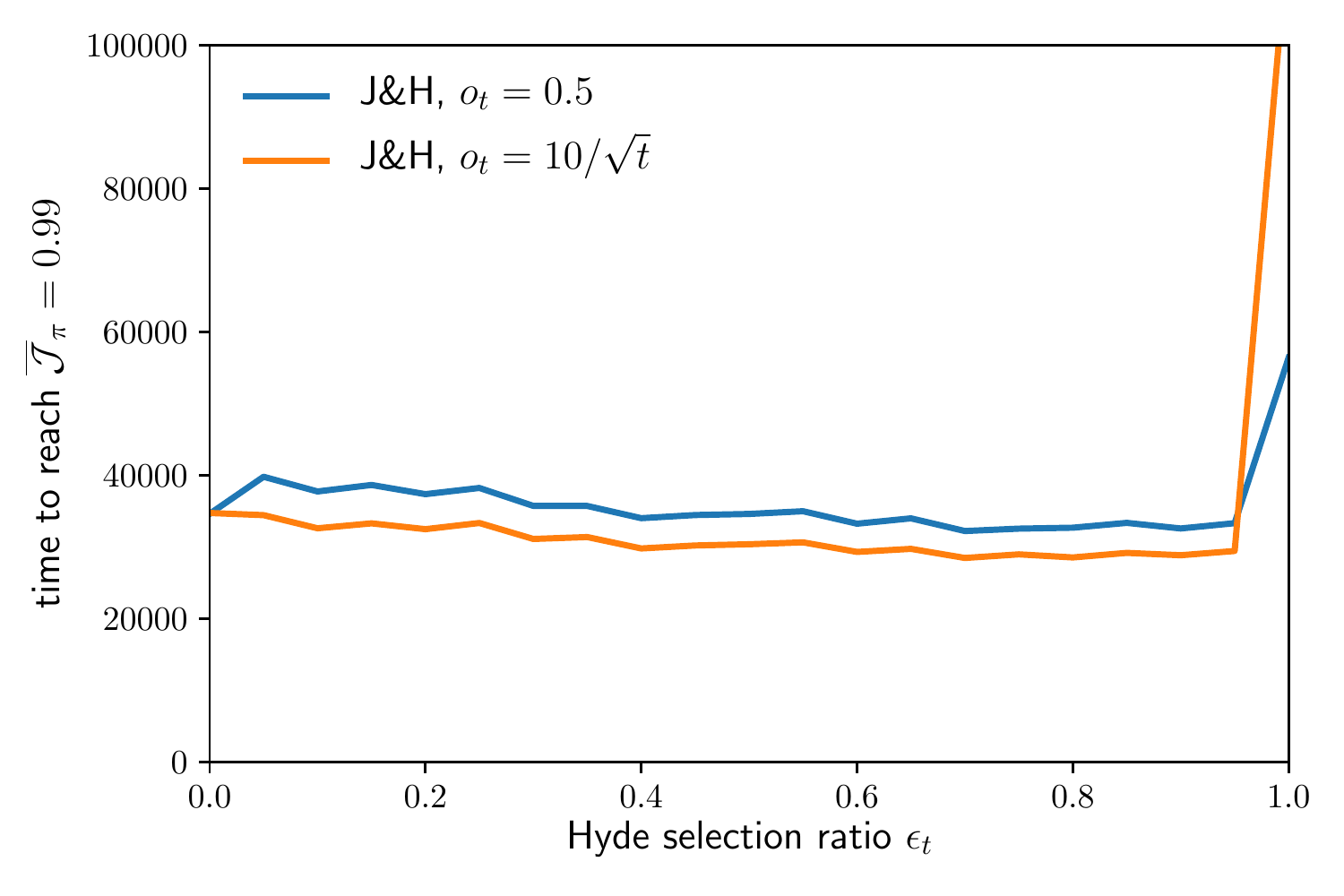}
		\label{fig:sample-garnets-epsilon}
	}
	\caption{Experiments vs. exploration schedule $\epsilon_t$: average time to reach a global performance objective.}
		\label{fig:sample-epsilon}
	\vspace{-10pt}
\end{figure*}

\subsection{General conclusion for this set of experiments}
We test the performance across time, actor $\eta$ and critic $\eta_c$ learning rates, MDP parameters $|\mathcal{S}|$ and $\beta$, off-policiness $o_t$ and exploration $\epsilon_t$, and critic initialization $q_0$ with the softmax parametrization on both the chain and random MDPs. The full report is available in Appendix \ref{app:expes-sample}. 

In the chain experiments, we observe that \jh{} shows the same ability to converge to the optimal policy, while the on-policy updates, with or without policy entropy regularization, always fail, except when $q_0>\mathcal{J}_{\mybot}$ because it induces an exploratory behaviour that solves its limits. In contrast, \jh{} is efficient whatever $q_0$. More specifically about \jh{}, while the exploration $\epsilon_t$ is necessary at a small amount, the amplitude of off-policiness $o_t$ is what matters the most. Since the chain domain is deterministic, the critic learning rate $\eta_c$ is optimal set at 1. Regarding the actor learning rate, it is best set high too, but until some limit where it breaks the convergence.

Regarding the random MDPs experiments, the results are much closer, the on-policy updates being helped by the high stochasticity of the environment, providing indeed a sort of exploration. We still observe that all algorithms provide similar performance, \jh{} performing slightly worse when $o_t$ or $\epsilon_t$ increase. Similarly, since exploration is not necessary in this domain, the performance slightly decrease with $q_0$ for all algorithms. Regarding the critic learning rate, it is best chosen around $\eta_c=0.025$. Maybe the most interesting result concern the dependency with respect to the actor learning rate. It seems that the off-policiness granted by \jh{} updates allows for higher $\eta$, which allows faster convergence when this hyperparameter is optimized independently for each algorithm. While expected, another interesting observation is that off-policiness seems to improve the worst case scenarios by allowing to recover more easily from converging to a suboptimal policy.

\section{Full report of deep reinforcement learning experiments}
\label{app:expes-deep}

\subsection{The Four Rooms environment}

\begin{figure*}[h]
	\centering
	\includegraphics[trim = 5pt 5pt 5pt 5pt, clip, width=0.48\columnwidth]{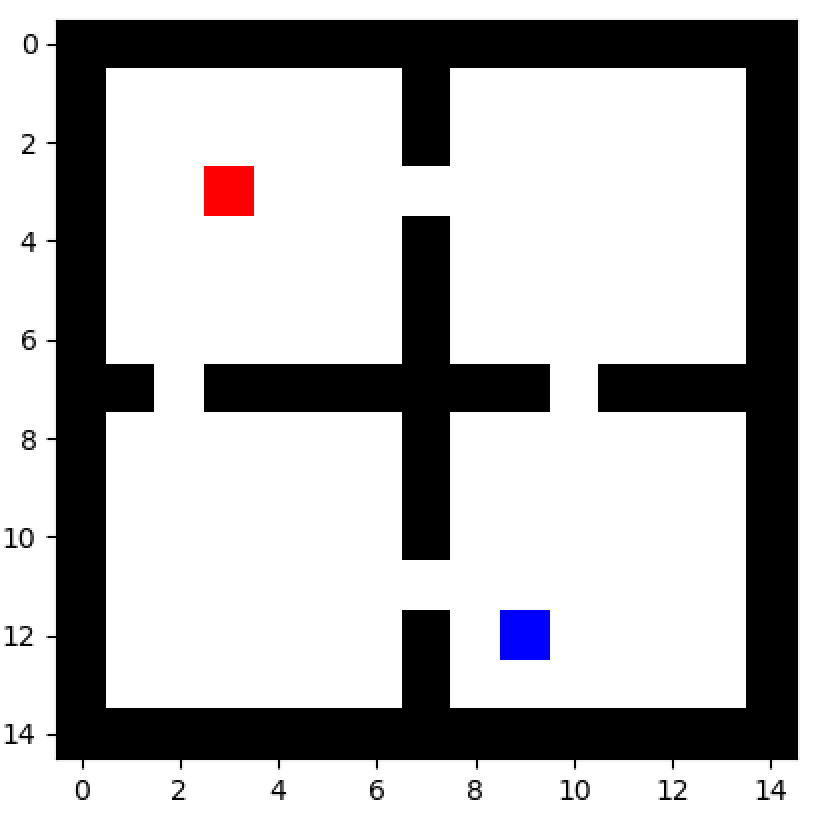}
	\caption{The Four Rooms environment, the agent (in blue) needs to navigate to the goal (in red).}
	\label{fig:four_rooms_env}
\end{figure*}

Figure~\ref{fig:four_rooms_env} shows an example of the environment. The agent (in blue) starts at a random place in one of the rooms. In Level 1, the initial state can be in any room, in Level 2, the initial state cannot be in the goal room. The agent then needs to navigate to the goal, using four actions, North, East, West, South. At each step in the environment, the agent receives a negative reward of $-0.1$. When reaching the goal, the agent receives a reward of $90$ and the episode terminates. If the agent hasn't reached the goal in 90 steps, the episode terminates. In the plots, we report the undiscounted sum of rewards, a quantity that belongs to $[-9, 90]$. The state consists of a random 20-dimensional embedding for each distinct position in the environment. The discount factor is set to $0.9$. After 10 episodes of training, we evaluate each agent during 10 episodes and report the average performance. Each agent was evaluated on 10 seeds, we report the average and one standard deviation over those 10 seeds. We trained the agents on an internal GPU cluster of P40 cards. Each training run took approximately 30 minutes.

\subsection{Implementation, architectures and hyperparameters}

We based our code on the repo~\cite{pchrist}, that contains PyTorch implementations of various deep reinforcement learning algorithms. We used the default architectures from the repo, the Adam optimizer and performed small hyperparameters sweeps on learning rates for each algorithm (described below).

\paragraph{DDQN} We used the default implementation provided by the repo~\cite{pchrist} for the Four Rooms environment. The q-network is a two-hidden layer MLP with 30 and 10 neurons in each. The replay buffer size is 40k and we used soft decay of the epsilon greedy exploration as well as soft updates of the target network. In terms of learning rate, we swept over $\{0.06, 0.03, 0.01, 0.006, 0.003, 0.001\}$ and found that $0.01$ performed best. We also found that decreasing the epsilon greedy decay compared to the default value in the repo gave better performance on Level 2 (it did not affect Level 1 much). We used the following probability for picking a random action: $100 / e$, where $e$ is the number of episodes trained on so far (compared to $10$ for the default).

\paragraph{Soft-Actor Critic} The actor and critics are two-hidden layers MLPs with 64 neurons in each. Two critics were trained, and the mininum between the two was used in the backup of the critics, as well as to train the actor. The entropy regularization hyperparameter was automatically tuned. The implementation is the standard one provided in~\cite{pchrist}. We also swept over the same learning rate set and found that similarly $0.01$ performed best. 

\paragraph{RND} For the RND target network, we used a one-hidden layer MLP with 20 neurons, outputing a 20-dimensional feature vector. The predictor network contains (as is customary in RND implementations) an additional hidden layer with 20 neurons as well. We did not search over architectures. The learning rate is set to $0.003$, which was found after a sweep on the set $\{0.1, 0.03, 0.01, 0.003, 0.001\}$.

\paragraph{Dr Jekyll \& Mr Hyde} Dr Jekyll's architecture is the same as the Soft-Actor Critic one described above. Its learning rate was set to 0.001, which was found to perform best out of the set $\{0.06, 0.03, 0.01, 0.006, 0.003, 0.001, 0.0006\}$. Mr Hyde's architecture and learning rate are the same as the DDQN architecture described above. The decay of the probability of picking Mr. Hyde (parameter $\epsilon_1$ in algorithm~\ref{alg:deep-JH}) is set to $4k$ for Level 1 and $40k$ for Level 2. Note that this corresponds to the number of steps in the environment, not the number of episodes played. Those values correspond approximately to the decay of the exploration in DDQN.

\paragraph{DDQN+RND and SAC+RND} In the case of DDQN+RND, we simply augmented the reward of the environment using the exploration bonus provided by the RND network. For SAC+RND, we trained an additional critic on the exploration bonus and added it to SAC's actor update. In both cases, we weighted the exploration bonus by $1$ afer sweeping over the set $\{0.1, 0.3, 0.6, 1.0, 1.3, 1.6\}$ and finding that it performed best in both cases.

\subsection{The detailed algorithm}

Algorithm~\ref{alg:deep-JH} describes the implementation used to produce Figure~\ref{fig:four-rooms}.

\begin{algorithm}[ht]
\caption{Dr Jekyll \& Mr Hyde algorithm for experiments with a deep architecture.}
\multiline{\textbf{Input:} Scheduling of exploration $(\epsilon_t)=\frac{\epsilon_1}{t}$.}
\begin{algorithmic}[1]
    \State Initialize Dr Jekyll's actor $\mathring{\pi}\doteq \pi_{\mathring{\theta}}$ and critic $\mathring{q}=q_{\mathring{\theta}}$.
    \State Initialize Mr Hyde's q-network $\tilde{q}=q_{\tilde{\theta}}$ (its policy $\tilde{\pi}$ is greedy with respect to $\tilde{q}$).
    \State Initialize the RND predictor $p_{RND}$ and target $t_{RND}$ networks.
    \State Initialize a buffer $F=\emptyset$ of capacity $K=128$ and a buffer $\tilde{D}=\emptyset$ of capacity $40k$.
    \State Set the behavioural policy to Dr Jekyll: $\pi_{b} \leftarrow \mathring{\pi}$ and the working replay buffer $D_{b} \leftarrow \mathring{D}$.
    \For{$t=0$ to $\infty$}
        % Online behaviour 
        \State Sample a transition $\tau_t = \langle s_t,a_t\sim\pi_{b}(\cdot|s_t),s_{t+1}\sim p(\cdot|s_t,a_t),r_t\sim r(\cdot|s_t,a_t)\rangle$.
        \InlineIfThen{Mr Hyde is in control ($\pi_{b} = \tilde{\pi}$),}{add $\tau_t$ to Mr Hyde's replay buffer $\tilde{D}$}.
        \State Add $\tau_t$ to buffer $F$.
        \InlineIfThen{$\tau_t$ was terminal,}{$\pi_{b} \leftarrow \tilde{\pi}$ wp $\epsilon_t$, $\pi_{b} \leftarrow \mathring{\pi}$ otherwise.}
        % Models updates
        % \For{$K$ update steps}
        \renewcommand\algorithmicdo{}
        \State Initialize minibatch with FIFO buffer: $B\leftarrow F$
        \For{$i=1\to K$}
            \State Add $\tau_i \doteq \langle s,a,s',r \rangle \sim \tilde{D}$ to minibatch $B$
        \EndFor
        \State For each transition $\tau_i \in B$, compute RND reward $\tilde{r}_{\tau_i} = \|p_{RND}(s') - t_{RND}(s')\|^2$. 
        \State Update Mr Hyde's q-network $\tilde{q}$ with a DDQN update on $B$ where $\tilde{r}$ is used instead of $r$
            \begin{align}
                \tilde{q}(s,a)&\leftarrow \tilde{q}(s,a)+\eta_c\left(\tilde{r}(s,a)+\gamma\max_{a'\in\mathcal{A}}\tilde{q}(s',a')-\tilde{q}(s,a)\right).
            \end{align}
        \State Update Dr Jekyll's critic $\mathring{q}$ with a SARSA update on $B$:
            \begin{align}
                \mathring{q}(s,a)\leftarrow \mathring{q}(s,a)+\eta_c\left(r+\gamma\sum_{a'\in\mathcal{A}}\mathring{q}(s',a')\mathring{\pi}(s',a')-\mathring{q}(s,a)\right).
            \end{align}
        \State Perform an expected update step in each state $s$ of $B$ on Dr Jekyll's actor $\mathring{\theta}$:
        \begin{align}
            \forall b\in\mathcal{A},\quad\quad\mathring{\theta} \leftarrow \mathring{\theta} + \eta \mathring{q}(s,b) \nabla_{\mathring{\theta}} \mathring{\pi}(b|s)
        \end{align}
        \renewcommand\algorithmicdo{do}
        \State Update $p_{RND}$ by gradient descent on $\sum_{s' \in B} \|p_{RND}(s') - t_{RND}(s')\|^2$.
        % \EndFor
    \EndFor
\end{algorithmic}
% \vspace{-10pt}
\label{alg:deep-JH}
\end{algorithm}

\section{Code description}
\label{app:code}
The code for all our experiments can be found at \url{https://github.com/microsoft/Dr-Jekyll-and-Mr-Hyde-The-Strange-Case-of-Off-Policy-Policy-Updates}. 

\subsection{Finite MDPs code}
It relies on two main files: 
\begin{itemize}
    \item \texttt{main\_config.py} to run experiments in a fixed setting and report average/quantile performance across time,
    \item and \texttt{main\_hyperparam.py} to run experiments where a hyperparameter varies, and report the time to reach a fixed normalized performance target $\tilde{\mathcal{J}}$ (see \eq{} \eqref{eq:normalized_chain} and \eqref{eq:normalized_garnets}).
\end{itemize}

They both use the same configuration file determining the setting the experiment (the hyperparameter search overwrites the setting of this hyperparameter during the run): which setting (planning or RL), which environment (chain or random MDPs), which algorithms, with which hyperparameters, for how many steps, and for how many runs. Such configuration files may be found in the \texttt{expes} folder, where the settings of all reported experiments may be retrieved as well as their figures (and some more). Nevertheless, we deleted the experiment result files because they are voluminous (10s of Go), but we have them and can provide them on demand, or better: their random seeds.

The two environments are implemented in python files:
\begin{itemize}
    \item \texttt{chain.py}: the chain environment described in \app{} \ref{app:chain},
    \item and \texttt{garnets.py}: the random MDPs environment described in \app{} \ref{app:garnets} \cite{Laroche2019}.
\end{itemize}

Two utility python files are used:
\begin{itemize}
    \item \texttt{utils.py} contains all utility files,
    \item except \texttt{proj\_simplex.py} implementing the projection on the simplex, that has been isolated in order to account for their authors \cite{Blondel2014}.
\end{itemize}

Finally, all the algorithmic innovation that we claim belongs to \texttt{gradient\_ascent.py}, including the implementation of \jh{} (\alg{} \ref{alg:finiteMDP-JH}).

\subsection{Deep \jh{} implementation}

The implementation is based on the repository~\cite{pchrist}, released under the MIT License and which contains various PyTorch implementations of standard deep reinforcement learning algorithms. The entry point to train the agents is the \texttt{results/Four\_Rooms.py} file. To train a given one, simply run:
\begin{center}
    \texttt{python results/Four\_Rooms.py --agent \{agent\} --level \{level\}}
\end{center}
where agent is chosen in [JH\_Discrete, SAC\_Discrete, SAC\_DiscreteRND, DDQN, DDQNRND] and level is 1 or 2.

\end{document}